\newcommand{\eat}[1]{}
\newtheorem{result}{Theorem} \newtheorem{minorresult}[result]{Proposition}
\newtheorem*{definition*}{Definition}
\newtheorem*{proposition*}{Proposition}
\newtheorem*{corollary*}{Corollary}
\newtheorem{theorem}{Theorem}[section]
\newtheorem{lemma}[theorem]{Lemma}
\newtheorem{definition}[theorem]{Definition}
\newtheorem{proposition}[theorem]{Proposition}
\newtheorem{corollary}[theorem]{Corollary}
\newcommand{\Thr}{\mathrm{Thr}}
\newcommand{\Int}{\mathrm{Int}}
\newcommand{\wh}{\widehat}
\newcommand{\X}{\mathcal{X}}
\newcommand{\Y}{\mathcal{Y}}
\newcommand{\mC}{\mathcal{C}}
\newcommand{\mD}{\mathcal{D}}
\newcommand{\mH}{\mathcal{H}}
\newcommand{\mT}{\mathcal{T}}
\newcommand{\mF}{\mathcal{F}}
\newcommand{\mG}{\mathcal{G}}
\newcommand{\mL}{\mathcal{L}}
\newcommand{\mW}{\mathcal{W}}
\newcommand{\mU}{\mathcal{U}}
\newcommand{\tf}{\tilde{p}}
\newcommand{\tp}{\tilde{p}}
\newcommand{\ps}{\ensuremath{p^*}}
\newcommand{\y}{\mathbf{y}}
\newcommand{\ty}{\mathbf{\tilde{y}}}
\newcommand{\sy}{\mathbf{y^*}}
\newcommand{\Lin}{\mathrm{Lin}}
\newcommand{\mb}[1]{\mathbf{#1}}
\newcommand{\x}{\mathbf{x}}
\newcommand{\z}{\mathbf{z}}
\newcommand{\rgta}{\rightarrow}
\newcommand{\lt}{\left}
\newcommand{\rt}{\right}
\newcommand{\zo}{\ensuremath{\{0,1\}}}
\newcommand{\izo}{\ensuremath{[0,1]}}
\newcommand{\infnorm}[1]{\left\lVert#1\right\rVert_\infty}
\renewcommand{\tilde}{\wt}
\renewcommand{\hat}{\wh}
\renewcommand{\eps}{\varepsilon}
\newcommand{\la}{\ensuremath{\langle}}
\newcommand{\ra}{\ensuremath{\rangle}}
\newcommand{\pmo}{\ensuremath{ \{\pm 1\} }}
\newcommand{\fr}[1]{\ensuremath{\frac{1}{#1}}}
\newcommand{\err}{\mathrm{err}}
\newcommand{\C}{\mathcal{C}}
\DeclareMathOperator{\poly}{poly}
\DeclareMathOperator*{\E}{\mathbf{E}}
\DeclareMathOperator*{\argmin}{arg\,min}
\newcommand{\Lip}{\mathrm{Lip}}
\newcommand{\level}{\mathrm{level}}
\newcommand{\mpk}[1]{}
\newcommand{\PG}[1]{}
\newcommand{\sign}{\mathrm{sign}}
\newcommand{\bE}{{\mathbb{E}}}
\newcommand{\card}[1] {\left\vert #1 \right\vert}
\newcommand{\set}[1] {\left\{ #1 \right\}}
\newcommand{\Ber}{\mathrm{Ber}}
\newcommand{\MAcc}{\mathrm{MA}}
\newcommand{\MCal}{\mathrm{MC}}
\newcommand{\MAp}{\mathrm{calMA}}
\newcommand{\WL}{\mathrm{WL}}
\newcommand{\calg}{\mathrm{reCAL}}
\newcommand{\Lglm}{\mathcal{L}_{\mathrm{GLM}}}
\newcommand{\mLa}{\mL_{\mathrm{all}}}
\newcommand{\pone}{p^\delta}
\newcommand{\ptwo}{\bar{p}}
\newcommand{\pth}{\hat{p}}
\newcommand{\ECE}{\mathsf{ECE}}
\newcommand{\estECE}{\mathsf{estECE}}
\newcommand{\malg}{\mathsf{MA}}
\newcommand{\MAerr}{\mathsf{MAE}}
\newcommand{\CalErr}{\mathsf{CE}}
\title{Loss Minimization through the Lens of\\ Outcome Indistinguishability\footnote{Appears in Innovations in Theoretical Computer Science (ITCS) 2023.}}
\author{
\qquad Parikshit Gopalan \\
\qquad Apple
\and
Lunjia Hu\thanks{\textbf{LH} is supported by the Simons Foundation Collaboration on the Theory of Algorithmic Fairness, Omer Reingold’s NSF Award IIS-1908774, and Moses Charikar's Simons Investigator award.}\\
Stanford University
\and
Michael P. Kim\thanks{\textbf{MPK} is supported by the Miller Institute for Basic Research in Science and, in part, by the Simons Collaboration on the Theory of Algorithmic Fairness.} \qquad\\
UC Berkeley \qquad
\and
Omer Reingold\thanks{\textbf{OR} is supported by the Simons Foundation Collaboration on the Theory of Algorithmic Fairness, the Simons Foundation investigators award 689988, and Sloan Foundation Grant 2020-13941.} \\
Stanford University
\and
Udi Wieder\\
VMware}
\date{}
\begin{document}

\maketitle

\begin{abstract}

We present a new perspective on loss minimization and the recent notion of Omniprediction through the lens of Outcome Indistingusihability.
For a collection of losses and hypothesis class, omniprediction requires that a predictor provide a loss-minimization guarantee simultaneously for every loss in the collection compared to the best (loss-specific) hypothesis in the class.
We present a generic template to learn predictors satisfying a guarantee we call \emph{Loss Outcome Indistinguishability}.
For a set of statistical tests---based on a collection of losses and hypothesis class---a predictor is Loss OI if it is indistinguishable (according to the tests) from Nature's true probabilities over outcomes.
By design, Loss OI implies omniprediction in a direct and intuitive manner.
We give a decomposition of Loss OI into two modular conditions: the first is implied by calibration, the second is equivalent to multiaccuracy for a class of functions derived from the loss and the hypothesis class.
By careful analysis of this class, we give efficient constructions of omnipredictors for many interesting classes of loss functions. 

This decomposition highlights the utility of a new multi-group fairness notion that we call calibrated multiaccuracy, which lies in between multiaccuracy and multicalibration.
We show that calibrated multiaccuracy implies Loss OI for the important set of convex losses arising from Generalized Linear Models, without requiring full multicalibration.
For such losses, we show an equivalence between our computational notion of Loss OI and a geometric notion of indistinguishability, formulated as \emph{Pythagorean theorems} in the associated Bregman divergence.
We give an efficient algorithm for calibrated multiaccuracy with computational complexity comparable to that of multiaccuracy.
In all, calibrated multiaccuracy offers an interesting tradeoff point between efficiency and generality in the omniprediction landscape.

 \end{abstract}
\thispagestyle{empty}
\newpage
\setcounter{page}{1}

\section{Introduction}
\label{sec:intro}

\renewcommand{\C}{\mathcal{C}}
\renewcommand{\L}{\mathcal{L}}
\newcommand{\D}{\mathcal{D}}
\newcommand{\pt}{\tilde{p}}
\newcommand{\ys}{\y^*}
\newcommand{\yt}{\tilde{\y}}
\newcommand{\Hcal}{\mathcal{H}}

Loss minimization is the dominant paradigm in machine learning. Techniques for loss minimization have played a critical role in the development of the theory and practice of supervised learning \cite{kearns1994introduction, boydBook, shalev2012online, shalev2014understanding, hardt2021patterns}.
A clean theoretical formulation of the underlying problem is via the notion of agnostic PAC learning \cite{shalev2014understanding}. We consider real-valued loss functions $\ell$ that take two arguments, a label $y \in \zo$ and an action $t \in \R$. Given a loss $\ell$, a base class of hypotheses $\C$, and approximation parameter $\eps$, the goal is to find a hypothesis $h$ that achieves near-optimal expected loss (compared to $c \in \C$) over a fixed, but unknown distribution $\D$:\footnote{This version where we do not restrict $h$ to belong to $\mC$ is sometimes called improper learning.  }
\begin{gather*}
\E_{(\x,\ys) \sim \D}[\ell(\ys,h(\x))] \le \min_{c \in \C}\E_{(\x,\ys) \sim \D}[\ell(\ys,c(\x))] + \eps.
\end{gather*}
Researchers have devoted significant effort into developing different choices of loss functions \cite{masnadi2008design}.
Different settings---so the conventional wisdom goes---require the design of different loss functions (e.g., squared, zero-one, logistic) to better encode the objectives of the task at hand (regression, classification, calibration). The choice of loss function dictates  the updates during training and hence the resulting loss minimizer. With different loss functions, there are many different optimal hypotheses, and one needs to learn afresh for each loss.

\eat{
Agnostic learning is the central problem of supervised learning theory.
Typically, agnostic learning is formulated through loss minimization.
Given a loss $\ell$, which measures the quality of predictions, a base class of hypotheses $\C$, and approximation parameter $\eps$, the goal is to find a hypothesis $h$ that achieves near-optimal expected loss (compared to $c \in \C$) over a fixed, but unknown distribution $\D$.
\begin{gather*}
\E_{(\x,\ys) \sim \D}[\ell(\ys,h(\x))] \le \min_{c \in \C}\E_{(\x,\ys) \sim \D}[\ell(\ys,c(\x))] + \eps
\end{gather*}
Since its formulation \cite{valiant1984theory,haussler1992decision,kearns1994toward}, developing techniques for loss minimization has played a critical role in the development of the theory and practice of supervised learning \cite{kearns1994introduction, boydBook, shalev2012online, shalev2014understanding, hardt2021patterns}.

Within the study of loss minimization, researchers have devoted significant effort into developing different choices of loss functions \cite{masnadi2008design}.
Different settings---so the conventional wisdom goes---require the design of different loss functions (e.g., squared, zero-one, logistic) to better encode the objectives of the task at hand (regression, classification, calibration).
The choice of loss function dictates the resulting loss minimizer, so different loss functions typically mean there are many different optimal hypotheses, and one needs to learn afresh for each loss.
}
Recent work pushes back against this conventional wisdom.
The work of \cite{omni} introduces a solution concept for agnostic PAC learning, which they call \emph{omniprediction}.
Intuitively, an omnipredictor $\pt:\X \to [0,1]$ is a predictor that can be used to simultaneously minimize loss for many different losses.
Formally, an omnipredictor is parameterized by a collection of loss functions $\L$, a class of hypotheses $\C$, and approximation parameter $\eps$.
Given any loss $\ell \in \L$, a decision-maker can treat $\pt(x)$ as if it were the Bayes optimal predictor $p^*(x) = \E[\y|x]$, selecting an action $t$ that will minimize $\E [\ell(\ty,t)]$ where $\ty$ is drawn according to $\tp$. Even though the true labels are drawn according to $p^*(x)$, the resulting decision rule is $\eps$-optimal for $\ell$ over $c \in \C$.
Importantly, the omnipredictor $\pt$ is a single prediction function, fixed in advance, but yields optimal decisions for all $\ell \in \L$. The Bayes optimal predictor $p^*(x)$ is easily seen to be an omnipredictor for all losses, the question is whether they can be learnt efficiently. The main result in \cite{omni} is a sweeping feasibility result:  they demonstrate that for any efficiently learnable hypothesis class $\C$ and $\eps > 0$, efficient
omnipredictors exist for the class $\L_\mathrm{cvx}$ of all Lipschitz, convex loss functions.
They prove this by showing a connection to \emph{multicalibration}, from the literature on fair prediction \cite{hkrr2018}.

Multicalibration was developed with the goal of promoting fairness across subpopulations encoded by a class of functions $\C$. In contrast to the loss-minimization paradigm, multicalibration does not frame learning as loss minimization. Rather, the goal of learning is to satisfy a collection of ``indistinguishability'' constraints.
This view on multicalibration was developed in the recent work of \cite{oi}, who introduced an alternative paradigm for learning called \emph{outcome indistinguishability} (OI).
OI  considers two {\em alternate worlds} on individual-outcome pairs:  in the natural world, outcomes $(\x,\ys)$ are generated by Nature's true joint distribution; in the other simulated world, outcomes $(\x,\yt)$ are sampled according to the predictive model $\yt \sim \Ber(\pt(\x))$.
OI requires the learner to produce a predictor $\pt$ in which the two worlds are computationally indistinguishable.
\newcommand{\A}{\mathcal{A}}
More formally, OI is parameterized by a class of distinguisher algorithms $\A$.
Each $a \in \A$ receives an individual $x \in \X$, an outcome $y \in \set{0,1}$, and the prediction $\pt(x)$ and outputs a value in the interval $[0,1]$.
For such a collection of algorithms $\A$ and approximation parameter $\eps$, a predictor $\pt$ is $(\A,\eps)$-outcome indistinguishable\footnote{In fact, \cite{oi} introduce a more general hierarchy of OI notions, whose levels are based on the distinguishers' access to the predictions given by $\pt$. The variant where we allow distinguishers access to $\pt(\x)$ (so-called, \emph{sample-access} OI) is known to be computationally \emph{equivalent} to multicalibration.} if no algorithm $a \in \A$ can distinguish between the two distributions over individual-outcome pairs.
\begin{gather*}
    \E_{(\x,\ys) \sim \D}[a(\x,\ys,\pt(\x))] \approx_\eps \E_{\substack{\x \sim \D\\\yt \sim \Ber(\pt(\x))}}[a(\x,\yt,\pt(\x))]
\end{gather*}

As multicalibration is a special case of OI, by the results of \cite{omni}, one can view omniprediction for convex, Lipschitz losses as a consequence of OI, for an appropriate family of distinguishers.
While rigorous, this argument is rather indirect and in our view, it does not provide clear intuition for why there should be a link between loss minimization and indistinguishability. 
Moreover, the connection to multicalibration  established in \cite{omni} is rather constrained in terms of the family of loss functions $\mL$.
If we want omnipredictors for a more expressive class such as all Lipschitz functions, not just convex ones (where it is known that multicalibration is insufficient \cite[Lemma 6.7]{omni}), or simpler omnipredictors for a more restricted class of convex loss functions (such as $L_p$ losses), the results of prior work don't shed much light on how we might proceed.

\subsection{Our Contributions}
\label{sec:cont}

Motivated by ominprediction, we establish a direct and intuitive connection between loss minimization and outcome indistinguishability, through a notion which we call \emph{Loss OI}.
Fundamental to our approach is to use loss functions as tools to construct \emph{distinguishers}:
given a family $\mL$ of loss functions and a family of hypotheses $\mC$, we devise a family of distinguishers $\mU_{\mL, \mC} = \{u_{\ell,c}\}_{\ell \in \mL, c \in \mC}$ such that if $\tp(x)$ is not an omnipredictor, then some distinguisher from this family can tell apart the labels generated by Nature from those generated by the predictor's simulation.
We say that any predictor that fools every distinguisher from this family satisfies loss OI. By construction, loss OI implies omniprediction.

We show that loss OI admits a decomposition into two simpler outcome indistinguishability requirements which we call \emph{hypothesis OI} and \emph{decision OI}. Hypothesis OI compares the expected loss of the hypothesis $c$ when labels are generated by Nature versus its simulation by $\tp$, for each hypothesis in the class $c \in \C$.
Decision OI tests compares the expected loss incurred when we take actions based on the optimal post-processing of the predictions of $\tp$ under the two distributions on labels.
We give a characterization of these indistinguishability conditions in terms of the \emph{discrete derivative} $\partial \ell:[0,1] \to \R$ of the loss function $\ell$, defined as $\partial \ell(t) = \ell(1,t) - \ell(0,t)$. Via this characterization, decision OI amounts to a {\em weighted} calibration condition derived from $\partial \ell$, which is implied by standard notions of calibration. Hypothesis OI can be expressed as a \emph{multiaccuracy} condition for the class of functions $\partial \mL \circ \mC = \set{\partial \ell \circ c : \ell \in \L, c \in \C}$.
Multiaccuracy \cite{hkrr2018,kgz} for a given hypothesis family $\mC$ is a weaker notion than multicalibration for $\mC$. Both notions require access to a weak agnostic learner for $\mC$, but multiaccuracy admits simpler and more efficient algorithms in terms of sample complexity and running time.

\paragraph{Loss OI for specific families.}

With this decomposition, we turn our attention to specific collections of loss functions $\L$. Since decision OI follows from calibration, to achieve hypothesis OI and loss OI, we analyze the structure of $\partial \L \circ \mC$, with the goal of bounding the complexity of such functions.

\begin{itemize}
    \item \textbf{All losses:} We begin with the family  $\L_\mathrm{all}$ of \emph{all} losses  satisfying minimal boundedness conditions. The losses need not be convex or Lipschitz.
    We show that loss OI is possible for $\L_\mathrm{all}$ and any hypothesis class $\mC$,  provided we can ensure calibration and multiaccuracy over functions on the level sets of $\C$.
    Specifically, we require multiaccuracy over the collection $\mathrm{level}(\C) = \set{f \circ c}$ for all $c \in \C$ and all maps $f:[-1,1] \to [-1,1]$. We can view these as the set of all bounded functions over the level sets of $c$.
    This has immediate consequences for Boolean (even discrete) hypothesis classes, since there, the class $\mathrm{level}(\C)$ is not much more complex than $\mC$ itself:  $\C$-multiaccuracy plus calibration implies loss minimization for any loss function.

    \item {\bf Lipschitz losses: }
    Under Lipschitzness (but still without convexity), a  weaker multiaccuracy condition suffices.
We define $\Int(\C,\alpha)$ to be the collection of Boolean functions, which are the indicators of the events that $c(x)$ lies in an interval of width $\alpha$. 
    We show that for Lipschitz losses, $\partial \L \circ \mC$ lies in the linear span of functions in $\Int(\mC,\alpha)$.
    Hence, calibration together with $\Int(\C,\alpha)$-multiaccuracy guarantees loss OI for all Lipschitz loss functions.
    
    \item {\bf GLM losses:}
GLMs are a popular class of convex loss minimization based models, which include basic learning algorithms such as linear and logistic regression. They can be viewed as minimizing Bregman divergences for predictors which are derived from linear combination of $\mC$.  For the class of GLM losses $\L_{\mathrm{GLM}}$, we show that $\partial \L \circ \mC = \mC$. Hence, calibrated multiaccuracy---that is, calibration together with $\mC$-mulitaccuracy---guarantees loss OI for all GLM losses.
    We give an equivalence between of predictors that satisfy Loss OI for $\L_{\mathrm{GLM}}$ and the set of predictors satisfying a certain Pythagorean Theorem in the geometry of the corresponding Bregman divergence. 
    
    Finally, we exhibit a reverse connection by showing that the optimal solution to any $L_1$-regularized GLM loss minimization problem is multiaccurate. This leads us to fast and practical methods for achieving both multiaccuracy and calibrated multiaccuracy. 
\end{itemize}

Our results for Loss OI are incomparable with the result of \cite{omni} on omnipredictors. On one hand, loss OI is stronger than omniprediction. On the other hand, we require weak agnostic learning for $\partial \L \circ \mC$, which might be a much more powerful primitive than weak learning for $\mC$ itself (which is sufficient for multicalibration). For the class of convex Lipschitz losses $\L_{\mathrm{cvx}}$ considered in \cite{omni}, we show that multicalibration does not imply loss OI, although it implies omniprediction. Our best ``upper bound'' for  $(\L_\mathrm{cvx},\C)$-loss OI comes from $\Int(\C,\alpha)$-multiaccuracy, and it applies even when the losses are non-convex. For the subset $\L_{\mathrm{GLM}}\subset \L_{\mathrm{cvx}}$, we  show a stronger guarantee (loss OI versus omniprediction) from weaker assumptions (calibrated multiaccuracy versus multicalibration).

\eat{
Surprisingly, full multicalibration is not necessary to obtain nontrivial omniprediction guarantees, which can follow from $\C$-multiaccuracy plus calibration alone.
Multiaccuracy plus calibration has been hinted at in prior works \cite{dwork2019rankings,GargKR19,KimThesis}, living somewhere between multiaccuracy and multicalibration, but has not been studied in detail.
Our results on omniprediction demonstrate its importance as a solution concept in its own right.
We conclude the manuscript by establishing the algorithmic feasibility of multiaccuracy + calibration, with complexity is closer to that of multiaccuracy than full multicalibration.
}

\paragraph{Calibrated multiaccuracy.}

A key takeaway from our results is the surprising power of the notion of calibrated multiaccuracy, where we require predictors to satisfy both multiaccuracy with respect to $\mC$ and calibration. It implies loss OI for the class of GLM losses, and for the case when $\mC$ is Boolean. As a group fairness notion, it lies in between the notions of multiaccuracy and multicalibration. We show the running time and sample complexity needed to achieve calibrated multiaccuracy are not much higher than  that required for multiaccuracy, by giving a simple algorithm that alternates between ensuring multiaccuracy is achieved (using gradient descent for squared loss), and recalibrating the output. The key insight is that either of these steps reduces the squared loss of the predictor. Hence the number of invocations of the weak learner is not much more in the worst case from that required to achieve multiaccuracy, and significantly smaller than that required for multicalibration. 

\paragraph{Perspective.}

We see the key contribution of our work as conceptual: we bring the OI lens to the problem of loss minimization. Reasoning about the simulated labels $\ty$ turns out to a powerful idea in this context, which has not been explored before, even in prior work on omniprediction. Our framework leverages this to give a {\em compiler} that translates loss OI for a pair $(\mL, \mC)$ into {\em low-level} calibration and multiaccuracy conditions. With this setup, the proofs of our results are not technically hard. For instance, our result for GLMs uses the well-known fact that the loss function for any GLM has the form $\ell_g(y,t) = g(t) - yt$. It follows that $\partial \ell(t) = -t$, hence $\mC$-multiaccuracy suffices for hypothesis OI (assuming $\mC$ is closed under negation).

The loss OI perspective establishes a natural and versatile link between loss minimization and indistinguishability. It broadens our understanding of omniprediction. On one hand, it shows it can be scaled up beyond convex, Lipschitz losses. But it can also scaled down for more limited classes of loss functions to give more efficient constructions.  It enables a range of omniprediction guarantees, where the richness of the collection of losses scales with the expressive power of the class for which we require multiaccuracy.

\paragraph{Structure of this manuscript: }
The remainder of the manuscript is structured as follows. In Section \ref{sec:overview}, we present a high-level technical overview of our definitions and results. We discuss related work in \ref{sec:related}. 
In Section~\ref{sec:prelims}, we give preliminaries and formal background.
In Section~\ref{sec:oi-loss}, we introduce Loss OI and its relationship to omniprediction and the other notions of indistinguishability. We then show how Loss OI can be formulated in terms of multiaccuracy and calibration.
In Section~\ref{sec:glms}, we instantiate present our main result on loss OI for Generalized linear models. We also show an equivalence between our formulation of Loss OI for GLMs and Pythagorean theorems in the geometry of Bregman divergences. In Section \ref{sec:gen-loss}, we consider other families of loss functions including those that are not necessarily convex or Lipschitz.
In Section~\ref{sec:algo}, we present and analyze an efficient algorithm for calibrated multiaccuracy, and establish that it is more efficient than multicalibration.  We report on the results from some preliminary experiments that aim to establish the efficiency and effectiveness of calibrated multiaccuracy in Section \ref{sec:exp}.  Proofs are occasionally deferred to Appendix \ref{sec:app1} to streamline the flow.

\section{Technical Overview}
\label{sec:overview}

In this section, we give a more detailed but still high-level explanation of how loss OI gives a indistinguishability viewpoint on loss minimization and omniprediction.
The starting point for our investigation is understanding why the Bayes optimal predictor is an omnipredictor for any loss and concept class.
We use $\ps:\X \to [0,1]$ to denote the Bayes optimal predictor, which represents Nature's true probability of positive outcomes.
\begin{gather*}
    \ps(x) = \E_{(\x,\ys) \sim \D}[\ys \vert \x = x]
\end{gather*}
We consider loss functions $\ell:\set{0,1} \times [0,1] \to \R^+$ that take a label and action as arguments and return a real valued loss. For such a loss $\ell$, if the labels are drawn as $\y \sim \Ber(p)$, there exists an optimal action $k_\ell(p) \in [0,1]$ defined as
\begin{gather*}
    k_\ell(p) = \argmin_{t \in [0,1]} \E_{\y \sim \Ber(p)}[\ell(\y,t)]
\end{gather*}
We refer to $k_\ell$ as the optimal post-processing for $\ell$. Since the Bayes optimal predictor $\ps$ governs the conditional distribution over outcomes $\y^*$, by averaging over $\x \sim \D$, we conclude that $k_\ell \circ \ps$ satisfies the loss minimization guarantee for any loss, with respect to any hypothesis class $\C$.
\begin{gather}
\label{eq:bayes}
    \E_{(\x,\ys) \sim \D}[\ell(\ys, k_\ell(\ps(\x)))] \le \min_{c \in \mC} \E_{(\x,\ys) \sim \D}[\ell(\ys, c(\x))]
\end{gather}

The challenge of constructing an omnipredictor is, given specific families of losses $\mL$ and hypotheses $\mC$ respectively, to identify properties of $\tp$ that will allow us to replace $\ps$ with $\tp$ in the above statement, as long as $\ell \in \mL$ and $c \in \mC$. Formally, we say that a predictor $\pt$ is an $(\L,\mC,\eps)$-omnipredictor if for every loss $\ell \in \L$, the post-processed predictor $k_\ell \circ \pt$ is an $\eps$-loss minimizer compared to the class $\mC$:
\begin{gather}
\label{eq:omni}
    \E_{(\x, \ys) \sim \D}[\ell(\ys, k_\ell(\pt(\x)))] \le \min_{c \in \mC} \E_{(\x,\ys) \sim \D}[\ell(\ys, c(\x))] + \eps.
\end{gather}

\subsection{Omniprediction from  outcome indistinguishability.}

Omniprediction is a statement about Nature's distribution.
Equation \eqref{eq:omni} makes no mention of the simulated predictions $\ty$.
It is unclear how considering labels $\ty$ from the predictor's simulation might be useful. Indeed, the simulated labels do not play a role in the \cite{omni} derivation of omniprediction from multicalibration. 

The key insight is that {\em in the simulated world of labels $\ty$, $\tp$ is the Bayes optimal predictor}. So Equation \eqref{eq:omni} holds with $\eps =0$.  Indeed, we just apply Equation \eqref{eq:bayes} with $\ys =\ty$ and $\ps =\tp$ to get
\begin{gather}
\label{eq:bayes2}
    \E_{\substack{\x\sim \D\\ \ty \sim \Ber(\tp(\x))}}[\ell(\ty,k_\ell(\tp(\x)))] \le \min_{c \in \C} \E_{\substack{\x\sim \D\\ \ty \sim \Ber(\tp(\x))}}[\ell(\ty,c(\x))]
\end{gather}
If $\tp$ has the property that the expectations on either side of the Equation don't change much when we replace $\ty$ with $\ys$, then this will imply our desired omniprediction guarantee (Equation \eqref{eq:omni}). But this condition is a form of outcome indistinguishability, tailored to distinguishers constructed from $\mL$ and $\mC$. Loss OI is a crisp formulation of this notion.

\eat{
Omniprediction can be viewed as some sort of indistinguishability condition: an omnipredictor is a predictor $\pt$ that ``fools'' the proof that $k_\ell \circ \ps$ is a loss minimizer.
What remains, then, is to identify the properties of $\ps$ that are used in the proof of optimality---focusing attention to losses from $\L$ and hypotheses from $\C$---and to translate these properties into a set of ``tests'' that suffice to ensure the proof still goes through for $\pt$.
Optimistically, this approach will help us to identify a more precise (i.e.,\ efficient) set of constraints than enforcing full multicalibration.
}

\paragraph{Loss OI.}
\newcommand{\U}{\mathcal{U}}
Loss OI is parameterized by a loss class $\L$ and a concept class $\C$, which induce the following collection of distinguishers:
\begin{align}
\label{eq:def-u}
u_{\ell,c}(y,p,x) &= \ell(y,c(x)) - \ell(y,k_\ell(p))\\
\U_{\L,\C} &= \set{u_{\ell,c} : \ell \in \L, c \in \C}\notag
\end{align}
For a given loss $\ell$, the distinguisher $u_{\ell,c}:\Y \times [0,1] \times \X \to \R$ measures the excess loss of the prediction $c(x)$ compared to the optimal post-processing $k_\ell$ applied to the predicted label distribution $p$.
For a fixed $x \in \X$, if we generated labels $\yt \sim \Ber(\pt(x)$, then $k_\ell(\pt(x))$ is the optimal action, so $u_{\ell,c}(\yt,\pt(x),x) \ge 0$. Hence, the expected value over $\x \sim \mD$ is also non-negative. For omniprediction to hold, it would suffice if 
\begin{align*}
    \E_{(\x, \y^*) \sim \mD} [u_{\ell, c}(\ys, \pt(\x), \x)] = \E_{(\x, \y^*) \sim \mD}[\ell(\ys, c(\x))] - \E_{(\x, \y^*) \sim \mD}[\ell(\y^*, k_\ell(\tp(\x)))]\geq 0.
\end{align*}
Loss OI imposes the stronger condition that the expectation under Nature's distribution and the simulation are (approximately) equal. For a loss class $\L$, a concept class $\C$, $\eps > 0$, a predictor $\pt$ is $(\L,\C,\eps)$-loss OI
if for all $\ell \in \L$ and for all $c \in \C$, the following approximate equality holds.
\begin{gather}
    \label{eqn:lossOI}
    \E_{\substack{\x \sim \D\\\yt \sim \Ber(\pt(\x))}}[u_{\ell,c}(\yt,\pt(\x),\x)] \approx_\eps \E_{(\x,\ys) \sim \D}[u_{\ell,c}(\ys,\pt(\x),\x)]
\end{gather}
By design, Loss OI guarantees omniprediction. In fact, it is a strictly stronger notion. In Section~\ref{sec:separation}, we show that while $\mC$-multicalibration implies omniprediction for $\L_{\mathrm{cvx}}$,  it does not imply loss OI even for the $\ell_4$ loss.

\begin{minorresult}
\label{result:LOI2OP}
If a predictor $\pt$ is $(\L,\C,\eps)$-loss OI, then $\pt$ is an $(\L,\C,\eps)$-omnipredictor. The converse does not hold.
\end{minorresult}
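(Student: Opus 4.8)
The plan is to handle the two assertions separately, and the forward implication is the substantive (though short) one. The key point is that the \emph{only} property of the Bayes-optimal predictor used to derive \eqref{eq:bayes2} is the pointwise optimality of the post-processing map $k_\ell$ in the simulated world, and Loss OI is precisely engineered to transfer the resulting inequality to Nature's distribution. Concretely, first I would unfold the definition of the distinguisher $u_{\ell,c}$ from \eqref{eq:def-u} and record that, for any fixed $\ell \in \L$ and $c \in \C$,
\[
\E_{(\x,\ys)\sim\D}[u_{\ell,c}(\ys,\pt(\x),\x)] = \E_{(\x,\ys)\sim\D}[\ell(\ys,c(\x))] - \E_{(\x,\ys)\sim\D}[\ell(\ys,k_\ell(\pt(\x)))],
\]
so that showing the left-hand side is $\ge -\eps$ is exactly what we need for \eqref{eq:omni}. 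Second, I would observe that the simulated-side expectation of the same distinguisher is nonnegative: for each fixed $x$, $k_\ell(\pt(x))$ minimizes $t \mapsto \E_{\yt\sim\Ber(\pt(x))}[\ell(\yt,t)]$ over $t \in [0,1]$ and $c(x)$ is a feasible action, so $\E_{\yt\sim\Ber(\pt(x))}[u_{\ell,c}(\yt,\pt(x),x)] \ge 0$, and averaging over $\x\sim\D$ preserves this — this is just \eqref{eq:bayes2} rearranged. Third, I would invoke the Loss OI hypothesis \eqref{eqn:lossOI}, which says the Nature-side and simulated-side expectations differ by at most $\eps$; hence the Nature-side expectation is at least $0 - \eps$. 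Chaining these three facts gives $\E_{(\x,\ys)\sim\D}[\ell(\ys,k_\ell(\pt(\x)))] \le \E_{(\x,\ys)\sim\D}[\ell(\ys,c(\x))] + \eps$, and since $c \in \C$ was arbitrary, taking the minimum over $c$ on the right yields \eqref{eq:omni}.

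There is essentially no obstacle in the forward direction: once the definitions are unwound it is a two-line computation. The only thing to be careful about is that hypotheses $c \in \C$ take values in $[0,1]$ (the action space), so that $c(x)$ is a legal action and the nonnegativity step goes through; this is part of the setup, so nothing is lost.

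For the converse — that omniprediction does not imply Loss OI — I would give a separating example rather than attempt a general argument, since this is a claim about a specific failure mode. The cleanest route uses the known implication of \cite{omni} that a $\C$-multicalibrated predictor is an $(\L_{\mathrm{cvx}},\C,\eps)$-omnipredictor: it then suffices to exhibit a predictor that is $\C$-multicalibrated (hence an omnipredictor for all convex Lipschitz losses) yet violates the \emph{two-sided} equality \eqref{eqn:lossOI} for some loss in $\L_{\mathrm{cvx}}$ and some $c \in \C$; we carry this out in Section~\ref{sec:separation} using the $\ell_4$ loss. The conceptual reason such an example must exist is that omniprediction only controls the one-sided quantity $\E_{\mathrm{Nature}}[u_{\ell,c}] \ge -\eps$, whereas Loss OI additionally forces this quantity to be close to the nonnegative simulated value $\E_{\mathrm{sim}}[u_{\ell,c}]$, and nothing in multicalibration enforces that second, stronger constraint. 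The real work in the converse is constructing and verifying the explicit $\ell_4$ example (deferred to Section~\ref{sec:separation}); the reduction from multicalibration sketched here is routine.
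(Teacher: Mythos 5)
Your forward argument is exactly the paper's proof of Proposition~\ref{prop:omni}: unfold $u_{\ell,c}$, use pointwise optimality of $k_\ell$ in the simulated world to get nonnegativity of the simulated-side expectation, then apply the loss-OI bound to transfer this to Nature up to $\eps$; and your handling of the converse (exhibit a multicalibrated predictor that fails loss OI for $\ell_4$, deferring to the separation theorem) is likewise what the paper does in Theorem~\ref{thm:sep-mc-oi}. One small correction to your parenthetical caution: in the paper's formal preliminaries, hypotheses $c\in\mC$ take values in $[-1,1]$ and $k_\ell(p)=\argmin_{t\in\R}\ell(p,t)$ is minimized over all of $\R$, so $c(x)$ is automatically in the optimization domain and the nonnegativity step goes through for that reason, not because $c$ lands in $[0,1]$.
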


\eat{
The indistinguishability framework allows us to ensure that if the decision rule is optimal on modeled outcomes, it will also be optimal on Nature's outcomes.
That is, we hope to enforce the following condition for each potential $u_{\ell,c} \in \U_{\L,\C}$.
\begin{gather*}
\E_{\substack{\x \sim \D\\\yt \sim \Ber(\pt(\x))}}[u_{\ell,c}(\yt,\pt(\x),\x)] \ge 0 \implies \E_{(\x,\ys) \sim \D}[u_{\ell,c}(\ys,\pt(\x),\x)] \ge 0.
\end{gather*}
One way to ensure that the sign of the expectation of the distinguishers is the same is to ensure that the expectations are equal.
This observation leads us to the definition of Loss OI.}

The \cite{omni} proof of omniprediction was tailored specifically to multicalibration and the specific class of convex loss functions $\L_\mathrm{cvx}$. In contrast, Loss OI is a versatile notion that may be applied to any class of loss functions.
By approaching the question of omniprediction via loss OI, we arrive at an easy-to-state set of sufficient conditions to obtain omniprediction for any class of losses $\L$ and hypothesis class $\C$.

\paragraph{Characterizing Loss OI via calibration and multiaccuracy.}

We define loss OI using distinguisher functions $\set{u_{\ell,c}}$ that depend on both $c(\x)$ and $\pt(\x)$.
It is known from the work of \cite{oi} that when distinguishers receive simultaneous access to $c(\x)$ and $\pt(\x)$, outcome indistinguishability can implement (full) multicalibration. However, the distinguishers $u_{\ell,c}$ have very specific structure, which permits a decomposition of loss OI into two modular conditions, involving two different distinguishers that each depend on the label and one out of $c(\x)$ and $\pt(\x)$ \emph{separately}.
The first set of distinguishers will simply compare the loss of hypotheses $c \in \C$ for each loss $\ell \in \L$, a condition we call \emph{hypothesis OI}.
\begin{gather}
\label{eqn:modelOI}
    \E[\ell(\yt, c(\x))] \approx_\eps \E[\ell(\ys,c(\x))]
\end{gather}
The second set of distinguishers evaluates the loss achieved by the predictor $\pt$ under optimal post-processing for each loss, a condition we call \emph{decision OI}.
\begin{gather}
\label{eqn:decisionOI}
    \E[\ell(\yt,k_\ell(\pt(\x)))] \approx_\eps \E[\ell(\ys,k_\ell(\pt(\x)))]
\end{gather}
Subtracting (\ref{eqn:decisionOI}) from (\ref{eqn:modelOI}), we obtain (\ref{eqn:lossOI}), albeit with a slightly larger error parameter.
In other words, if $\pt$ satisfies both hypothesis OI and decision OI, then $\pt$ satisfies loss OI.

It turns out that decision OI is easy to achieve, we show that it is implied by calibration. Recall that a predictor is $\alpha$-calibrated if $\E[\y|\tp(\x) =v] \approx_\alpha v$. 
Using a more nuanced notion called weighted calibration from \cite{GopalanKSZ22}, we can get an exact characterization of decision OI (see Theorem \ref{thm:dec+model}). 

To present a characterization of hypothesis OI, we need a couple of definitions. 
For a class of functions $\mC$ and approximation $\alpha \ge 0$, a predictor $\pt$ is $(\mC,\alpha)$-multiaccurate if for every $c \in \mC$, the correlation between $c$ and $\ys - \pt(\x)$ is at most $\alpha$. Formally, we require
\begin{gather*}
    \card{\E[c(\x) \cdot (\ys - \pt(\x))]} \le \alpha.
\end{gather*}
For a loss function $\ell$, we define the discrete derivative $\partial \ell$ as $\partial{\ell}(t) = \ell(1, t) - \ell(0,t)$. For a loss class $\L$ and hypothesis class $\C$, we consider the class of functions $\partial\L\circ \C = \set{\partial \ell \circ c : \ell \in \L, c \in \C}$. We can characterize Hypothesis OI in terms of $\partial\L \circ \C$-multiaccuracy.
\begin{minorresult}
(Decomposition for Loss OI)
For loss class $\L$, hypothesis class $\C$, and $\eps \ge 0$, predictor $\pt$ is $(\mL, \mC, \eps)$-hypothesis OI iff it is $(\partial\L\circ \C, \eps)$-multiaccurate. Thus, if $\pt$ is $\eps$-calibrated and $(\partial\L\circ \C, \eps)$-multiaccurate, then it is $(\L, \C, O(\eps))$-loss OI, and hence an $(\L, \C, O(\eps))$-omnipredictor.
\end{minorresult}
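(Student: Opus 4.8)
The plan is to reduce the whole statement to one elementary identity. For any binary label $y \in \{0,1\}$ and any action $t$, the loss is affine in $y$: writing $\partial\ell(t) = \ell(1,t) - \ell(0,t)$ for the discrete derivative, we have $\ell(y,t) = \ell(0,t) + y\,\partial\ell(t)$. For the \emph{iff}, fix $\ell \in \L$ and $c \in \C$, set $t = c(\x)$, and take expectations in each of the two worlds. Under Nature's distribution, $\E_{(\x,\ys)\sim\D}[\ell(\ys,c(\x))] = \E_\x[\ell(0,c(\x))] + \E_\x[\ps(\x)\,\partial\ell(c(\x))]$, using $\E[\ys\mid\x] = \ps(\x)$; under the simulation $\yt\sim\Ber(\pt(\x))$ we get $\E[\ell(\yt,c(\x))] = \E_\x[\ell(0,c(\x))] + \E_\x[\pt(\x)\,\partial\ell(c(\x))]$. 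The baseline terms $\ell(0,c(\x))$ are identical and cancel, leaving the exact identity
\[
\E[\ell(\yt,c(\x))] - \E[\ell(\ys,c(\x))] \;=\; \E_{(\x,\ys)\sim\D}\!\big[(\pt(\x)-\ys)\,(\partial\ell\circ c)(\x)\big].
\]
Thus $(\ell,c)$-hypothesis OI at level $\eps$ is \emph{precisely} the bound $\card{\E[(\ys-\pt(\x))(\partial\ell\circ c)(\x)]}\le\eps$, i.e.\ $(\{\partial\ell\circ c\},\eps)$-multiaccuracy; quantifying over $\ell\in\L$, $c\in\C$ gives $(\partial\L\circ\C,\eps)$-multiaccuracy, and since the displayed identity is an equality (not an inequality) the equivalence runs in both directions.

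For the ``thus'' clause I need decision OI, which I would derive from calibration by the same trick: applying the linearization with $t = k_\ell(\pt(\x))$ yields
\[
\E[\ell(\yt,k_\ell(\pt(\x)))] - \E[\ell(\ys,k_\ell(\pt(\x)))] \;=\; \E_{(\x,\ys)\sim\D}\!\big[(\pt(\x)-\ys)\,w_\ell(\pt(\x))\big], \qquad w_\ell := \partial\ell\circ k_\ell .
\]
The point is that $w_\ell(\pt(\x))$ depends on $\x$ only through $\pt(\x)$, so conditioning on the level sets $\{\pt(\x)=v\}$ turns the right-hand side into $\E_v\big[\,w_\ell(v)\cdot\E[\ys-v\mid\pt(\x)=v]\,\big]$, which is at most $\infnorm{w_\ell}$ times the (expected) calibration error. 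Under the paper's boundedness convention $\infnorm{\partial\ell}\le 1$, so $\eps$-calibration gives decision OI at level $\eps$; this is exactly the weighted-calibration characterization recorded in Theorem~\ref{thm:dec+model}.

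To assemble loss OI, recall $u_{\ell,c}(y,p,x) = \ell(y,c(x)) - \ell(y,k_\ell(p))$, so that the loss-OI discrepancy $\E[u_{\ell,c}(\yt,\pt(\x),\x)] - \E[u_{\ell,c}(\ys,\pt(\x),\x)]$ is exactly the difference of the two discrepancies computed above (hypothesis OI minus decision OI). Hence if $\pt$ is $(\partial\L\circ\C,\eps)$-multiaccurate and $\eps$-calibrated, the triangle inequality gives $(\L,\C,2\eps)$-loss OI, i.e.\ $(\L,\C,O(\eps))$-loss OI; Proposition~\ref{result:LOI2OP} then upgrades this to the claimed $(\L,\C,O(\eps))$-omniprediction guarantee.

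The computations themselves are short; the one genuinely delicate point is the passage from calibration to decision OI with a clean constant. Here one must commit to the right notion of calibration error — an $\ell_1$/ECE-style bound on $\E_v\big[\,\card{\E[\ys-v\mid\pt(\x)=v]}\,\big]$ rather than a pointwise one — and verify that $\infnorm{w_\ell}$ (equivalently, the range of $\partial\ell$) is controlled by the boundedness assumptions on $\L$, so that the constant hidden in $O(\eps)$ is absolute. The same normalization bookkeeping is what makes ``$\eps$'' mean the same thing on both sides of the iff, once one checks that $\partial\L\circ\C$ is a legitimate class of $[-1,1]$-valued test functions for multiaccuracy.
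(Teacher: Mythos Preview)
Your proposal is correct and follows essentially the same approach as the paper: the affine-in-$y$ identity $\ell(y,t)=\ell(0,t)+y\,\partial\ell(t)$ is exactly the paper's discrete-derivative lemma (Lemma~\ref{lem:disc-taylor}), your hypothesis-OI/multiaccuracy equivalence and decision-OI/calibration reduction match Parts~(1) and~(2) of Theorem~\ref{thm:dec+model}, and the final assembly via the triangle inequality is the Decomposition Lemma~\ref{lem:implies}. Your explicit observation that the displayed identity is an equality (so the iff runs both ways) is something the paper only remarks on informally after the proof of Theorem~\ref{thm:dec+model}.
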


Thus we have decomposed loss OI into two constraints on our predictors: calibration, and multiaccuracy for the class $\partial \L \circ \mC$. This presents an  alternative (and possibly more efficient) route to obtaining omnipredictors than via multicalibration.

\eat{
\paragraph{The discrete derivative, multiaccuracy, and calibration.}
With the various notions of outcome indistinguishability in place, we can turn to investigating how to obtain the notions efficiently.
In the original work on omniprediction, \cite{omni} start with $\C$-multicalibration, and derive omniprediction for a specific class of convex, Lipschitz loss functions.
In this work, we start with Loss OI for specific classes of loss functions $\L$, and from the structure in the loss classes, derive sufficient multicalibration-style conditions to ensure omniprediction.

First, we consider \emph{calibration}.
Intuitively, a predictor is calibrated if, when it predicts $\pt(\x) = v$, the expected outcome is $v$.
Formally, we work with a weighted version of calibration.
Given a class of weight functions $\mathcal{W}$, where $w:[0,1] \to [-1,1] \in \mathcal{W}$ focuses attention different ranges of the interval $[0,1]$, we say a predictor $\pt$ is $(\mathcal{W},\alpha)$-calibrated if for every $w \in \mathcal{W}$
\begin{gather*}
    \card{\E[w(\pt(\x)) \cdot (\ys - \pt(\x))]} \le \alpha.
\end{gather*}
The classic notion of calibration takes the weight class $\mathcal{W}$ to be the set of indicators on intervals, $\pt(\x) \approx v$, for $v$ in some discretization of the interval $[0,1]$.

The next constraint we work with is \emph{multiaccuracy}, a simplified version of multicalibration.
For a class of functions $\Hcal$ and approximation $\alpha \ge 0$, a predictor $\pt$ is $(\Hcal,\alpha)$-multiaccurate if for every $h \in \Hcal$, the correlation between $h$ and $\ys - \pt(\x)$ is at most $\alpha$.
\begin{gather*}
    \card{\E[h(\x) \cdot (\ys - \pt(\x))]} \le \alpha.
\end{gather*}

We show a generic connection between the Loss OI notions and the notions of calibration and multiaccuracy.
Specifically, for a loss function $\ell$, consider the discrete derivative $\partial \ell$ defined as
\begin{align*} 
    \partial{\ell}(t) &= \ell(1, t) - \ell(0,t).
\end{align*}
For a loss class $\L$ and hypothesis class $\C$, we consider the class of functions defined in terms of composition of $\partial\ell$ for $\ell \in \L$ and $c \in \C$.
\begin{gather*}
    \partial\L(\C) = \set{\partial \ell \circ c : \ell \in \L, c \in \C}
\end{gather*}
For a loss class $\L$, we also consider the class of functions defined in terms of the composition of $\partial\ell$ and the optimal decision map $k_\ell$ for each $\ell \in \L$.
\begin{gather*}
    \partial\L = \set{\partial\ell \circ k_\ell : \ell \in \L}
\end{gather*}
Our main technical result characterizes Hypothesis OI and Decision OI in terms of $\partial\L(\C)$-multiaccuracy and $\partial \L$-calibration, respectively.
\begin{result}
For loss class $\L$, hypothesis class $\C$, and $\eps \ge 0$, predictor $\pt$ is:
\begin{enumerate}
    \item  $(\mL, \mC, \eps)$-hypothesis-OI iff it is $(\partial\L(\C), \eps)$-multiaccurate.
    \item $(\mL, \eps)$-decision-OI iff it is $(\partial \L, \eps)$-calibrated.
\end{enumerate}
\end{result}
As an immediate corollary of this theorem, we obtain sufficient conditions for omniprediction.
In particular, for any loss class, weighted calibration for the class $\partial\L$ is implied by standard approximate calibration.
Thus, we obtain omniprediction---not from full multicalibration---but instead from a multiaccuracy plus calibration condition.
\begin{corollary*}
Suppose a predictor $\pt$ is $\alpha$-calibrated and $(\partial \L (\C),\alpha)$-multiaccurate.
Then, $\pt$ is an $(\L,\C,\eps)$-omnipredictor for $\eps \le O(\alpha)$.
\end{corollary*}
}

\paragraph{Non-convex losses.}

Using our decomposition theorem we show that, perhaps surprisingly, loss-OI and omniprediction are feasible even for non-convex losses, given a sufficiently powerful learner for functions derived from $\mC$. We require the losses to be bounded: $\infnorm{\partial \ell} \leq 1$. But otherwise, the losses can be arbitrary, we do not assume Lipschitzness or convexity. Define the set
\begin{gather*}
    \level(\C) = \set{f \circ c : f \in \mathcal{F},\ c \in \C}\ \ 
    \textrm{where } \mathcal{F} = \set{f:\Im(\mC) \to [-1,1]}
\end{gather*}
That is, $\level(\C)$ consists of all possible bounded post-processings of $c \in \C$; in particular the functions $f \in \mathcal{F}$ only get to distinguish between the \emph{level sets} of each $c \in \C$. The importance of $\level(\C)$ stems from the fact that $\partial \ell \circ c$ belongs to this class,
hence $\level(\C)$-multiaccuracy suffices for Hypothesis-OI over all loss functions.
\begin{minorresult}
For any class of loss functions $\L$, if $\pt$ is $(\level(\C),\alpha)$-multiaccurate, then $\pt$ is $(\L,\C,\alpha)$-hypothesis OI. Hence if $\pt$ is $\alpha$-calibrated and $(\level(\C),\alpha)$-multiaccurate, then for any loss class $\L$, $\pt$ is $(\L,\C,O(\alpha))$-loss OI.
\end{minorresult}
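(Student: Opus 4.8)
The plan is to derive this proposition as an immediate corollary of the decomposition of Loss OI into hypothesis OI and decision OI, using only the elementary inclusion $\partial\L\circ\C\subseteq\level(\C)$. Recall that hypothesis OI asks, for every $\ell\in\L$ and $c\in\C$, that $\E[\ell(\yt,c(\x))]\approx_\alpha\E[\ell(\ys,c(\x))]$. Since $\ell(\cdot,t)$ is affine in a $\zo$-valued label, writing $\partial\ell(t)=\ell(1,t)-\ell(0,t)$ one gets $\E_{\yt\sim\Ber(\pt(\x))}[\ell(\yt,c(\x))]=\E_\x[\ell(0,c(\x))+\pt(\x)\,\partial\ell(c(\x))]$ and $\E_{(\x,\ys)\sim\D}[\ell(\ys,c(\x))]=\E_\x[\ell(0,c(\x))+\ps(\x)\,\partial\ell(c(\x))]$, where $\ps(\x)=\E[\ys\mid\x]$. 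Subtracting, the hypothesis-OI gap equals
\[
\E_\x\big[(\pt(\x)-\ps(\x))\cdot(\partial\ell\circ c)(\x)\big]=\E_{(\x,\ys)\sim\D}\big[(\pt(\x)-\ys)\cdot(\partial\ell\circ c)(\x)\big].
\]
This is exactly the identity underlying the decomposition statement that $(\L,\C,\eps)$-hypothesis OI is equivalent to $(\partial\L\circ\C,\eps)$-multiaccuracy.

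Next I would verify the inclusion. Under the standing boundedness hypothesis $\infnorm{\partial\ell}\le1$, the restriction of $\partial\ell$ to $\Im(\mC)$ is a map into $[-1,1]$, i.e.\ an element of $\mathcal{F}$; hence $\partial\ell\circ c=f\circ c\in\level(\C)$. Therefore $(\level(\C),\alpha)$-multiaccuracy yields $\card{\E[(\pt(\x)-\ys)\cdot(\partial\ell\circ c)(\x)]}\le\alpha$ for all $\ell\in\L$ and $c\in\C$, which by the displayed identity is precisely $(\L,\C,\alpha)$-hypothesis OI. Note that $k_\ell$ plays no role here: the rich post-processing class $\mathcal{F}$ is needed only to absorb $\partial\ell$ composed with $c$, not with $k_\ell$, so the expressive power of $\level(\C)$ is used only for the hypothesis-OI half.

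For the ``hence'' clause I would assemble the two remaining established facts. First, $\alpha$-calibration implies $\alpha$-decision OI: by the characterization of decision OI as a weighted-calibration condition (Theorem~\ref{thm:dec+model}) for weight $\partial\ell\circ k_\ell$, together with the fact that standard $\alpha$-calibration dominates any such bounded weighted test. Second, subtracting the decision-OI equality \eqref{eqn:decisionOI} from the hypothesis-OI equality \eqref{eqn:modelOI} gives the loss-OI equality \eqref{eqn:lossOI} with error $2\alpha=O(\alpha)$; hence $\pt$ is $(\L,\C,O(\alpha))$-loss OI, and by Proposition~\ref{result:LOI2OP} also an $(\L,\C,O(\alpha))$-omnipredictor.

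I do not expect a genuine obstacle: the entire content is the (essentially tautological) inclusion $\partial\L\circ\C\subseteq\level(\C)$ plus stitching together results already in hand. The one spot demanding a little care is constant-tracking: the hypothesis $\infnorm{\partial\ell}\le1$ is exactly what makes $\partial\ell|_{\Im(\mC)}$ land in the range $[-1,1]$ on which $\mathcal{F}$ is defined (otherwise one rescales and pays the corresponding factor), and one should make explicit which constant the $O(\cdot)$ hides when combining the hypothesis-OI and decision-OI halves.
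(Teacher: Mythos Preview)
Your proposal is correct and follows essentially the same approach as the paper's proof of Theorem~\ref{thm:general-loss}: you use the inclusion $\partial\L\circ\C\subseteq\level(\C)$ (valid under the standing assumption $\infnorm{\partial\ell}\le 1$) together with Theorem~\ref{thm:dec+model} to obtain hypothesis OI, then use $\alpha$-calibration and the boundedness of $\partial\ell\circ k_\ell$ (via Lemma~\ref{lem:fully}) to obtain decision OI, and combine via the decomposition lemma. The only cosmetic difference is that you re-derive the hypothesis-OI identity inline rather than citing Theorem~\ref{thm:dec+model} directly.
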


Thus, omnipredictors for every bounded loss function are computable, with complexity scaling with the complexity of weak agnostic learning for $\level(\C)$. 
While $\level(\C)$ could in general be far more expressive than $\C$ itself, there are important special cases, including when $\mC$ is a family of Boolean functions, where it is not much larger than $\mC$. In these settings, we get loss-OI for arbitrary losses from calibration and $\mC$-multiaccuracy. This includes natural loss functions such as weighted $0$-$1$ loss which are important for classification.

\eat{ Consider the multiaccuracy condition for the class of functions $\partial\L(\C)$, without any assumptions on $\ell \in \L$.
\begin{gather*}
    \E[\partial\ell(c(x)) \cdot (\ys - \pt(\x))]
    =
    \E[(\ell(1,c(x)) - \ell(0,c(x)) \cdot (\ys - \pt(\x))] \approx 0
\end{gather*}
Clearly, if there is a choice of $c(x) = t$, such that $\partial\ell(t)$ is unbounded, then there will be no way to audit this condition from samples.
But short of assuming the discrete derivative is bounded, we can hope to reason about these test functions.}

\paragraph{Lipschitz losses}
If we are willing to assume that the losses are Lipschitz, then we can obtain hypothesis OI from a weaker multiaccuracy condition.
Intuitively, if the loss $\ell$ is Lipschitz in $t$, then so is $\partial \ell$, so we only need to consider Lipschitz post-processings.
We can achieve this guarantee by enforcing multiaccuracy over the class of functions $\Int(\C,\alpha)$ which are the indicators of the event that $c(x)$ lies in a certain interval $I \subset [-1,1]$ of width $\alpha$, over all $c \in \mC$ and intervals $I$.

\eat{
Specifically, consider partitioning the range $[-1,1]$ into $m$ discrete intervals of width $\delta = 2/m$, $\mathcal{I}^\delta = \set{I_j^\delta : j \in \set{1,\hdots, m}}$.
We define $\Int(\C,\delta)$ as follows.
\begin{gather*}
    \Int(\C,\delta) = \set{\ind{c(\cdot) \in I_j^\delta} : c \in \C,\ I_j^\delta \in \mathcal{I}^\delta}
\end{gather*}}

We show that $\Int(\C,\alpha)$-multiaccuracy suffices to give Hypothesis OI for Lipschitz losses.
\begin{minorresult}
For any class of $1$-Lipschitz loss functions $\L$, if $\pt$ is $(\Int(\C,\alpha),\alpha^2)$-multiaccurate then $\pt$ is $(\L,\C,O(\alpha))$-hypothesis OI. If $\pt$ is also calibrated, then $\pt$ is $(\L,\C,O(\alpha))$-loss OI.
\end{minorresult}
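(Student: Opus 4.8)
The plan is to route the whole statement through the Decomposition Theorem for Loss OI, which we may assume: it says $\pt$ is $(\mL,\mC,\eps)$-hypothesis OI if and only if $\pt$ is $(\partial\L\circ\C,\eps)$-multiaccurate, and that $\eps$-calibration together with $(\partial\L\circ\C,\eps)$-multiaccuracy yields $(\L,\C,O(\eps))$-loss OI. So the only thing needing a fresh argument is the multiaccuracy transfer
\[
  (\Int(\C,\alpha),\alpha^2)\text{-multiaccuracy}\ \Longrightarrow\ (\partial\L\circ\C,\,O(\alpha))\text{-multiaccuracy}.
\]
Fix $\ell\in\mL$ and $c\in\mC$; the goal is to show $\card{\E[\partial\ell(c(\x))\,(\ys-\pt(\x))]}\le O(\alpha)$. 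Two structural facts drive the argument: first, since $\ell(0,\cdot)$ and $\ell(1,\cdot)$ are each $1$-Lipschitz, $\partial\ell=\ell(1,\cdot)-\ell(0,\cdot)$ is $2$-Lipschitz on the range of $c$; second, under the paper's normalization the losses are bounded, i.e.\ $\infnorm{\partial\ell}\le 1$ (the same normalization used for $\mL_{\mathrm{all}}$).

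The main step is a step-function approximation of $\partial\ell$ at scale $\alpha$. Partition the range of $c$ into consecutive intervals $I_1,\dots,I_m$ of width at most $\alpha$, so $m=O(1/\alpha)$; let $m_j$ be the midpoint of $I_j$ and set $g(t):=\sum_{j=1}^m\partial\ell(m_j)\,\mathbbm{1}[t\in I_j]$, so that $g(c(x))=\partial\ell(m_j)$ whenever $c(x)\in I_j$. By $2$-Lipschitzness, $\infnorm{\partial\ell-g}\le\alpha$, hence
\[
  \card{\E\bigl[(\partial\ell(c(\x))-g(c(\x)))(\ys-\pt(\x))\bigr]}
  \ \le\ \infnorm{\partial\ell-g}\cdot\E\,\card{\ys-\pt(\x)}\ \le\ \alpha\cdot 1\ =\ \alpha .
\]
For the remaining term, each indicator $\mathbbm{1}[c(\cdot)\in I_j]$ lies in $\Int(\C,\alpha)$, so $(\Int(\C,\alpha),\alpha^2)$-multiaccuracy gives $\card{\E[\mathbbm{1}[c(\x)\in I_j](\ys-\pt(\x))]}\le\alpha^2$ for every $j$, and since $\card{\partial\ell(m_j)}\le 1$,
\[
  \card{\E\bigl[g(c(\x))(\ys-\pt(\x))\bigr]}\ \le\ \sum_{j=1}^m\card{\partial\ell(m_j)}\cdot\alpha^2\ \le\ m\,\alpha^2\ =\ O(\alpha).
\]
Adding the two bounds yields $(\partial\L\circ\C,O(\alpha))$-multiaccuracy, so by the Decomposition Theorem $\pt$ is $(\L,\C,O(\alpha))$-hypothesis OI. For the last sentence: if $\pt$ is in addition calibrated, then combining $O(\alpha)$-calibration with the $(\partial\L\circ\C,O(\alpha))$-multiaccuracy just established, the calibration clause of the Decomposition Theorem (equivalently: calibration gives $(\mL,O(\alpha))$-decision OI via the weighted-calibration characterization of Theorem~\ref{thm:dec+model}, and one subtracts \eqref{eqn:decisionOI} from \eqref{eqn:modelOI}) shows $\pt$ is $(\L,\C,O(\alpha))$-loss OI.

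There is no deep obstacle here; the point to get right is the error bookkeeping, and it is instructive why it works. We union over $m=\Theta(1/\alpha)$ interval indicators, so the per-indicator slack must be $O(\alpha^2)$ for the sum to remain $O(\alpha)$ — this is precisely why the hypothesis asks for $\alpha^2$-multiaccuracy rather than $\alpha$-multiaccuracy. Separately, the step-function approximation contributes its own $O(\alpha)$ term, and that part only uses Lipschitzness, not convexity, which is why the result covers non-convex Lipschitz losses. Two details worth stating explicitly: the normalization $\infnorm{\partial\ell}\le 1$ is genuinely needed (otherwise the coefficients $\partial\ell(m_j)$ blow up the sum), and the constants hidden in ``calibration $\Rightarrow$ decision OI'' must not secretly depend on $1/\alpha$, so that the two $O(\alpha)$ contributions really do combine into a single $O(\alpha)$.
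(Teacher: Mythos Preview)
Your proof is correct and follows essentially the same approach as the paper: approximate $\partial\ell\circ c$ by a step function on the $O(1/\alpha)$ intervals defining $\Int(\C,\alpha)$, bound the approximation error by Lipschitzness, bound the step-function term by summing the $\alpha^2$ multiaccuracy slack over $O(1/\alpha)$ indicators, and then invoke the decomposition into hypothesis OI plus decision OI (Theorem~\ref{thm:dec+model} and Lemma~\ref{lem:implies}). The only cosmetic difference is that the paper's formal class $\Lip$ directly assumes $\partial\ell$ is $1$-Lipschitz and bounded by $1$ on $[-1,1]$, whereas you derive $2$-Lipschitzness from the loss itself; this affects only constants.
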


\subsection{Loss OI in GLMs.}
GLMs are a important class of models from statistics that generalize linear and logistic regression \cite{MN89, GLMbook}. On a technical level, GLMs are constructed using the following recipe:
\begin{enumerate}
    \item We start with an arbitrary monotone increasing transfer function $g':\R \to \R$ whose range contains $[0,1]$. 
    \item Note that the integral $g(t)$ of $g'(t)$ is convex since $g'$ is monotone. We define its matching loss $\ell_g(y,t) = g(t) -yt$ which is a convex function of $t$ \cite{AuerHW95}.
    \item We look for the model $h \in \mH$ which minimizes $\E[\ell_g(y, h(x))]$ where the hypothesis class $\mH$ is taken to be linear combinations over some base class $\mC$.  This gives rise to a convex optimization problem that can be solved efficiently \cite{GLMbook, GLMnotes}.
\end{enumerate}

When we take $g'(t) = t$, this recipe gives linear regression with the squared loss. When $g'(t) = \sigma(t)$ is the sigmoid, we get logistic regression. The class of losses $\L_{\mathrm{GLM}}$ that arise in this manner are convex. Thus, by the results of \cite{omni}, $\C$-multicalibration suffices to obtain omniprediction for $\L_{\mathrm{GLM}}$.

Our first result on GLMs shows that the class $\partial\L_{\mathrm{GLM}} \circ \C = \C$. This holds because every loss $\ell_g \in \Lglm$ has the form  $\ell_g(y,t) = g(t) - yt$, hence $\partial \ell(t) = -t$ is linear in $t$. This  means that $\C$-multiaccuracy---not a derived class---plus calibration suffices for loss OI for GLMs.
\begin{result}[Informal]
If $\pt$ is $(\C,\alpha)$-multiaccurate and calibrated, then it is $(\L_\mathrm{GLM},\C,O(\alpha))$-Loss OI.
\end{result}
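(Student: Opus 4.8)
The plan is to obtain Loss OI from the two-part decomposition (hypothesis OI plus decision OI), instantiating each part using only the structural fact that every GLM loss $\ell_g(y,t)=g(t)-yt$ has discrete derivative $\partial\ell_g(t)=\ell_g(1,t)-\ell_g(0,t)=(g(t)-t)-g(t)=-t$, which is linear in $t$ and independent of $g$.

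\emph{Hypothesis OI from multiaccuracy.}
Since $\partial\ell_g(t)=-t$, for any $c\in\C$ we have $\partial\ell_g\circ c=-c$, so $\partial\L_{\mathrm{GLM}}\circ\C=\set{-c:c\in\C}$. The multiaccuracy condition $\card{\E[c(\x)(\ys-\pt(\x))]}\le\alpha$ is invariant under $c\mapsto-c$, so a $(\C,\alpha)$-multiaccurate predictor is automatically $(\partial\L_{\mathrm{GLM}}\circ\C,\alpha)$-multiaccurate; in particular we do not even need $\C$ to be closed under negation. By the decomposition for Loss OI (hypothesis OI $\iff$ $\partial\L\circ\C$-multiaccuracy), $\pt$ is $(\L_{\mathrm{GLM}},\C,\alpha)$-hypothesis OI.

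\emph{Decision OI from calibration.}
Writing $\ell(y,t)=\ell(0,t)+y\,\partial\ell(t)$ for $y\in\set{0,1}$ and using $\E[\yt\mid\x]=\pt(\x)$, the gap controlled by decision OI for $\ell_g$ equals $\E[(\ys-\pt(\x))\,\partial\ell_g(k_{\ell_g}(\pt(\x)))]=-\,\E[(\ys-\pt(\x))\,k_{\ell_g}(\pt(\x))]$, i.e.\ weighted calibration with the weight function $w=-k_{\ell_g}=-(g')^{-1}$ evaluated at the prediction. Because this is a fixed bounded function of $\pt(\x)$ (in the formal statement one takes predictions in a range on which $(g')^{-1}$ is bounded, or pays an additive clipping error), ordinary $\alpha$-calibration of $\pt$ bounds this correlation by $O(\alpha)$; this is exactly the fact, formalized in Theorem~\ref{thm:dec+model}, that calibration implies weighted calibration for bounded weights. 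Hence $\pt$ is $(\L_{\mathrm{GLM}},O(\alpha))$-decision OI.

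\emph{Combining.}
Subtracting the decision-OI near-equality from the hypothesis-OI near-equality, exactly as in the general decomposition, gives the defining near-equality of Loss OI for every $\ell_g\in\L_{\mathrm{GLM}}$ and $c\in\C$ with error $O(\alpha)$, so $\pt$ is $(\L_{\mathrm{GLM}},\C,O(\alpha))$-loss OI, and Proposition~\ref{result:LOI2OP} upgrades this to the claimed omniprediction guarantee. I do not expect a genuine obstacle: the decomposition theorem does the heavy lifting and the identity $\partial\ell_g(t)=-t$ is immediate. The only point needing care is making precise the boundedness of $(g')^{-1}$ near $0$ and $1$ — equivalently, quantifying the interaction between the calibration error and the dynamic range of the transfer function — which is what pins down the constant hidden in $O(\alpha)$ and is the place the ``informal'' statement must be tightened.
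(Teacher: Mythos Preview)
Your proposal is correct and follows essentially the same approach as the paper: use the decomposition into hypothesis OI plus decision OI (Theorem~\ref{thm:dec+model} and Lemma~\ref{lem:implies}), observe that $\partial\ell_g(t)=-t$ so $\partial\L_{\mathrm{GLM}}\circ\C=-\C$ and $\C$-multiaccuracy yields hypothesis OI, and note that decision OI reduces to weighted calibration with weight $-(g')^{-1}$, which calibration controls when this is bounded. You also correctly flag the one place the informal statement needs care---the possible unboundedness of $(g')^{-1}$---which the paper handles exactly as you suggest, by clipping (Appendix~\ref{app:bounded}).
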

These results highlight the power of  calibrated multiaccuracy which gives omniprediction for all GLM losses. Before this, we only knew how to achieve this using the stronger notion of multicalibration. Is it really much easier to achieve calibrated multiaccuracy? A key piece of the answer comes from our next result shows a reverse connection between multiaccuracy and GLM optimality with $\ell_1$-regularization.  We state the result informally here.
\begin{minorresult}[Informal]
For any GLM loss and $\alpha > 0$, the optimizer of the $\ell_1$-regularized GLM optimization over the class $\mC$ is $(\C,\alpha)$-multiaccurate.
\end{minorresult}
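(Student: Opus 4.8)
The plan is to read multiaccuracy straight out of the first-order optimality conditions of the $\ell_1$-regularized convex program. Write the regularized GLM objective as
\[
L_\lambda(w) \;=\; \E_{(\x,\ys)\sim\D}\bigl[\ell_g(\ys,\langle w, c(\x)\rangle)\bigr] + \lambda\|w\|_1,
\]
where $w$ ranges over (finitely supported) real vectors indexed by $c\in\mathcal C$, the base functions $c$ are bounded, and $\ell_g(y,t)=g(t)-yt$ with $g$ convex. The predictor induced by a weight vector $w$ is the fitted probability $\pt_w(\x) = g'(\langle w, c(\x)\rangle)$; this is the natural object since $k_{\ell_g} = (g')^{-1}$, so $g'$ is exactly the map sending the linear predictor back to a probability. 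First I would check that a minimizer $w^*$ exists: for each $y\in\{0,1\}$ the convex function $t\mapsto\ell_g(y,t)$ attains its minimum, because its derivative $g'(t)-y$ has a zero (as $\mathrm{range}(g')\supseteq[0,1]\ni y$); hence $\E_{(\x,\ys)\sim\D}[\ell_g(\ys,\cdot)]$ is bounded below, and with $\lambda>0$ the penalty $\lambda\|w\|_1$ makes $L_\lambda$ coercive. Convexity then makes first-order stationarity equivalent to global optimality.

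The heart of the argument is reading off stationarity. Writing $L_\lambda = F + \lambda\|\cdot\|_1$ with $F$ the smooth term, optimality of $w^*$ is $0 \in \nabla F(w^*) + \lambda\,\partial\|\cdot\|_1(w^*)$. In coordinate $c$,
\[
\partial_{w_c}F(w^*) \;=\; \E_{(\x,\ys)\sim\D}\bigl[(g'(\langle w^*, c(\x)\rangle) - \ys)\,c(\x)\bigr] \;=\; \E_{(\x,\ys)\sim\D}\bigl[(\pt_{w^*}(\x) - \ys)\,c(\x)\bigr],
\]
while the $c$-th component of $\partial\|\cdot\|_1(w^*)$ equals $\sign(w^*_c)$ when $w^*_c\neq 0$ and lies in $[-1,1]$ when $w^*_c = 0$. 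Hence for every $c\in\mathcal C$ there is some $s_c\in[-1,1]$ with $\E[(\pt_{w^*}(\x)-\ys)\,c(\x)] + \lambda s_c = 0$, so $\bigl|\E[c(\x)(\ys - \pt_{w^*}(\x))]\bigr| = \lambda|s_c| \le \lambda$. Choosing the regularization strength $\lambda = \alpha$ shows $\pt_{w^*}$ is $(\mathcal C,\alpha)$-multiaccurate, which is the claim; via the earlier decomposition (calibration together with $\mathcal C$-multiaccuracy implies loss OI for $\Lglm$), one additional recalibration pass then also yields a $(\Lglm,\mathcal C, O(\alpha))$-loss OI predictor.

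I expect the difficulties to be bookkeeping rather than conceptual. If the transfer function $g'$ is discontinuous, $g$ is nondifferentiable at its (countably many) breakpoints and $\partial F$ is set-valued there; one must select the subgradient element associated with the chosen value $\pt_{w^*}(\x)=g'(\langle w^*, c(\x)\rangle)$ under a fixed convention for $g'$ at jumps, a routine measurable-selection point (and one needs a domination argument to differentiate under the expectation, which is immediate when $g'$ is bounded). The more substantive issue is ensuring the fitted values $\pt_{w^*}(\x)$ genuinely lie in $[0,1]$: this is automatic when $\mathrm{range}(g')\subseteq[0,1]$ (e.g.\ logistic regression, where $g'=\sigma$), and in general can be arranged by replacing $g'$ with its clip to $[0,1]$ (still monotone, still giving a convex matching loss) or by constraining the linear predictor to the region where $g'\in[0,1]$ and carrying the extra KKT multiplier, which is supported on the set where $\pt_{w^*}(\x)\in\{0,1\}$. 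This range/clipping step is the main obstacle; the multiaccuracy conclusion itself is an immediate consequence of convex duality for the $\ell_1$ penalty.
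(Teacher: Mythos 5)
Your argument matches the paper's proof of Theorem~\ref{thm:glm-ma} essentially step for step: both read $(\mC,\alpha)$-multiaccuracy directly off the first-order/subgradient optimality condition $\E[(g'(h^*(\x))-\ys)c(\x)] + \alpha\,\sign(w^*_c) = 0$, using that each component of the $\ell_1$ subdifferential is bounded by $1$. The range/clipping concern you flag is not an obstacle in the paper because the hypothesis there is that $g'$ has range exactly $[0,1]$, and the discontinuity concern is moot since $\mT$ requires $g'$ to be continuous; your extra remarks on coercivity and existence of the minimizer are correct but not part of the paper's (terser) argument.
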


This result immediately gives a (number of) efficient avenues for computing a $\C$-multiaccurate predictor: run any $\ell_1$-regularized GLM learner, like regularized logsitic regression or Lasso \cite{tibshirani1996regression}. It also suggests a template for achieving calibrated multiaccuracy:
we can alternate between the GLM learner and a recalibration procedure  until convergence. We will analyze a simple algorithm based on this template and show that its complexity is comparable to that of achieving multiaccuracy, and considerably lower than what is needed to achieve multicalibration.

Finally, we consider the Loss OI conditions for GLM losses. We show that, in this setting, the computational indistinguishability notion of Loss OI is equivalent to a geometric indistinguishability condition, formalized by Pythagorean theorems in the associated Bregman divergence. We state the result informally below, deferring the definitions of technical terms to the later sections.

\begin{result}[Informal]
Let  $g'$ be strictly monotonically increasing, let $f$ be the Legendre dual of $g$, and let $D_f$ be the corresponding Bregman divergence.
A predictor $\pt$ is $(\ell_g,\Hcal,\alpha)$-Loss OI if and only if the following approximate Pythagorean theorem holds approximately.
\begin{gather*}
    \E[D_f(\ps(\x),g'(h(\x)))] \approx_\alpha \E[D_f(\ps(\x),\pt(\x))] + \E[D_f(\pt(\x),g'(h(\x)))]
\end{gather*}
\end{result}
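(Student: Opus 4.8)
The plan is to unwind the definition of Loss OI for a single GLM loss $\ell_g$ and rewrite each side of the indistinguishability condition \eqref{eqn:lossOI} using the algebraic structure $\ell_g(y,t) = g(t) - yt$, and then translate the resulting expectations into Bregman divergences via the Legendre duality between $g$ and $f$. The starting observation is that for GLM losses the discrete derivative is $\partial \ell_g(t) = -t$, so the optimal post-processing satisfies $k_{\ell_g}(p) = g'^{-1}$ evaluated appropriately — more precisely $g'(\text{action}) = p$, so that the prediction $\pt(\x)$ plays the role of $g'(h(\x))$ only after the correct change of variables. I would first record the identity, valid for any $q,r \in [0,1]$ in the relevant range,
\begin{gather*}
  \E_{\y \sim \Ber(q)}[\ell_g(\y, g'^{-1}(r))] = f(q) + D_f(q, r) + \text{const}(q),
\end{gather*}
where $f$ is the Legendre dual of $g$; this is the standard ``matching loss equals Bregman divergence up to the Bayes term'' fact, and it is the workhorse of the whole argument.

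Next I would expand the three quantities appearing in the claimed Pythagorean identity. Writing $h$ for the (loss-specific) comparator hypothesis in $\Hcal$ achieving the relevant linear form, and using that in the simulated world $\pt$ is Bayes-optimal, the left-hand term $\E[D_f(\ps(\x), g'(h(\x)))]$ should come out of Nature's side of the Loss OI condition applied with comparator $h$; the term $\E[D_f(\pt(\x), g'(h(\x)))]$ should come out of the simulated side; and the cross term $\E[D_f(\ps(\x),\pt(\x))]$ is exactly the calibration/decision-OI defect measured in the Bregman geometry, i.e.\ it is what the difference of the two Bayes terms contributes. Concretely, I would subtract the simulated-world expectation from the Nature-world expectation in \eqref{eqn:lossOI}, substitute the matching-loss-to-Bregman identity into both, and watch the $yt$-linear parts collapse using multiaccuracy (the $\partial \ell_g \circ c = -c$ term) while the $g(t)$ parts reassemble into the divergences. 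The ``$\approx_\alpha$'' on both sides of the statement then corresponds to the ``$\approx_\eps$'' in the definition of Loss OI, with constants tracked through the decomposition of Section~\ref{sec:oi-loss}.

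For the converse direction, I would run the same chain of equalities backwards: assuming the approximate Pythagorean identity, the same substitutions show that the Nature-side and simulated-side evaluations of $u_{\ell_g, h}$ differ by exactly the slack in the Pythagorean relation, so the identity holding up to $\alpha$ forces Loss OI up to $O(\alpha)$. Here I would need to be a little careful that the equivalence is claimed for the pair $(\ell_g, \Hcal)$ — i.e.\ quantified over comparators $h$ — so the Pythagorean statement as written implicitly ranges over the hypotheses in $\Hcal$ (or, in the ``best hypothesis'' reading, over the single optimal $h$ for $\ell_g$), and I would make that quantifier explicit before starting.

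The main obstacle I anticipate is bookkeeping the change of variables between the ``probability space'' $[0,1]$ where $\ps$ and $\pt$ live and the ``natural parameter space'' $\R$ where $h$ lives: one must consistently pass through $g'$ and its inverse, keep track of the domains (strict monotonicity of $g'$ is exactly what makes $g'^{-1}$ and hence $f$ well-defined), and verify that the Bregman divergence $D_f$ one lands on is the intended one and not its reverse. A secondary technical point is that the matching-loss identity above is an exact equality only for the idealized losses; to get the stated $\approx_\alpha$ one must confirm that the error terms introduced are linear in the Loss OI parameter and do not blow up with the curvature of $g$ — this is where any Lipschitz/boundedness hypotheses hidden in the formal statement of $\L_{\mathrm{GLM}}$ would be invoked.
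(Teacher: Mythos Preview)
Your core idea---rewrite the loss $\ell_g$ in terms of the Bregman divergence $D_f$ via Legendre duality, then compute both sides of the Loss OI definition---is exactly what the paper does. But your execution plan contains a detour and one misconception.

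First, the key identity is off. What you need is simply $\ell_g(q,t) = D_f(q,\,g'(t)) - f(q)$ for any $q\in[0,1]$ and $t$ in the natural-parameter domain; there is no additional ``$\mathrm{const}(q)$'' floating around, and the sign on $f(q)$ is negative, not positive. This identity is an exact equality (it is the Fenchel--Young relation rewritten), so the later worry about ``error terms introduced'' by the identity is unfounded: no Lipschitz or boundedness assumptions are needed here beyond strict monotonicity of $g'$.

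Second, and more important: you should \emph{not} invoke multiaccuracy anywhere. The theorem is a pure algebraic equivalence between Loss OI and the Pythagorean relation; it does not assume $\pt$ is calibrated or multiaccurate. Your plan to ``watch the $yt$-linear parts collapse using multiaccuracy'' conflates this theorem with the separate result that calibrated multiaccuracy \emph{implies} Loss OI. What actually happens is much simpler: applying the identity above with $q = p^*(\x)$ to the two terms of $u_{\ell_g,h}(\y^*,\pt(\x),\x)$ gives
\[
\E_\mD[u_{\ell_g,h}] = \E\big[D_f(p^*(\x),g'(h(\x)))\big] - \E\big[D_f(p^*(\x),\pt(\x))\big],
\]
since $g'(k_{\ell_g}(\pt(\x))) = \pt(\x)$ and the $-f(p^*(\x))$ terms cancel. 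Applying the same identity with $q = \pt(\x)$ on the simulated side gives
\[
\E_{\mD(\pt)}[u_{\ell_g,h}] = \E\big[D_f(\pt(\x),g'(h(\x)))\big],
\]
because $D_f(\pt(\x),\pt(\x)) = 0$. The difference of these two lines being at most $\alpha$ in absolute value is precisely the approximate Pythagorean statement, and this holds with equality of error parameters (no $O(\alpha)$ slop). So the cross term $\E[D_f(p^*,\pt)]$ is not a ``decision-OI defect''; it is simply the Nature-side decision loss expressed in Bregman form.
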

Intuitively, the Pythagorean theorem says that the ``distance'' between $\ps$ and a predictor derived from the class $\Hcal$ can be broken down into ``orthogonal'' components:  the distance between $\ps$ and $\pt$ plus the distance between $\pt$ and the predictor from $\Hcal$.
In other words, if a predictor $\pt$ is $\C$-multiaccurate and calibrated, then it is simultaneously a ``projection'' of the best GLMs towards the statistically optimal predictor $\ps$.

\subsection{Algorithms for Calibrated Multiaccuracy.}

For a given hypothesis clas $\mC$, we define the following classes of predictors.
\begin{itemize}
    \item Let $\MAcc(\alpha)$ denote the set of predictors that are $(\mC, \alpha)$-multiaccurate.
    \item Let $\MAp(\alpha)$ denote the set of predictors that are $\alpha$-calibrated and $(\mC, \alpha)$-multiaccurate. 
    \item Let $\MCal(\alpha)$ denote the set of predictors that are $(\mC, \alpha)$-multicalibrated. 
\end{itemize}
Then we have $\MAcc(\alpha) \supseteq \MAp(\alpha) \supseteq \MCal(\alpha)$. We compare the complexity of computing a predictor in each of these classes given access to a $(\rho, \sigma)$-weak learner for $\mC$ \cite{SBD2, kalai2005boosting, KalaiMV08}. Such a learner, when given access to a distribution $(\x, \z)$ where $\x \sim \mD_\X$ and $\z$ are labels in $\pmo$, if there exists $c \in \mC$ such that $\E[c(\x)\z] \geq\rho$, will return $c'$ such that $\E[c'(\x)\z] \geq \sigma$. If no such $c$ exists it returns $\perp$. The complexity of learning the predictor in any of the aforementioned classes is governed by the number of oracle calls to the weak learner. 

We present Algorithm \ref{alg:map} for achieving calibrated multiaccuracy that alternates between ensuring multiaccuracy (using the weak learner), and calibrating the predictor. The key insight that makes it efficient is that either step can be seen to reduce the same potential function, which is the squared distance from the Bayes optimal predictor. This results in a worst-case complexity for $\MAp$ that is not too different than just for achieving the weaker guarantee of $\MA$ (since that algorithm is also analyzed using the same potential). 

We compare the number of oracle calls needed for computing a predictor in each of $\MA$, $\MAp$ and $\MC$. We emphasize that this is a comparison between the best known upper bounds. For $\MA$, we use the \cite{hkrr2018} algorithm as analyzed in Lemma \ref{lem:hkrr}. For $\MAp$, we use our analysis of Algorithm \ref{alg:map} in Theorem \ref{thm:dec+model}. For $\MC$, we use the analysis of the algorithm from \cite[Section 9]{omni}, which is derived from the boosting by branching programs algorithm by \cite{MansourM2002}.

\begin{itemize}
    \item For $\MA(\alpha)$, the number of calls made by the algorithm of \cite{hkrr2018} is bounded by $O(1/\sigma^2)$. 
    \item For $\MAp(\alpha)$, the number of calls made by Algorithm \ref{alg:map} bounded by $O(1/\sigma^2)$. 
    \item For $\MC(\alpha)$, the number of calls made by the algorithm of \cite{omni} is bounded by $O(1/\alpha^2\sigma^4)$. The weak learning assumption required is also somewhat stronger, see Section \ref{sec:algo} and Appendix \ref{app:omni} for a detailed discussion. 
\end{itemize}

The comparison above shows that $\MA$ and $\MAp$ have similar complexities in terms of the worst-case number of calls to the weak learner. The number of calls required for $\MC$ is significantly larger.  These results suggest that calibrated multiaccuracy is an interesting multi-group notion in its own right, that lies in between $\MA$ and $\MC$. It offers an interesting tradeoff point between efficiency and generality in the omniprediction landscape. It is an interesting open problem to ask if it captures any of the desirable fairness properties of $\MC$, or even of low-degree multicalibration \cite{GopalanKSZ22}. 

Finally, we show that calibrated multiaccuracy (and hence omniprediction for GLM losses) cannot be achieved by any algorithm that outputs a hypothesis which is a Single Index Model (SIM): these are functions of the form $u(\sum w_c c(x))$ where $u$ is monotonically increasing. In particular, this implies that known algorithms like the Isotron \cite{KalaiS09, KakadeKKS11} which work in the realizable setting but produce a SIM as hypothesis cannot give an omnipredictor in the non-realizable setting.

We present some preliminary experiments which support the efficiency and omniprediction claims  in Section \ref{sec:exp}. Importantly, the implementation is fewer than 100 lines of python code using standard regression and calibration libraries in sklearn, whereas multicalibration is more complex \cite{gopalan2021multicalibrated}. For a collection of common losses (inclduing some non-GLM losses),  the calibrated MA predictor always competes with and sometimes outdoes the best linear predictor tailored to the loss.

\subsection{Related Work and Discussion}
\label{sec:related}

Our work is inspired by and most closely related to the work of \cite{omni} which introduced omnipredictors, and the outcome indistinguishability framework of \cite{oi}. The relation of our results to the former is detailed in depth in Section \ref{sec:cont}. The outcome indistinguishability framework establishes general connections between multi-group fairness notions and appropriate levels in OI hierarchy. Here, we use their framework to focus on more fine-grained notions of OI that are tailored towards loss minimization and omniprediction. The framework of Loss OI is quite versatile, and has already been extended by \cite{kim2022making} to the ``performative'' prediction setting, where predictions can influence the distribution over outcomes.

Rothblum and Yona~\cite{RothblumYona21} employed the notion of outcome indistinguishability in order to obtain loss-minimization over a rich family of sub populations. Their notion of loss functions is more general than ours. But they fix a single loss function in their discussion whereas we seek to address general families of loss functions. A major distinction is that our work studies the complexity of loss OI for broad families of loss functions and relates them to distinguishers that do not depend on the loss function.

The work of \cite{GopalanKSZ22} on low-degree multicalibration was also motivated by the goal of finding intermediate notions of multigroup fairness between $\MA$ and $\MC$. They propose the hierarchy $\{\MC_d\}$ of degree-$d$ multicalibrated predictors which interpolates between these two notions. They show that several desirable fairness properties of $\MC$ are already achieved at low levels of the hierarchy, at a computational cost similar to that of $\MA$.  Our results on calibrated multiaccuracy are similar in spirit but incomparable, we show how omniprediction for some important convex losses can already be obtained at $\MAp$,  at a computational cost comparable to that of $\MA$.

There is a vast body of work on Generalized Linear Models \cite{GLMbook, GLMnotes}. Classically, the focus is on the setting where the transfer function $g'$ or equivalently its inverse $f'$ known as the {\em link function} are known. To every such transfer function, one can associate a convex macthing loss $\ell_g$ \cite{AuerHW95}.  The resulting program can be solved using the iteratively reweighted least squares algorithm \cite{GLMnotes, MN89}. We will denote the  set of convex matching losses arising in this manner by $\Lglm$ s. The more challenging setting is where the link function is unknown. This is sometimes called the SIM (single index model) problem in the literature. To our knowledge, all work with provable guarantees (prior to the work of \cite{omni}) hold only for the realizable setting: the data are generated so that $\E[\y^*|\x] = g'(h(\x))$ for some $h \in \Lin(\mC)$, both $g'$ and $h$ are unknown.  The first provable guarantees in this scenario were given by Kalai \cite{Kalai04}, who finds a hypothesis that is close in squared error to the ground truth $g'\circ h$, and is represented as  branching program. The elegant Isotron algorithm for this problem was introduced and analyzed  in \cite{KalaiS09, KakadeKKS11}, it is a proper learning algorithm where the output is of the form $u \circ \tilde{h}$, where $\tilde{h} \in \Lin(\mC)$ and $u$ is monotone.

Both our work and the work of \cite{omni} depart from these works in that they do not require the realizability assumption. We give a single predictor $\tf$, with the guarantee that for any transfer function $g'$ (satisfying certain technical conditions), the matching loss $\ell_g$ of the post-processing predictor is comparable to that incurred by the best $h \in \Lin(\mC)$. Under the realizability assumption, for any Lipschitz transfer function $g'$, bounding the matching loss implies a squared loss bound \cite{VKnotes}. In the agnostic setting, squared loss and bounds on the matching loss are incomparable. The works of \cite{SSS11, GoelKKT17} apply polynomial kernel techniques to the problem of squared loss minimization when the transfer function is sigmoid or the ReLU for families of losses including $\ell_1$ and the squared loss. In these settings, a polynomial dependence on the accuracy parameter $\eps$ is not possible.

Bregman divergences and Pythagorean theorems for them are studied in information geometry \cite{Nielsen, coverT}, although the term is broadly used for inequalities arising from projections onto convex bodies. That a stronger guarantee than omniprediction holds true for the squared loss was observed in the work of \cite[Lemma 8.4]{omni}. This guarantee was subsequently shown to hold even with degree-$2$ multicalibration \cite[Proposition A.1]{GopalanKSZ22}. Our results generalize this to all GLM losses, and only assumes calibrated multiaccuracy, while also showing that for such losses, Pythagorean theorems are equivalent to loss OI.

\mpk{Write about concurrent work.}

\section{Preliminaries}
\label{sec:prelims}

Let $\mD$ be a distribution on labelled examples $(\x, \y^*)$ comprising of points $\x$ from a domain $\X$ and binary outcomes\footnote{All our results can be extended to multi-class setting where there are finitely many distinct classes, but we work with the binary setting for simplicity.} $\y^* \in \zo$.
We let $\mD_\X$ denote the marginal distribution over $\X$.
We will occasionally refer to the distribution $\mD$ as Nature.
We assume sample access to Nature. $\Ber(p)$ denotes the Bernoulli distribution on $\{0,1\}$ with parameter $p$. For a real valued function $f: \mT \to \R$, let $\infnorm{f} = \max_{\mT}|f(x)|$. For a family of such functions $\mF$, let $\infnorm{\mF} = \max_{f \in \mF}\infnorm{f}$.

\paragraph{Predictors:}
A predictor is a function $\tf:\X \rgta [0,1]$ be a predictor, where $\tf(x)$ is interpreted as an estimate of the label being $1$, conditioned on $x$. For a predictor $\pt:\X \to [0,1]$, 
we define the distribution $(\x,\yt) \sim \mD(\tp)$ on $\X \times \zo$ where $\x \sim \mD_\X$ is sampled according to Nature's marginal distribution over inputs and conditioned on $\x$, $\yt \sim \Ber(\pt(\x))$ so that
\begin{gather*}
\pt(x) = \E[\yt \vert \x = x].
\end{gather*}
We use $\ps(x) \in [0,1]$ to denote the Bayes optimal prediction for an individual $x \in \X$.
\begin{gather*}
    \ps(x) = \E[\ys \vert \x = x]
\end{gather*}
In other words, using the optimal predictor $\D(\ps) = \D$ recovers the true distribution, Nature.

\paragraph{Calibration:}
Intuitively, a predictor is calibrated if, conditioned on the prediction $\pt(\x) = v$, the expected outcome is close to $v$.
\begin{gather*}
    \E[\ys \vert \pt(\x) = v] \approx v
\end{gather*}
Formally, we quantify approximate calibration through \emph{expected calibration error}.
\begin{definition} (ECE and Approximate calibration)
    We define the expected calibration error (ECE) of a predictor $\tp$ as
    \[ \ECE(\tp) = \E_{\tp(\x)} \lt| \E_{\y |\tp(\x)}[\y - \tp(\x)] \rt|. \]
    For $\alpha \geq 0$, a predictor $\tp: \X \rgta [0,1]$ is $\alpha$-calibrated if $\ECE(\tp) \leq \alpha$.
\end{definition}
A predictor $\tp$ is perfectly calibrated if $\alpha =0$, so that $\E_\mD[\y^*|\tp(\x) =v] = v$.
While the notion of approximate calibration is well-defined for all predictors, checking for calibration efficiently requires the predictor to be discretized.
When efficiency is a consideration, we will assume that the supported values of the predictor are multiples of some $\delta \in [0,1]$; such assumptions are standard in the calibration literature \cite{FosterV98, hkrr2018}.
For such predictors, one can check for $\alpha$-calibration given black-box access to $\tp$ in time $\poly(1/\alpha, 1/\delta)$, using labeled samples.

Following \cite{GopalanKSZ22}, we will allow for weighted notions of calibration, 
parametrized by a family of weight functions $\mW = \{w:[0,1] \rgta \R\}$. 
Intuitively, we think of a weight function as highlighting predictions belonging to certain regions of $[0,1]$.
\begin{definition}
    Let $\mW = \{w:[0,1] \rgta \R\}$ be a family of weight functions. For a predictor $\tp:\X \rgta [0,1]$ we define
    \[\CalErr(\mW, \tp) = \max_{w \in \mW} \lt| \E_\mD[ w(\tp(\x))(\sy - \tp(\x))] \rt|. \]
\end{definition}

We collect some simple properties of  weighted calibration in the next lemma, the proof is in Section \ref{app:prelims}. The first is that
$\ECE$ is captured by considering weight functions bounded in absolute value by $1$. 
The second is that $\alpha$-calibration implies a bound on $\CalErr(\mW, \tp)$ for any bounded family of weights $\mW$.  

\begin{lemma}
\label{lem:fully}
\begin{enumerate}
\item Let $\mW^f$ denote the space of all functions $w: [0,1] \rgta [-1,1]$.
Then
\[ \ECE(\tp) = \CalErr(\mW^f, \alpha).\] 
\item If $\tp$ is $\alpha$-calibrated, then for any family $\mW$ of weight functions, 
\[ \CalErr(\mW, \tp) \leq   \infnorm{\mW}\alpha. \]
\end{enumerate}
\end{lemma}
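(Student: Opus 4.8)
The plan is to reduce both parts to a single one-dimensional object: the \emph{conditional residual} of $\tp$. For $v$ in the (discretized) support of $\tp$, set $r(v) = \E_\mD[\sy - v \mid \tp(\x) = v] = \E_\mD[\sy \mid \tp(\x) = v] - v$. By the tower property of expectation, for any weight function $w:[0,1]\rgta\R$,
\[
\E_\mD[w(\tp(\x))(\sy - \tp(\x))] = \E_{v \sim \tp(\x)}\big[w(v)\, r(v)\big],
\]
and taking $w \equiv \mathrm{sign}$ of the inner term shows $\ECE(\tp) = \E_{v}\big[|r(v)|\big]$. Everything else is an application of the triangle inequality to this identity.

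For the first part I would bound $\CalErr(\mW^f, \tp)$ from both sides. Upper bound: for every $w \in \mW^f$ we have $|w(v)| \le 1$ pointwise, so $\big|\E_v[w(v) r(v)]\big| \le \E_v[|w(v)|\,|r(v)|] \le \E_v[|r(v)|] = \ECE(\tp)$, and maximizing over $w$ gives $\CalErr(\mW^f, \tp) \le \ECE(\tp)$. Matching lower bound: the particular function $w^\star(v) = \sign(r(v))$ takes values in $\{-1,0,1\}\subseteq[-1,1]$, hence lies in $\mW^f$, and $\E_v[w^\star(v) r(v)] = \E_v[|r(v)|] = \ECE(\tp)$, so $\CalErr(\mW^f, \tp) \ge \ECE(\tp)$. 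Combining the two inequalities gives $\ECE(\tp) = \CalErr(\mW^f, \tp)$.

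For the second part, suppose $\tp$ is $\alpha$-calibrated, i.e.\ $\ECE(\tp) = \E_v[|r(v)|] \le \alpha$. Fix any weight family $\mW$ and any $w \in \mW$; then $|w(v)| \le \infnorm{w} \le \infnorm{\mW}$ for all $v$, so
\[
\big|\E_v[w(v) r(v)]\big| \le \infnorm{\mW}\,\E_v[|r(v)|] = \infnorm{\mW}\,\ECE(\tp) \le \infnorm{\mW}\,\alpha .
\]
Taking the maximum over $w \in \mW$ yields $\CalErr(\mW, \tp) \le \infnorm{\mW}\,\alpha$.

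There is essentially no obstacle here; the statement is elementary once the residual identity is in place. The only point deserving a word of care is the measurability of $w^\star = \sign(r)$ invoked in the lower bound of the first part, but this is immediate since $r$ is a measurable function of $v$ (and for a discretized predictor the expectation over $v$ is just a finite sum, so the point is moot).
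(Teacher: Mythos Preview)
Your proof is correct and follows essentially the same approach as the paper: both arguments hinge on the optimal weight $w^\star(v) = \sign(r(v))$ to match the upper bound, and the paper's proof of Part~(2) (normalizing $\mW$ by $\infnorm{\mW}$ and invoking Part~(1)) is just a repackaging of your direct pointwise bound. Your exposition via the residual $r(v)$ is slightly more explicit, but there is no substantive difference.
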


We will sometimes use weaker notions of calibration. An important special case is where we take $\mW_1$ to be the set of all $1$-Lipschitz weight functions bounded in the range $[-1,1]$. We say that a predictor $\tp$ is $\alpha$-smoothly calibrated if it $\CalErr(\mW_1, \tp) \leq \alpha$.

\eat{
\mpk{$d$ may be confused for the dimension of $\X$.  Let's use something else.  $m$?  $b$?  Is multiclass just for generality, or is there a significance in any results for it?}
We use $\bE_d = \{e_1, \ldots, e_d\}$ to denote the standard basis for $\R^d$, which can be thought of as one-hot encodings of the labels. We denote $(\x, \y^*) \sim \mD^*$ to denote a draw from this distribution, where $\x \in \X$ and $\y^* \in \bE_d$ denotes the one hot encoding of the label. 
Let $\Delta_d$ denote the space of probability distributions on $d$ labels. 
}

\paragraph{Loss functions and decision functions:} 
A loss function is a function $\ell: \zo \times \R \rgta \R$. For instance, we define the squared loss by $\ell_2(y, t) = \|y -t\|_2^2$ and the $\ell_p$ loss by $\ell_p(y, t) = \|y - t\|_p^p$. We define $b_\ell$, the Lipschitz constant of $\ell$, to be the smallest constant so that $\abs{\ell(y, t_1) - \ell(y, t_2)} \leq b_\ell |t_1 - t_2|$.
We let $\Lip_{b}$ denote the set of all $b$-Lipschitz functions. We say that a loss $\ell$ is convex, if for each $y \in \zo$, $\ell(y,t)$ is a convex function of $t$. In a generic loss minimization problem, given a loss function $\ell$ and a class $\mH$ of hypotheses, one tries to find the hypothesis  $h \in \mH$ which minimizes $\E[\ell(\y, h(\x))]$. We extend the definition of $\ell$ via linearity so that the first argument can take values in $[0,1]$. We define
\[ \ell(p,t) = \E_{\y \sim \Ber(p)}[\ell(\y, t)] = p\cdot \ell(1,t) + (1 - p)\cdot \ell(0,t). \]

 A decision function is a function $k:[0,1] \rgta \R$. We think of $k$ as taking predictions $p \in [0,1]$ from a predictor and mapping them to actions $k(p) \in \R$. Decision functions are used to select a suitable action for a loss function, given a prediction of the distribution of labels. For a loss $\ell$, we define the Bayes-optimal decision function $k_\ell: [0,1] \rgta \R$ by
\[ k_\ell(p) = \argmin_{t \in \R}\ell(p,t). \]
For proper losses like the squared error $(y - t)^2$, $k_\ell$ is simply the identity function.
For the $\ell_1$ loss $\card{y - t}$, $k_{\ell_1}(p)$ rounds $p$ to the nearest value in $\set{0,1}$.

\paragraph{Hypotheses:} A bounded hypothesis class is a family of functions $\mC \subseteq \{ c: \X \rgta [-1,1]\}$. We will assume that $\mC$ contains the constant function $1$ and is closed under negation. 
Our results will typically assume some learnability properties of the class $\mC$, such as having bounded dimension and being weakly learnable.
We define the class $\Lin(\mC, B)$ to contain all functions of the form
\[ h(x) = \sum_{c \in \mC} w_c c(x), \ \ \sum_{c \in \mC}|w_c| \leq B. \]
Note that $|h(x)| \leq B$ for all $h \in \Lin(\mC, B)$.  We will consider loss minimization problems with the hypothesis class $\mH = \Lin(\mC, B)$ (e.g linear or logistic regression). Here $B$ can be viewed as a regularization parameter.

\paragraph{Multicalibration:}
Originally introduced as a form of ``multi-group'' fairness \cite{hkrr2018}, \emph{multicalibration} and related notions have seen application beyond fair prediction in recent years.
Intuitively, multicalibration requires that the predictions of $\pt$ appear calibrated even when we restrict our attention to structured subpopulations.
\cite{hkrr2018} formalizes the collection of subpopulations through a concept class $\C$. Importantly, the multicalibration guarantee holds simultaneously for every $c \in \C$.

First, we define a weaker notion called multiaccuracy \cite{hkrr2018, kgz}, which requires that predictions appear accurate in expectation (unbiased) over each $c \in \C$.
\begin{definition}
    Let $\mC = \{c:\X \rgta [-1,1]\}$ be a family of hypotheses and $\alpha \geq 0$. We say that the predictor $\tf:\X \rgta [0,1]$ is $(\mC, \alpha)$-multiaccurate if for every $c \in \mC$ it holds that
    \[\lt| \E_{\mD}[ c(\x)(\sy - \tf(\x))] \rt| \leq \alpha \]
\end{definition}

Multicalibration strengthens both calibration and multiaccuracy, requiring approximate calibration over each $c \in \C$.
We adapt the definitions in \cite{hkrr2018, omni} to our notion of approximate calibration.
\begin{definition}
    Let $\mC = \{c:\X \rgta [-1,1]\}$ be a family of hypotheses and $\alpha \geq 0$. We say that the predictor $\tf:\X \rgta [0,1]$ is $(\mC, \alpha)$-multicalibrated if for every $c \in \mC$ it holds that
    \[\E_{\tp(\x)}\lt| \E_{\y|\tp(\x)}[c(\x)(\sy - \tf(\x))] \rt| \leq \alpha \]
\end{definition}

By averaging over the predicted values, we can see that $(\C,\alpha)$-multicalibration implies $(\mC, \alpha)$-multiaccuracy. Since we assume $1 \in \C$, $(\C,\alpha)$-multicalibration also implies $\alpha$-calibration. 

In defining multiaccuracy and multicalibration, we assume that the hypotheses are bounded by $1$ in absolute value. For general hypotheses families $\mH$, we define the multiaccuracy error as
\[\MAerr(\mH, \tp) = \max_{h \in \mH}\lt[ \lt| \E_{\mD}[h(\x)(\y^* - \tp(\x))]\rt| \rt]. \]
We will generally reserve the term $(\mC, \alpha)$-multiaccuracy to denote a bounded hypothesis class $\mC$ where $\MAerr(\mC, \tp) \leq \alpha$. The hypotheses classes $\mH$ most relevant to us are of the form  $\mH = \Lin(\mC, B)$. For these, we can derive bounds on the multiaccuracy error from bounds for the base hypotheses in $\mC$, that decay linearly with $B$. The proof is via linearity of expectation.

\begin{lemma}
\label{lem:linear}
    If the predictor $\tf$ is $(\mC, \alpha)$-multiaccurate, then for $B \geq 1$ and $\mH = \Lin(\mC, B)$ we have \[ \MAerr(\mH, \tp) \leq B\alpha.\]
\end{lemma}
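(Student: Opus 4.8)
The plan is to unpack the definition of $\MAerr(\mH, \tp)$ for $\mH = \Lin(\mC, B)$, expand any $h \in \mH$ in terms of the base hypotheses via its coefficient vector, and push the expectation inside the finite sum by linearity. Concretely, fix $h \in \Lin(\mC, B)$, so $h(x) = \sum_{c \in \mC} w_c c(x)$ with $\sum_{c \in \mC} |w_c| \le B$. Then
\[
\E_{\mD}[h(\x)(\y^* - \tp(\x))] = \E_{\mD}\Big[\Big(\sum_{c \in \mC} w_c c(\x)\Big)(\y^* - \tp(\x))\Big] = \sum_{c \in \mC} w_c\, \E_{\mD}[c(\x)(\y^* - \tp(\x))].
\]
Taking absolute values and applying the triangle inequality gives
\[
\big|\E_{\mD}[h(\x)(\y^* - \tp(\x))]\big| \le \sum_{c \in \mC} |w_c| \cdot \big|\E_{\mD}[c(\x)(\y^* - \tp(\x))]\big| \le \sum_{c \in \mC} |w_c| \cdot \alpha \le B\alpha,
\]
where the middle inequality uses $(\mC,\alpha)$-multiaccuracy and the last uses $\sum_{c} |w_c| \le B$. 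Since this holds for every $h \in \mH$, taking the maximum over $h$ yields $\MAerr(\mH, \tp) \le B\alpha$, as claimed.

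The only genuine subtlety — and the one point I would be careful to address in the write-up — is justifying the interchange of expectation and summation: if $\mC$ is finite this is trivially linearity of expectation, and if $\mC$ is countably infinite it follows from dominated convergence, since $\sum_{c}|w_c|\,|c(\x)|\,|\y^* - \tp(\x)| \le B \cdot 1 \cdot 1 = B$ is a uniform integrable bound (using that every $c$ is bounded by $1$ and $\y^*, \tp(\x) \in [0,1]$). The hypothesis $B \ge 1$ is not actually needed for the inequality itself — the argument goes through for any $B \ge 0$ — but it is harmless and presumably stated so that the bound composes cleanly with other lemmas where $\Lin(\mC,B) \supseteq \mC$ is wanted. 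No step here is a real obstacle; the lemma is a one-line consequence of linearity and the triangle inequality, and the proof is included only for completeness.
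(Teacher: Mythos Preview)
Your proof is correct and matches the paper's approach exactly: the paper simply remarks that ``the proof is via linearity of expectation,'' and what you have written is precisely that argument spelled out, with the triangle inequality and the $\ell_1$ bound on the weights. Your extra care about justifying the interchange of sum and expectation, and your observation that $B \ge 1$ is not actually needed for the inequality, are both valid side remarks but go beyond what the paper records.
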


\paragraph{Omnipredictors:} 

The notion of omniprediction introduced by \cite{omni} asks for a single predictor which can do as well as the best hypothesis  in a hypothesis class $\mH$ for a family $\mL$ of loss functions. 

\begin{definition}
    We say that the predictor $\tf:\X \rgta [0,1]$ is an $(\mL, \mH, \delta)$-omnipredictor if for every loss $\ell \in \mL$ and hypothesis $h \in \mH$, 
    \[ \E[\ell(\y^*, k_\ell(\tf(\x)))] \leq \E[\ell(\y^*, h(\x))] + \delta.\] 
\end{definition}

\paragraph{Outcome Indistinguishability:}

Outcome indistinguishability  introduced by \cite{oi} provides an elegant framework for reasoning about the quality predictions made by a predictor $\tf$, by measuring their ability to fool statistical tests when nature's labels $\sy$ and replaced by simulated labels $\ty$.
The notion is parameterized by a class of algorithms $\mathcal{A} \subseteq \set{a:\X \times \zo \times [0,1] \to [-1,1]}$, whose goal is to ``distinguish'' Nature's distribution and the modeled distribution.
\begin{definition}[Outcome Indistinguishability]
\label{def:oi}
    A predictor $\pt:\X \to [0,1]$ is $(\mathcal{A},\eps)$-outcome indistinguishable if for every $a \in \mathcal{A}$,
    \begin{gather*}
        \card{\E_{(\x,\ys) \sim \D}[a(\x,\ys,\pt(\x))] - \E_{(\x,\yt) \sim \D(\pt)}[a(\x,\yt,\pt(\x))]} \le \eps.
    \end{gather*}
\end{definition}
In fact, \cite{oi} consider various levels of OI which are defined by the degree of access to the predictions made available to the tests.
In their language, Definition~\ref{def:oi} corresponds to ``sample-access OI'' where the distinguisher receives access to $\x$, $\pt(\x)$, and outcomes sampled either from $\ys \sim \Ber(\ps(\x))$ or $\yt \sim \Ber(\pt(\x))$.

Also of relevance to us are special cases of this model.
The first, so-called ``no-access OI'' corresponds to a restriction where the distinguishers do not receive $\pt(\x)$, and simply has access to either $(\x^*, \y^*) \sim \mD$ or $(\x, \ty) \sim \mD(\tf)$.
Sample-access OI and No-access OI are in tight correspondence with multicalibration and multiaccuracy, respectively \cite{oi}.
Another interesting special case of sample-access OI is when we are given access to $\tp(\x)$ but not to the point $\x$.
Here, the goal is to distinguish between $(\y^*, \tf(\x)) \sim \mD$ and $\ty, \tf(\x)) \sim D(\tf)$.
OI for this model is tightly connected to calibration: for boolean outcomes, it follows that perfect calibration implies that these distributions are identical. \section{Outcome Indistinguishability for loss functions}
\label{sec:oi-loss}

We define notions of outcome indistinguishability for a predictor $\tf$  with regard to distinguishers that are derived from a loss function $\ell$. We allow distinguishers that take on real values, such a function distinguishes two distributions if its expected values differ significantly between them.

We define the notion of Loss OI formally.
Here we compare the difference (between Nature and the predictor's model) in the expected loss suffered when using the hypothesis $c \in \mC$ compared to when using the Bayes-optimal decision function $k_\ell$ based on the predictor $\tf$.
\begin{definition}
    {\bf (Loss OI)} 
    Let $\mL$ be a family of loss functions, $\C$ be a family of hypotheses, and $\eps > 0$.
    For each $\ell \in \mL, c \in \C$, define the distinguisher $u_{\ell,c}: \zo \times [0,1] \times \X \rgta \R$ by
    \begin{align} 
        \label{eq:loss-dist}
        u_{\ell,c}(y, \tp(x), x) = \ell(y, c(x)) - \ell(y, k_\ell(\tp(x)). 
    \end{align}
    We say that the predictor $\tf$ is $(\mL, \C, \eps)$-loss-OI if for every loss $\ell \in \mL$ and hypothesis$c \in \C$, 
    \[ \abs{\E_\mD[u_{\ell, c}(\y^*, \tf(\x), \x)] - \E_{\mD(\tp)}[u_{\ell, c}(\ty, \tf(\x), \x)]} \leq \eps.\] 
\end{definition}

We define two additional, simpler notions.
First is that of decision OI, which informally states that applying the Bayes optimal  decision functions to the predictions of $\tf$ and computing the expected loss cannot distinguish between $\y^*$ and $\ty$.

\begin{definition}
    {\bf (Decision OI)} 
    Let $\mL$ be a family of loss functions, and $\eps > 0$. 
    We say that predictor $\tf$ is $(\mL, \eps)$-decision-OI if for every $\ell \in \mL$ it holds that
    \[ \abs{\E_\mD[\ell(\y^*, k_\ell(\tf(\x))] - \E_{\mD(\tp)}[\ell(\ty, k_\ell(\tf(\x))]} \leq \eps. \]
\end{definition}

Our next notion is hypothesis OI, which stipulates that no hypothesis from $\mC$ results in significantly different expected loss whether the labels come from nature or  the simulation. 

\begin{definition}
    {\bf (Hypothesis OI)} 
    Let $\mL$ be a family of loss functions, $\mC$ a family of hypotheses  and $\eps > 0$.
    We say that the predictor $\tf$ is $(\mL, \mC, \eps)$-hypothesis-OI for $\eps \geq 0$ if for loss $\ell \in  \mL$ and every hypothesis $c \in \mC$ it holds that
    \[ \abs{\E[\ell(\y^*,c(\x))] - \E[\ell(\ty, c(\x))]} \leq \eps. \]
\end{definition}

We show that Loss OI is implied by having both Decision OI and Hypothesis OI simultaneously.
\begin{lemma}
\label{lem:implies}
{(\bf Decomposition lemma)}
    If the predictor $\tf:\X \rgta [0,1]$ is $(\mL, \eps_1)$-decision-OI and $(\mL, \mC, \eps_2)$-hypothesis-OI, then it is  
    $(\mL, \mC, \eps_1 + \eps_2)$-loss-OI.
\end{lemma}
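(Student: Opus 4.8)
The plan is to unfold the definition of the Loss OI distinguisher $u_{\ell,c}$ and split it into the two pieces that correspond exactly to the Hypothesis OI and Decision OI conditions. Fix an arbitrary $\ell \in \mL$ and $c \in \mC$. By linearity of expectation, under either the Nature distribution $\mD$ or the simulated distribution $\mD(\tp)$, the expectation of $u_{\ell,c}(y,\tp(\x),\x) = \ell(y,c(\x)) - \ell(y,k_\ell(\tp(\x)))$ decomposes as $\E[\ell(y,c(\x))] - \E[\ell(y,k_\ell(\tp(\x)))]$. Subtracting the simulated expectation from the Nature expectation and regrouping, the quantity $\big|\E_\mD[u_{\ell,c}(\ys,\tp(\x),\x)] - \E_{\mD(\tp)}[u_{\ell,c}(\ty,\tp(\x),\x)]\big|$ is at most
\[
\big|\E_\mD[\ell(\ys,c(\x))] - \E_{\mD(\tp)}[\ell(\ty,c(\x))]\big| + \big|\E_\mD[\ell(\ys,k_\ell(\tp(\x)))] - \E_{\mD(\tp)}[\ell(\ty,k_\ell(\tp(\x)))]\big|
\]
by the triangle inequality.

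The first term is precisely the expression controlled by $(\mL,\mC,\eps_2)$-hypothesis-OI (applied to this $\ell$ and $c$), so it is at most $\eps_2$; the second term is precisely the expression controlled by $(\mL,\eps_1)$-decision-OI (applied to this $\ell$), so it is at most $\eps_1$. Adding these gives the bound $\eps_1 + \eps_2$, and since $\ell$ and $c$ were arbitrary, $\tp$ is $(\mL,\mC,\eps_1+\eps_2)$-loss-OI. I would present this as a two- or three-line displayed chain of (in)equalities followed by the two citations to the hypothesis/decision OI definitions.

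There is essentially no real obstacle here: the proof is a direct application of linearity of expectation and the triangle inequality, and the only thing to be careful about is bookkeeping — making sure the two split terms match the Hypothesis OI and Decision OI expressions verbatim (in particular that both expectations in each term are taken over the matching pair of distributions $\mD$ and $\mD(\tp)$, which share the marginal $\mD_\X$ so that $\tp(\x)$ and $k_\ell(\tp(\x))$ are well-defined random variables under both). If one wanted to be pedantic, one could first record the elementary identity $u_{\ell,c}(y,\tp(x),x) = v(y,c(x)) - w(y,\tp(x))$ with $v,w$ the two loss evaluations, and note that $v$ depends on $(y,\x)$ only through $c(\x)$ while $w$ depends on $(y,\x)$ only through $\tp(\x)$, which is exactly why the two OI sub-conditions suffice; but this is not strictly necessary for the argument.
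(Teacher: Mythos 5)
Your proof is correct and is essentially identical to the paper's: both unfold $u_{\ell,c}$, regroup the two loss-evaluation terms across the Nature/simulation expectations, and apply the triangle inequality to bound each piece by the hypothesis-OI and decision-OI parameters respectively. The closing remark about which term depends on $c(\x)$ versus $\tp(\x)$ is a nice explanatory aside but matches the paper's reasoning.
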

\begin{proof}
For each $\ell \in \mL$ and $c \in \C$ we can write
\begin{align} 
\label{eq:rewrite}
&\E[u_{\ell,c}(\y^*, \tf(\x), \x)] - \E[u_{\ell,c}(\ty, \tf(\x), \x)]\notag\\
&=\E[\ell(\y^*,c(\x)) - \ell(\y^*, k_\ell(\tf(\x)))] - \E[\ell(\ty,c(\x)) - \ell(\ty, k_\ell(\tf(\x)))]\notag\\
&= \E[(\ell(\y^*,c(\x))] -\E[\ell(\ty,c(\x))] + \E[\ell(\ty, k(\tf(\x)))] - \E[\ell(\y^*, k_\ell(\tf(\x)))].
\end{align}
Hence by the triangle inequality, 
\begin{align*}
    \abs{\E[u_{\ell,c}(\y^*, \tf(\x), \x)] - \E[u_{\ell,c}(\ty, \tf(\x), \x)]} &\leq \abs{\E[(\ell(\y^*,c(\x))] -\E[\ell(\ty,c(\x))]} \\
    & + \abs{\E[\ell(\ty, k(\tf(\x))))] - \E[\ell(\y^*, k_\ell(\tf(\x)))] } \\
    &\leq \eps_1 + \eps_2.
\end{align*}
where the first term is bounded by hypothesis-OI and the second is bounded by decision-OI.
\end{proof}

\subsection{Loss-OI implies Omniprediction}
\label{sec:separation}

Our interest in the notion of loss-OI stems from the fact that it implies omniprediction.

\begin{proposition}[Formal Restatement of Proposition~\ref{result:LOI2OP}]
    \label{prop:omni}
    If the predictor $\tf:\X \rgta [0,1]$ is $(\mL, \mC, \eps)$-loss-OI, then it is an $(\mL, \mC, \eps)$-omnipredictor.  
\end{proposition}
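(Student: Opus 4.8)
The plan is to exploit the observation emphasized just above (Equations~\eqref{eq:bayes}--\eqref{eq:bayes2}): \emph{in the simulated world $\mD(\tf)$, the predictor $\tf$ is itself the Bayes optimal predictor}, so each distinguisher $u_{\ell,c}$ has nonnegative expectation under $\mD(\tf)$. Loss-OI then transports this nonnegativity, up to an additive $\eps$, to Nature's distribution $\mD$, and unwinding the definition of $u_{\ell,c}$ under $\mD$ turns that into precisely the omniprediction inequality. So the proof is short and the ``Bayes-optimal-in-simulation'' step is the only real content; there is no genuine obstacle, only a couple of routine checks to make sure nothing is lost.

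Concretely, I would fix an arbitrary $\ell \in \mL$ and $c \in \mC$ and argue in three steps. First, I would establish $\E_{\mD(\tf)}[u_{\ell,c}(\ty, \tf(\x), \x)] \ge 0$: conditioning on $\x = x$ and using the linear extension $\ell(p,t) = \E_{\y \sim \Ber(p)}[\ell(\y,t)]$, the inner expectation over $\ty \sim \Ber(\tf(x))$ equals $\ell(\tf(x), c(x)) - \ell(\tf(x), k_\ell(\tf(x)))$, which is $\ge 0$ because $k_\ell(\tf(x)) = \argmin_{t}\ell(\tf(x),t)$ and the range of $c$ lies in the domain over which $k_\ell$ optimizes; averaging over $\x \sim \mD_\X$ preserves nonnegativity. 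Second, I would invoke $(\mL,\mC,\eps)$-loss-OI to get
\[ \E_\mD[u_{\ell,c}(\y^*, \tf(\x), \x)] \;\ge\; \E_{\mD(\tf)}[u_{\ell,c}(\ty, \tf(\x), \x)] - \eps \;\ge\; -\eps. \]
Third, I would expand the left-hand side using the definition of $u_{\ell,c}$,
\[ \E_\mD[u_{\ell,c}(\y^*, \tf(\x), \x)] \;=\; \E_\mD[\ell(\y^*, c(\x))] - \E_\mD[\ell(\y^*, k_\ell(\tf(\x)))], \]
and rearrange to obtain $\E_\mD[\ell(\y^*, k_\ell(\tf(\x)))] \le \E_\mD[\ell(\y^*, c(\x))] + \eps$. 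Since $\ell \in \mL$ and $c \in \mC$ were arbitrary, this is exactly the statement that $\tf$ is an $(\mL,\mC,\eps)$-omnipredictor.

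The step I expect to carry all the weight is the first one --- recognizing that optimality of $k_\ell$ against the \emph{simulated} label distribution $\Ber(\tf(x))$ makes $\E_{\mD(\tf)}[u_{\ell,c}]$ nonnegative pointwise in $x$ --- but the excerpt has already set this up, so the remaining work is bookkeeping: checking that pointwise nonnegativity survives the outer expectation over $\x$, and that the error parameter is not amplified (it is not; the single application of loss-OI contributes exactly $\eps$). The converse direction --- that omniprediction does \emph{not} imply loss-OI --- is not proved here; it is handled by the separation for the $\ell_4$ loss referenced in Section~\ref{sec:separation}.
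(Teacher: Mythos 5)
Your proposal is correct and follows essentially the same route as the paper's own proof: pointwise nonnegativity of $\E_{\ty\sim\Ber(\tf(x))}[u_{\ell,c}(\ty,\tf(x),x)]$ from optimality of $k_\ell$ against the simulated label distribution, then a single application of loss-OI to transfer $-\eps$-nonnegativity to Nature's distribution, then rearranging. The bookkeeping steps you flag (averaging over $\x$, no amplification of $\eps$) match the paper exactly, so there is nothing to add.
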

\begin{proof}
      A consequence of loss-OI is that for every $\ell \in \mL$ and $c \in \mC$, we have
      \begin{align}
      \label{eq:ineq1}
          \E[u_{\ell,c}(\y^*, \tf(\x), \x)] \geq  \E[u_{\ell,c}(\ty, \tf(\x), \x)] - \eps.
      \end{align}
      But for every $x \in \X$, by the definition of the Bayes-optimal decision function $k_\ell$ we have
      \begin{align*}
          \E[u_{\ell,c}(\ty, \tf(\x), \x)|\x = x] = \E[\ell(\ty, c(\x)) - \ell(\ty, k_\ell(\pt(\x))|\x = x] \geq 0
      \end{align*}
      since $k_\ell(\tf(x))$ is defined to be action that minimizes expected loss for $\ty \sim \Ber(\tf(x))$. Averaging over all $\x \sim \mD$ gives 
      \begin{align*}
          \E[u_{\ell,c}(\ty, \tf(\x), \x)] = \E[\ell(\ty, c(\x))] - \E[\ell(\ty, k_\ell(\pt(\x))] \geq 0.
      \end{align*}
      Plugging this into Equation \eqref{eq:ineq1} gives
    \begin{align*}
          \E[u_{\ell,c}(\y^*, \tf(\x), \x)] = \E[\ell(\y^*, c(\x)) - \ell(\y^*, k_\ell(\pt(\x))] \geq -\eps.
    \end{align*}
    Rearranging, we get that for every $\ell \in \mL, c \in \mC$,  
    \begin{align*}
        \E[\ell(\y^*, k_\ell(\pt(\x)))] \leq \E[\ell(\y^*, c(\x))] + \eps. 
    \end{align*}
    hence $\tf$ is an $(\mL, \mC, \eps)$-omnipredictor.
\end{proof}

The converse of this statement is not true. We show that omniprediction does not imply Loss-OI for any class $\mL$ than includes the $\ell_4$ loss. We prove an even stronger statement, that multicalibration does not imply loss-OI. This statement is stronger because of the result of \cite{omni} that multicalibration implies omniprediction for a broad class of convex loss functions. \eat{While their result holds for all convex losses with some mild Lipschitzness property, we will only consider the $\ell_p$ losses for simplicity. 
}
We define the $\ell_p$ loss for all $p \geq 1$ as
\[ \ell_p(y,z) = \fr{p}|y - z|^p \]
where the normalization by $p$ makes it $1$-Lipschitz. Let  $L_p = \{\ell_p\}_{p \ge 1}$. 
We prove the following result which separates multicalibration from loss OI.
\begin{theorem}
\label{thm:sep-mc-oi}
    There exist a distribution $\mD$, a class $\mC$ and a predictor $\tf$ such that 
    \begin{itemize}
        \item $\tf$  is $(\mC, 0)$-multicalibrated, so it is an $(L_p, \mC, 0)$-omnipredictor. 
        \item $\tf$ is not $(\{\ell_4\}, \mC, \eps)$-loss OI for any $\eps < 4/9$.  
    \end{itemize}
\end{theorem}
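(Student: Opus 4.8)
I would prove this by exhibiting a small explicit example rather than arguing abstractly. Let $\X=\{x_1,x_2,x_3\}$, let the predictor be constant, $\tf\equiv 1/2$, and let $\mC=\{\mathbf 1,-\mathbf 1,c,-c\}$ where $c$ is a single hypothesis taking three distinct values in $[-1,1]$ — namely $c(x_1)=1$, $c(x_3)=-1$, and $c(x_2)=r$ for a suitable interior value $r$. Take $\mD$ so that every individual is deterministically labelled (so $\ps\in\{0,1\}$ pointwise), and choose the three point masses so that $\E[\ps(\x)]=1/2$ and $\E[c(\x)(\ps(\x)-1/2)]=0$.

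The first step is to check multicalibration and deduce omniprediction. Since $\tf$ is constant, $(\mC,0)$-multicalibration is equivalent to $\E[c'(\x)(\ps(\x)-1/2)]=0$ for every $c'\in\mC$, and because $\mC$ is built from $\mathbf 1$ and $c$ this is exactly the pair of constraints imposed on $\mD$; a one-line computation with the masses confirms it. By the omniprediction theorem of \cite{omni} applied to $L_p$ (each $\ell_p$ is convex, and $1$-Lipschitz after the $1/p$ normalisation), $(\mC,0)$-multicalibration yields that $\tf$ is an $(L_p,\mC,0)$-omnipredictor, which is the first bullet.

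For the second bullet I would analyse the loss-OI gap of the single distinguisher $u_{\ell_4,c}$. Expanding the definition and using linearity of $\ell$ in its first argument (exactly as in the proof of Lemma~\ref{lem:implies}), the gap equals
\[
\Bigl|\E[(\ps(\x)-\tf(\x))\,\partial\ell_4(c(\x))]-\E[(\ps(\x)-\tf(\x))\,\partial\ell_4(k_{\ell_4}(\tf(\x)))]\Bigr|.
\]
The second expectation involves $\partial\ell_4(k_{\ell_4}(\tf(\x)))$, which is a single constant because $\tf$ is constant, so it vanishes by calibration; hence the gap is $\bigl|\E[(\ps(\x)-1/2)\,\partial\ell_4(c(\x))]\bigr|$. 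Now $\partial\ell_4(t)=\tfrac14\bigl((1-t)^4-t^4\bigr)=\tfrac14-t+\tfrac32t^2-t^3$, and the affine part $\tfrac14-t$ is annihilated by $\E[\ps(\x)-1/2]=0$ together with the multicalibration identity $\E[c(\x)(\ps(\x)-1/2)]=0$. So the gap reduces to $\bigl|\E[(\ps(\x)-1/2)(\tfrac32c(\x)^2-c(\x)^3)]\bigr|$: the correlation of the residual with the part of $\partial\ell_4$ that is \emph{not} affine in $c$ on the support of $c$. Because $c$ takes three distinct values this part is genuinely nonlinear there and survives the multicalibration constraint, so the correlation can be forced to a fixed positive constant; substituting the chosen masses and the value $r$ (picked to maximise this correlation subject to feasibility) gives the stated bound $\ge 4/9$.

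I expect the only real work to be this last optimisation: it amounts to maximising a cubic functional of the signed residual measure $\mu(x)=p_\X(x)(\ps(x)-1/2)$ over all $\mu$ orthogonal to $\mathbf 1$ and to $c$ that are feasible in the sense $\ps(x)=1/2+\mu(x)/p_\X(x)\in\{0,1\}$ — a small constrained problem whose optimum sits at an extreme configuration and simultaneously pins down $r$, the masses, and the constant. Beyond that, one should double-check that $(\mC,0)$-multicalibration meets exactly the hypotheses under which \cite{omni} derives omniprediction for $L_p$, so that the two bullets hold for the same triple $(\mD,\mC,\tf)$.
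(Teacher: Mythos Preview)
Your strategy is exactly the paper's: take a constant predictor so that multicalibration collapses to a pair of linear constraints, use calibration to kill the decision-OI term, and identify the loss-OI gap with the correlation of the residual $\ps-\tp$ against the non-affine part of $\partial\ell_4\circ c$. The paper simply instantiates this with a concrete example rather than an optimisation: it switches to $\pm 1$ labels, takes $\X=\{\pm1\}^3$ uniform, $\ys=x_1x_2x_3$, $\tp\equiv 0$, and $\mC=\{\sum\alpha_ix_i:\sum|\alpha_i|=1\}$; the gap is then read off from the Fourier coefficient of $\chi$ in $c(\x)^3$ for $c(\x)=(x_1+x_2+x_3)/3$.

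The place where your plan breaks is the last ``small constrained optimisation'': with three points, $\{0,1\}$ labels, and $c$ taking the values $1,r,-1$, your constraints force $\nu_1=-(1{+}r)\nu_2/2$, $\nu_3=(r{-}1)\nu_2/2$ with $|\nu_2|=1/4$, and the gap becomes
\[
\tfrac14\bigl|-r^3+\tfrac32 r^2+r-\tfrac32\bigr|.
\]
This vanishes at $r=\pm1$ and is maximised near $r\approx-0.264$ at roughly $0.41$, which is strictly below $4/9\approx0.444$. So the optimisation you defer does \emph{not} deliver the stated constant; three support points (with endpoints pinned at $\pm1$ and $\{0,1\}$ labels) are not enough. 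To match the paper you either need to work in the $\pm1$ label convention (where the residual has magnitude $1$ and $\partial\ell_4$ has a different form) or allow $c$ to take at least four values, as the parity example implicitly does. It is also worth checking the paper's own arithmetic at the last step: the displayed calculation there appears to give $2/9$ rather than $4/9$, so the exact constant in the statement may itself warrant scrutiny before you spend effort trying to hit it.
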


The proof which is given in Section \ref{app:oi-loss} uses Fourier analysis on the Boolean cube. 

\subsection{Loss OI from Calibration and Multiaccuracy}
\label{sec:oi-ma+}

In order to analyze the notions of OI, we need to compare the expected loss under different distributions on labels for a certain action. The notion of {\em discrete derivative} of a loss function will aid these comparisons.

\begin{definition}
\label{def:discrete-derivative}
Given a loss $\ell: \zo \times \R \rgta \R$,  define the function $\partial \ell: \R \rgta \R$ as
\begin{align*} 
    \partial{\ell}(t) &= \ell(1, t) - \ell(0,t).
\end{align*}
\end{definition}

The following lemma justifies the analogy to partial derivatives. 
\begin{lemma}
\label{lem:disc-taylor}
For random variables $\y, \y' \in \zo$, and $t \in \R$ we have
\begin{align}
    \label{eq:disc-taylor}
    \E[\ell(\y, t)] - \E[\ell(\y',t)]  = \E[(\y - \y')\partial \ell(t)].
\end{align} 
\end{lemma}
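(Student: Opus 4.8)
The plan is to prove Lemma~\ref{lem:disc-taylor} by a direct computation, exploiting the fact that both $\y$ and $\y'$ take values in $\zo$, so that linearity in the first argument of $\ell$ is exact rather than approximate. Recall that we have extended $\ell$ to $[0,1]$ in the first coordinate via $\ell(p,t) = p\,\ell(1,t) + (1-p)\,\ell(0,t)$, and for a $\zo$-valued random variable $\y$ this gives $\E[\ell(\y,t)] = \P[\y=1]\,\ell(1,t) + \P[\y=0]\,\ell(0,t)$ for each fixed $t$.

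First I would fix $t \in \R$ and write out $\E[\ell(\y,t)] - \E[\ell(\y',t)]$ using this identity: it equals $\P[\y=1]\ell(1,t) + \P[\y=0]\ell(0,t) - \P[\y'=1]\ell(1,t) - \P[\y'=0]\ell(0,t)$. Grouping the coefficients of $\ell(1,t)$ and $\ell(0,t)$ and using $\P[\y=0] = 1 - \P[\y=1]$ (and likewise for $\y'$), the $\ell(0,t)$ terms combine to $(\P[\y'=1] - \P[\y=1])\ell(0,t)$, so the whole expression becomes $(\P[\y=1] - \P[\y'=1])(\ell(1,t) - \ell(0,t)) = (\E[\y] - \E[\y'])\,\partial\ell(t)$, since for a $\zo$-valued variable $\E[\y] = \P[\y=1]$. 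That is exactly the right-hand side $\E[(\y-\y')\partial\ell(t)]$ when $t$ is deterministic.

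The one subtlety is that in the statement $t$ may itself be random (the lemma says "for random variables $\y,\y' \in \zo$, and $t \in \R$", and in all applications $t = k_\ell(\tp(\x))$ or $t = c(\x)$ depends on $\x$, while $\y,\y'$ are $\ys, \ty$ which are conditionally independent given $\x$). So the cleanest route is to condition on the relevant randomness: condition on whatever $t$ depends on (e.g.\ on $\x$), apply the deterministic-$t$ identity above pointwise to get $\E[\ell(\y,t)\mid \x] - \E[\ell(\y',t)\mid \x] = (\E[\y\mid\x] - \E[\y'\mid\x])\,\partial\ell(t)$, and then take the outer expectation over $\x$, using the tower property to obtain $\E[(\y-\y')\partial\ell(t)]$. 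I do not anticipate a genuine obstacle here; the only thing to be careful about is stating precisely the (in)dependence structure so that the conditioning argument is valid — concretely, that $t$ is $\sigma(\x)$-measurable and $\y,\y'$ have well-defined conditional means given $\x$, which holds in every instantiation used in the paper.
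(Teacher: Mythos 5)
Your proof is correct and follows essentially the same route as the paper: both exploit the exact linearity of $\ell(y,t)$ in $y$ over $\zo$ (the paper writes the pointwise identity $\ell(\y,t)=\y\,\partial\ell(t)+\ell(0,t)$ and subtracts, while you expand via $\P[\y=1]$ and $\P[\y=0]$ and regroup — the same algebra). Your remark about random $t$ being handled by conditioning on $\x$ also matches how the paper actually applies the lemma in the proof of Theorem~\ref{thm:dec+model}.
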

\begin{proof}
    By definition
    \[\E[\ell(\y,t)] =  \E[\y \ell(1, t) + (1 - \y)\ell(0,t)] = \E[\y \partial \ell(t)] + \ell(0, t) \] 
    We write a similar expression for $\y'$ and subtract. 
\end{proof}

\eat{
\begin{definition}
Given a loss $\ell: \bE_d \times \Delta_d^* \rgta \R$,  define the functions $\bar{\ell}: \Delta_d^* \rgta \R$ and  $\partial \ell: \Delta_d^* \rgta \Delta^*_d$ as
\begin{align*} 
    \bar{\ell}(t) &= \fr{d}\sum_{i \in [d]} \ell(e_i, t) \\
    \partial \ell(t)_i &= \ell(e_i, t) - \bar{\ell}(t)\\
    &= (1 - \fr{d})\ell(e_i, t) - \fr{d}\sum_{j \neq i}\ell(e_j, t). 
\end{align*}
\end{definition}

We view the range of $\partial \ell$ as $\Delta_d^*$ rather than $\R^d$. This is justified by the following lemma which views $\delta \ell(t)$  as a linear operator in $\Delta_d^*$ mapping $p \in \Delta_d$ to a real value. If we let $u \in \Delta_d$ denote the uniform distributions on labels, then $\partial \ell$ measures the difference in expected loss under $p$ and $u$.

Recall that $\Ber(p)$ denotes the Bernoulli random variables with parameter $p$. 
}

\eat{
\begin{proof}
By the definitions of  $\partial \ell$ and $\bar{\ell}$, for every $i$  
\[ \partial \ell(t)_i + \bar{\ell}(t) = \ell(e_i, t) \] 
Hence for $\y \sim p$, we have
\begin{align*}
    \E_{\y \sim p}[\la \y, \partial \ell(t) \ra] &= \la p , \partial \ell(t) \ra\\
    &= \sum_{i \in [d]}p_i(\ell(e_i, t) - \bar{\ell}(t))\\
    &= \E_{\y \sim p}[\ell(\y, t)] - \bar{\ell}(t)
\end{align*}
By the definition of $\bar{\ell}$, we have
\[ \bar{\ell}(t) = \E_{\y \sim u}[\ell(\y, t)]. \]
which completes the proof.
\end{proof}

This gives the following corollary, which justifies the notation $\partial \ell$, since it behaves like the gradient.

\begin{corollary}
\label{cor:partial}
    Given two distribution $p, p' \in \Delta_d$, variables $\y \sim p$, $\y' \sim p'$. For any $t \in \Delta_d^*$, 
    \begin{align}
        \label{eq:partial}
    \E[\ell(\y, t)] - \E[\ell(\y', t)] = \E[\la \y - \y', \partial \ell(t) \ra]. 
    \end{align}
\end{corollary}
|
}
\eat{
We assume that our loss functions satisfy certain basic boundedness properties. These conditions can be viewed as a suitable re-scaling of the parameter $t$ and the loss function $\ell$ respectively. They allow for most popular losses in machine learning (squared, $ell_p$, logistic, hinge etc.) and do not greatly constrain the structure of $\ell$. 

\begin{definition}
Define $\mL_{\mathrm{all}}$ to be the family of loss functions such that for every $t \in \R$, 
\begin{itemize}
    \item $k_\ell(p) \in [-1,1]$ for $p \in [0,1]$.
    \item For $t \in [-1,1]$, $|\partial \ell(t)| \leq 1$. 
\end{itemize}
\end{definition}
Henceforth all families of losses we consider will be subsets of $\mL_{\mathrm{all}}$. 
}

We now present characterizations of decision-OI  and hypothesis-OI in terms of weighted calibration and multiaccuracy errors for suitably defined classes of functions. Combined with Lemma \ref{lem:implies}, this gives a decomposition of loss OI as a calibration condition and a multiaccuracy condition. 

\begin{theorem}
\label{thm:dec+model}
Let $\mL$ be a family of loss functions and $\mC$ be a hypothesis class.
\begin{enumerate}
    \item  Define the family of hypotheses  $\partial \mL \circ \mC = \{ \partial \ell \circ c\}_{\ell \in \mL, c \in \mC}$. The predictor $\tf:\X \rgta [0,1]$ is $(\mL, \mC, \eps_1)$-hypothesis-OI where $\eps_1 =\MAerr(\mC', \tp)$. 
    \item Define the family of weight functions $\mW' = \{\partial \ell \circ k_\ell\}_{\ell \in \mL}$. The predictor $\tf:\X \rgta [0,1]$ is $(\mL, \eps_2)$-decision-OI where $\eps_2 = \CalErr(\mW', \tp)$. 
\end{enumerate}
\end{theorem}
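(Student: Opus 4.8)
The plan is to derive both parts from a single identity: the discrete Taylor expansion of Lemma~\ref{lem:disc-taylor}, combined with the elementary fact that for any function $g$ of $\x$ alone, $\E[(\ps(\x) - \tp(\x))\,g(\x)] = \E[(\y^* - \tp(\x))\,g(\x)]$, which holds because $\E[\y^* \mid \x] = \ps(\x)$ by definition of the Bayes predictor and $\tp(\x), g(\x)$ are determined by $\x$. Everything else is bookkeeping, and at the end we simply invoke the decomposition Lemma~\ref{lem:implies}.

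For Part~1, fix $\ell \in \mL$ and $c \in \mC$; the quantity controlling hypothesis-OI is $\E[\ell(\y^*, c(\x))] - \E[\ell(\ty, c(\x))]$. I would first condition on $\x = x$: given $\x = x$ the labels satisfy $\y^* \sim \Ber(\ps(x))$ and $\ty \sim \Ber(\tp(x))$ while the action $c(x)$ is a fixed real number, so Lemma~\ref{lem:disc-taylor} applied with $t = c(x)$ gives $\E[\ell(\y^*, c(x)) \mid \x = x] - \E[\ell(\ty, c(x)) \mid \x = x] = (\ps(x) - \tp(x))\,\partial \ell(c(x))$. Taking expectation over $\x \sim \mD_\X$ and applying the substitution identity above yields $\E[\ell(\y^*, c(\x))] - \E[\ell(\ty, c(\x))] = \E[(\y^* - \tp(\x))\,\partial \ell(c(\x))]$. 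Since $\partial \ell \circ c \in \partial \mL \circ \mC$, the right-hand side is at most $\MAerr(\partial \mL \circ \mC, \tp)$ in absolute value by the definition of multiaccuracy error, which gives $(\mL,\mC,\eps_1)$-hypothesis-OI with $\eps_1 = \MAerr(\partial \mL \circ \mC, \tp)$. Because this identity is exact, the same computation run backwards shows hypothesis-OI with parameter $\eps$ forces the corresponding multiaccuracy bound, so one actually gets the ``iff'' stated informally earlier.

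For Part~2 the argument is identical, with the hypothesis action $c(x)$ replaced by the post-processed action $k_\ell(\tp(x))$. Conditioning on $\x = x$ and invoking Lemma~\ref{lem:disc-taylor} with $t = k_\ell(\tp(x))$ gives $\E[\ell(\y^*, k_\ell(\tp(x))) \mid \x = x] - \E[\ell(\ty, k_\ell(\tp(x))) \mid \x = x] = (\ps(x) - \tp(x))\,\partial\ell(k_\ell(\tp(x)))$; averaging over $\x$ and using the substitution identity, $\E[\ell(\y^*, k_\ell(\tf(\x)))] - \E[\ell(\ty, k_\ell(\tf(\x)))] = \E[(\y^* - \tp(\x))\,(\partial\ell \circ k_\ell)(\tp(\x))]$. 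The key observation is that $\partial\ell \circ k_\ell$ is a function of the prediction $\tp(\x)$ only, hence a legitimate weight function in $\mW' = \{\partial\ell \circ k_\ell\}_{\ell \in \mL}$, so the right-hand side is bounded in absolute value by $\CalErr(\mW', \tp)$ by definition of the weighted calibration error, giving $(\mL,\eps_2)$-decision-OI with $\eps_2 = \CalErr(\mW', \tp)$.

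The computations are short; the only thing to get right is the order of operations — Lemma~\ref{lem:disc-taylor} is stated for a \emph{fixed} action $t$, so one must condition on $\x = x$ before applying it (so that $c(x)$, respectively $k_\ell(\tp(x))$, is a constant) and only then integrate over $\x$. I do not anticipate a genuine obstacle beyond this bookkeeping: no boundedness, Lipschitzness, or convexity assumptions on the losses are needed here, since $\MAerr$ and $\CalErr$ are defined as suprema of expectations over the relevant (possibly unbounded) families. Finally, combining Part~1 and Part~2 with the decomposition Lemma~\ref{lem:implies} immediately yields that $\tf$ is $(\mL, \mC, \eps_1 + \eps_2)$-loss-OI.
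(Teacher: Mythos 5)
Your proof is correct and follows essentially the same route as the paper's: condition on $\x = x$, apply Lemma~\ref{lem:disc-taylor} with the appropriate fixed action $t$, integrate over $\x$, and read off the multiaccuracy or weighted-calibration error (the paper writes $\E[(\tf(\x)-\y^*)\partial\ell(\cdot)]$ directly rather than passing through $\ps(\x)-\tp(\x)$, but this is the same observation you call the substitution identity). Your remark about the identity being exact and hence giving the ``iff'' matches the paper's note that the characterization is tight.
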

\begin{proof}
We first prove Part (1).  Conditioned on $\x = x$, by Equation \eqref{eq:disc-taylor} with $t = c(x)$ we can write 
\begin{align*}
    \E[\ell(\ty, c(\x))|\x =x] - \E[\ell(\sy, c(\x))|\x =x] 
    &= \E[(\tf(\x) - \sy) \partial \ell( c(\x) ) |\x = x].
\end{align*}
Hence taking expectations over $\x$ and absolute values,
\begin{align*}
    \lt|\E[\ell(\ty, c(\x))] - \E[\ell(\sy, c(\x))]\rt| 
    &\leq \max_{c \in \mC}\lt|\E[(\tf(\x) - \sy) \partial \ell( c(\x) )]\rt|.
\end{align*}
The LHS corresponds to hypothesis OI, while the RHS to $\mC'$ multiaccuracy error for $\mC' = \{ \partial \ell \circ c\}$. 

We now consider Part (2). Conditioned on $\x = x$, by Equation \eqref{eq:disc-taylor} with $t = k_\ell(\tp(x))$,
\begin{align*}
    \E[\ell(\ty, k_\ell(\tf(x)))|\x = x] - \E[\ell(\sy, k_\ell(\tf(x)))|\x = x] 
    &= \E[(\tf(x) - \sy) \partial \ell (k_\ell (\tf(x)))|\x = x].
\end{align*}
We now take expectations over $\x$, followed by absolute values to get
\begin{align*}
    \E[\ell(\ty, k_\ell(\tf(x)))] - \E[\ell(\sy, k_\ell(\tf(x)))] 
    &= \E[(\tf(x) - \sy) \partial \ell (k_\ell (\tf(x)))|\x = x].
\end{align*}

The LHS corresponds to loss-OI while the RHS measures the weighted calibration error for $\mW' = \{\partial \ell \circ k_\ell\}_{\ell \in \mL}$. 
\end{proof}

It is easy to see that the characterizations above are tight. For instance if $\MAerr(\mC', \tp)$ is larger than $\eps'$, then there exist a $c, \ell$ pair that distinguishes between $\y^*$ and $\ty$ with advantage $\eps'$. 

 \section{Loss-OI for Generalized Linear Models}
\label{sec:glms}

In this section we study Loss OI and omniprediction in the context of  Generalized Linear Models (GLMs), which are well-studied in machine learning and statistics \cite{Kalai04, KalaiS09, KakadeKKS11, GLMbook, GLMnotes, VKnotes, AuerHW95}.  We give a self-contained description of GLMs in Section \ref{sec:glm-prelims}, where we introduce the family $\Lglm$ of convex losses that arise from GLMs. Our main results about GLMs are the following:
\begin{enumerate}
\item We show that calibrated multiaccuracy implies loss-OI for $\Lglm$ (Theorem \ref{thm:glm-ma-c}).
    
\item We show an information-geometric characterization of loss-OI for strictly convex losses in $\Lglm$, showing an equivalence to a Pythagorean theorem for the associated Bregman divergence (Theorem \ref{thm:loss-oi-pyth}).
    
\item As a partial converse, we show that the solution to the $\ell_1$ regularized GLM loss minimization problem over $\Lin(\mC)$ is multiaccurate for $\mC$ (Theorem \ref{thm:glm-ma}).
\end{enumerate}

\subsection{GLMs and loss OI}
\label{sec:glm-prelims}

We start with a transfer function $g': \R \to \R$, which satisfies some desired properties. 

\begin{definition}
\label{def:transfer}
    Let $\mT$ denote the set of transfer functions $g':\R \to \R$ such that:
\begin{enumerate}
    \item $g'$ is continuous and monotonically increasing.  
    \item $\Im(g') \supseteq  [0,1]$.  
\end{enumerate}
\end{definition}

Some common examples of such functions are the $t^k$ for $k$ odd, ReLU, logistic function and the cumulative density function of a continuous distribution such as the Gaussian or the exponential. 

\begin{definition}
        \label{eq:matching-loss}
Given $g' \in \mT$, define the function $g: \R \to \R$ as
\[ g(t) = \int_0^tg'(s)ds. \]
and the {\em matching loss} $\ell_g$ as
\[ \ell_g(y, t) = g(t) - yt = \int_0^t(g'(s) -y)ds. \]
Let $\Lglm = \{ \ell_g(y,t): g' \in \mT\}$. 
\end{definition}

Note that the derivative of $g$ is indeed $g'$, we have just set the constant of integration so that $g(0) = 0$.  A couple of simple observations about our definitions:
\begin{itemize}
    \item The function $g$ is convex since $g'$ is monotonically increasing. Hence $\ell_g(y,t)$ is a convex function in $t$ for every $y \in \zo$. \item We have $\partial \ell (t) = -t$.
\end{itemize}

A generalized linear model \cite{GLMbook, GLMnotes} with the transfer function $g'$ refers to the convex program obtained by trying to minimize the convex loss  $\ell_g \in \Lglm$ over a linear space of hypotheses of the form $\mH = \Lin(\mC, B)$.
  \begin{align}
        \label{eq:breg3}
            \min_{h \in \Lin(\mC, B)}\E[\ell_g(\y, h(\x))].
\end{align}
Lemma \ref{lem:tech-loss} will show that under certain conditions, this can be interpreted as finding the predictor $g' \circ h$ which minimizes a certain Bregman divergence from the Bayes optimal predictor $p^*$ over $h \in \mH$. 

We will show a loss-OI guarantee for $\Lglm$.
In order to apply the decomposition lemma, we need to analyze the function $k_{\ell_g}$. As usual we can extend the first argument to the range $[0,1]$ as $\ell_g(p,t) = g(t) - pt$. Since this is a convex function of $t$, any point where $g'(t) = p$ so that $\partial \ell_g(p,t) /\partial t =0$ is a global minimum. Since $g'$ is monotonically increasing, the set $\{t: g'(t) = p\}$ is an interval, we define $g'^{-1}(p)$ to be the smallest such point in absolute value. It follows that  $g^{-1}$ is a valid choice for $k_\ell$. We define 
 \[ \mT^{-1} = \{ g'^{-1}:[0,1] \to \R \ \forall  g' \in \mT\} .\] The following theorem results from applying Theorem \ref{thm:dec+model}.

\begin{theorem}
\label{thm:glm-ma-c}
    Let $\mC$ be a bounded hypothesis class.
    If the predictor $\tp$ is $(\mT^{-1}, \alpha_1)$-calibrated, and $(\mC, \alpha_2)$-multiaccurate, then for any $B \geq 1$, it is $(\Lglm, \Lin(\mC, B), \alpha_1 + B\alpha_2)$-loss OI.
\end{theorem}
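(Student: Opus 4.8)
The strategy is to invoke the decomposition lemma (Lemma~\ref{lem:implies}) together with the characterization of Theorem~\ref{thm:dec+model}, and then exploit the two special structural facts about GLM losses recorded just before the theorem statement: first, that $\partial \ell_g(t) = -t$ for every $\ell_g \in \Lglm$; and second, that $k_{\ell_g} = g'^{-1} \in \mT^{-1}$. So I would split the bound $\alpha_1 + B\alpha_2$ into a decision-OI piece (contributing $\alpha_1$) and a hypothesis-OI piece (contributing $B\alpha_2$), and handle each separately.

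\textbf{Step 1: Hypothesis OI.} By Theorem~\ref{thm:dec+model}(1), the predictor $\tp$ is $(\Lglm, \Lin(\mC,B), \eps_1)$-hypothesis-OI with $\eps_1 = \MAerr(\partial\Lglm \circ \Lin(\mC,B), \tp)$. Now $\partial \ell_g(t) = -t$, so $\partial \ell_g \circ h = -h$ for every $h \in \Lin(\mC,B)$; hence the class $\partial\Lglm \circ \Lin(\mC,B)$ is just $\Lin(\mC,B)$ up to sign, and since multiaccuracy error is insensitive to negation, $\MAerr(\partial\Lglm \circ \Lin(\mC,B), \tp) = \MAerr(\Lin(\mC,B), \tp)$. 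By Lemma~\ref{lem:linear}, since $\tp$ is $(\mC,\alpha_2)$-multiaccurate and $B \geq 1$, this is at most $B\alpha_2$. So $\tp$ is $(\Lglm, \Lin(\mC,B), B\alpha_2)$-hypothesis-OI.

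\textbf{Step 2: Decision OI.} By Theorem~\ref{thm:dec+model}(2), $\tp$ is $(\Lglm, \eps_2)$-decision-OI with $\eps_2 = \CalErr(\mW', \tp)$ where $\mW' = \{\partial \ell_g \circ k_{\ell_g}\}_{\ell_g \in \Lglm}$. Again using $\partial \ell_g(t) = -t$, we get $\partial \ell_g \circ k_{\ell_g} = -g'^{-1}$, so $\mW' = -\mT^{-1}$, i.e.\ the negations of the inverse transfer functions. Since $\tp$ is $(\mT^{-1}, \alpha_1)$-calibrated by hypothesis, and weighted calibration error is unchanged under negating the weight function, $\CalErr(\mW', \tp) = \CalErr(\mT^{-1}, \tp) \leq \alpha_1$. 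So $\tp$ is $(\Lglm, \alpha_1)$-decision-OI.

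\textbf{Step 3: Combine.} Apply the decomposition lemma (Lemma~\ref{lem:implies}) with $\eps_1 = \alpha_1$ (decision-OI) and $\eps_2 = B\alpha_2$ (hypothesis-OI over the hypothesis class $\Lin(\mC,B)$), concluding that $\tp$ is $(\Lglm, \Lin(\mC,B), \alpha_1 + B\alpha_2)$-loss-OI, as claimed.

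\textbf{Anticipated obstacles.} The argument is essentially bookkeeping once the two structural facts are in hand, so I expect no serious mathematical difficulty. The one point that needs slight care is the interface with the definition of $(\mT^{-1},\alpha_1)$-calibration: the theorem as stated in the excerpt writes ``$(\mT^{-1},\alpha_1)$-calibrated,'' which I read as $\CalErr(\mT^{-1},\tp) \le \alpha_1$ in the weighted-calibration sense of the earlier definition; one should confirm the transfer-inverse functions are a legitimate family of weight functions (they map $[0,1] \to \R$, which is allowed, though boundedness might be an implicit assumption worth noting). A second minor subtlety is that $k_{\ell_g} = g'^{-1}$ was only argued to be \emph{a} valid minimizer (the smallest-in-absolute-value preimage), so I would make sure the distinguisher $u_{\ell_g,c}$ is evaluated with exactly this choice of $k_{\ell_g}$ throughout, consistently with how $\mT^{-1}$ is defined. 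Neither of these is a real obstacle—they are just consistency checks to run while writing the proof.
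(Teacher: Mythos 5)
Your proof is correct and matches the paper's intended argument exactly: the paper presents the theorem with the one-line remark that it ``results from applying Theorem~\ref{thm:dec+model},'' and your Steps~1--3 spell out precisely that application, using $\partial \ell_g(t) = -t$ to reduce hypothesis-OI to $\Lin(\mC,B)$-multiaccuracy (bounded via Lemma~\ref{lem:linear}) and $\partial\ell_g \circ k_{\ell_g} = -g'^{-1}$ to reduce decision-OI to $\mT^{-1}$-weighted calibration, then combining via Lemma~\ref{lem:implies}. Your attention to sign-insensitivity of multiaccuracy and weighted calibration errors is a correct (if implicit in the paper) bookkeeping point, and the two ``anticipated obstacles'' you flag are benign as you suspected.
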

    
For several common functions such as the identity or the ReLU and its variants, $g^{-1}$ is bounded, so we can derive the desired calibration guarantee from $\alpha$-calibration (via Lemmma \ref{lem:fully} and Theorem \ref{thm:dec+model}). But the function $g'^{-1}$ might not always be bounded. For instance when $g'$ is the sigmoid function, and the matching loss is logistic loss, then $k_\ell(p) = \log(p/(1-p))$ is unbounded. But it is possible to relax the boundedness condition to allow for near-optimal post-processing functions which are bounded. For instance, in the case of the sigmoid transfer function, by truncating $k_\ell$ to be bounded by $O(\log(1/\eps))$ in absolute value, we can get within $\eps$ of the loss achieved by $k_\ell$. We refer the reader to Appendix \ref{app:bounded} for a relaxed notion of loss-OI that still gives omniprediciton, which covers the sigmoid and other transfer functions. 

\eat{
\begin{proof}

    For any $g \in \mG_B$, we have 
    \[\partial \ell_g(t) = \ell(1, t) - \ell(0,t) = (g(t) - t) -g(t) = -t. \]
    
    Next we show that $k_{\ell_g}(p) = f'(p)$.  For $p \in [0,1]$, we wish to find $t =k_\ell(p)$ which minimizes ${\ell_g}(p,t) =g(t) - pt$. 
    This is a convex loss, which is minimized when the derivative vanishes: $g'(t) - p =0$, which can be written as $t = f'(p)$. Hence $k_\ell = f'$ is $D$-Lipshcitz, since $|f''(t)| \leq D$, and so is $\partial \ell \circ k_{\ell_g} = -f'(t)$. Hence if $\tp$ is $\alpha$-smoothly calibrated, then it is $(\partial \ell_g \circ f', L\alpha_1)$-calibrated. 
    
    Further, we have $\partial \ell_g \circ c = -c$. Hence being $(\mC, \alpha)$-multiaccurate implies $(\mL_D, \Lin(\mC, B), B\alpha_2)$-hypothesis OI. We now apply the Decomposition lemma (Lemma \ref{lem:implies}). 
\end{proof}
}

\subsection{Loss-OI for GLMs and Pythagorean theorems}

Let $\mT^* \subseteq \mT$ denote the subset of transfer functions $g' \in \mT$ that are strictly increasing and differentiable.
Let $I = \Im(g')$ be its range, so that $g':\R \to I$ is a bijection. The function $g$ is now strictly convex, and $[0, 1] \subseteq I$.

Define the Legendre dual $f:I \to \R$ of $g$ by
\[ f(v) = \max_{t \in \R} v \cdot t - g(t) \ \ \  \text{for} \ v \in I. \]
For $v^* \in I$, let $t^* \in \R $ be such that $v^* = g'(t^*)$. The existence and uniqueness of such a $t^*$ is guaranteed since $g'$ is a bijection. Since the objective $v^*t -g(t)$ is strictly concave, and its derivative $v^* - g'(t)$ vanishes at $t^*$, $t^*$ must be its unique maximizer. So 
\begin{align}
    f(v^*) &= \max_{t \in \R} g'(t^*)t - g(t) = g'(t^*)t^* - g(t^*)\label{eq:f}
\end{align}
One can also verify the following identities:
\begin{align}
  f'(v^*) &= \frac{d (g'(t^*)t^* - g(t^*))}{d t^*} \frac{d t^*}{dv^*}  = \frac{t^*g''(t^*)}{g''(t^*)} = t^*, \label{eq:f'}\\
    f''(v^*) &= \frac{d t^*}{d v^*} = \fr{g''(t^*)}.\label{eq:f''}
\end{align}
Equation \eqref{eq:f'} implies that for $t \in \R$, $f'(g'(t)) =t$, and for $v \in I$, $g'(f'(v)) =v$.

\paragraph{Bregman divergences:}

The Bregman divergence $D_f: I \times I \rgta \R$ corresponding to $f$ is defined as
\[ D_f(v^*, v) = f(v^*) - f(v) -(v^* - v)f'(v). \]
We say that $f$ is $\lambda$-strictly convex if $f''(v) \geq \lambda$ for $v \in \izo$. For such $f$ we have the inequality
\[ f(v^*) \geq f(v) + (v^* -v)f'(v) + \frac{\lambda}{2}(v^* - v)^2.\]
Hence $D_f(v^*,v) \geq \lambda(v^* -v)^2/2$, and it vanishes iff $v^* =v$. Note that if $g'$ is $L$-Lipschitz, then $g''(x) \leq L$ so by Equation \eqref{eq:f''}, 
\[ f''(x) = \fr{g''(x)} \geq \fr{L}.\]

Consider the following Bregman divergence minimization problem:
\begin{align}
\label{eq:breg1}
    \min_{\tp:\X \rgta I}\E_{\x \sim \mD} [D_f(p^*(\x), \tp(\x))] 
\end{align}
Without restrictions on the structure of $\tp$, the unique minimizer is given by $p^*$. Generalized linear models parameterize $\tf$ in a way that renders the resulting program convex. We consider predictors $\tp$ belonging to the class of {\em generalized linear models} $\{ g' \circ h\}_{h \in \Lin(\mC, B)}$ and solve the program
\begin{align}
\label{eq:breg2}
    \min_{h \in \Lin(\mC, B)}\E_{\x \sim \mD}[D_f(p^*(\x), g' \circ h(\x))] 
\end{align}
The key advantage of this choice of (inverse) link function is that it results in a convex optimization problem, that of minimizing the matching loss function $\ell_g$. The following lemma can be derived from the literature  \cite{Nielsen2, AuerHW95, GLMnotes}, but we are unable to find a precise reference, so we present a proof in Appendix \ref{app:glms}.

\begin{lemma}
\label{lem:tech-loss}
For $g' \in \mT^*$, Program \eqref{eq:breg2} is equivalent to the Program \eqref{eq:breg3}.
\end{lemma}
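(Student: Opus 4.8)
The plan is to show that the objective functions of the two programs differ by an additive term that does not depend on $h$, so that they have the same set of minimizers (and their optimal values differ exactly by that term). This is a direct computation using the Legendre duality identities \eqref{eq:f} and \eqref{eq:f'}.

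First I would fix $h \in \Lin(\mC, B)$ and $x \in \X$, write $t = h(x)$ and $v = g'(t) \in I$, and expand $D_f(p^*(x), v) = f(p^*(x)) - f(v) - (p^*(x) - v)f'(v)$. By \eqref{eq:f'} we have $f'(g'(t)) = t$, so the last term is $-(p^*(x) - v)t$; and by \eqref{eq:f} applied with $t^* = t$, $v^* = v = g'(t)$, we get $f(v) = g'(t)t - g(t) = vt - g(t)$. Substituting these and cancelling the $vt$ terms yields
\[
  D_f(p^*(x), g'(h(x))) = f(p^*(x)) + g(h(x)) - p^*(x)h(x) = f(p^*(x)) + \ell_g(p^*(x), h(x)),
\]
where $\ell_g(p,t) = g(t) - pt$ is the linear extension of the matching loss to $p \in [0,1]$.

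Next I would take expectations over $\x \sim \mD$. Since $p^*(\x) = \E[\y \mid \x]$, the tower property gives $\E_\x[p^*(\x)h(\x)] = \E_{(\x,\y)}[\y\,h(\x)]$, hence $\E_\x[\ell_g(p^*(\x), h(\x))] = \E_{(\x,\y)}[\ell_g(\y, h(\x))]$. Therefore
\[
  \E_{\x \sim \mD}[D_f(p^*(\x), g'(h(\x)))] = \E_{\x \sim \mD}[f(p^*(\x))] + \E_{(\x,\y) \sim \mD}[\ell_g(\y, h(\x))].
\]
The term $\E_\x[f(p^*(\x))]$ is independent of $h$ and finite (because $f$ is continuous on $I \supseteq [0,1]$ and $p^*(\x) \in [0,1]$). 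It follows that $h$ minimizes \eqref{eq:breg2} if and only if it minimizes \eqref{eq:breg3}, which is exactly the claimed equivalence; moreover the optimal values differ precisely by $\E_\x[f(p^*(\x))]$, the ``entropy''-type term familiar from the Bregman/GLM correspondence.

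There is no real obstacle here; the argument is a one-line identity followed by the tower property. The only points deserving care are bookkeeping ones: checking that $D_f(p^*(x), g'(h(x)))$ is well-defined, which holds since $p^*(x) \in [0,1] \subseteq I$ and $g'(h(x)) \in I$ automatically as $g':\R \to I$ is a bijection; and noting that the $h$-independent term is integrable, which is immediate from boundedness. I would present these as brief remarks rather than dwelling on them.
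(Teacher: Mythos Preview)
Your proof is correct and follows essentially the same approach as the paper: both establish the pointwise identity $D_f(p^*, g'(t)) = f(p^*) + g(t) - p^*t = f(p^*) + \ell_g(p^*, t)$ via the Legendre duality relations \eqref{eq:f} and \eqref{eq:f'}, then observe that the two objectives differ by the $h$-independent term $\E[f(p^*(\x))]$. The paper packages the identity through the intermediate Fenchel--Young divergence $Y_{f,g}(v^*,t) = f(v^*)+g(t)-v^*t$ and a separate lemma showing $Y_{f,g}(v^*,t)=D_f(v^*,g'(t))$, whereas you compute it directly and also make the tower-property step $\E_\x[\ell_g(p^*(\x),h(\x))]=\E_{(\x,\y)}[\ell_g(\y,h(\x))]$ explicit, which the paper leaves implicit.
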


We relate loss-OI for losses of the form $\ell_g$ to Pythagorean theorems for the Bregman divergence $D_f$. Let $p^*, \tp, p: \X \rgta I$ be  predictors\footnote{Here we will allow predictors taking values in the interval $I$ which contains $[0,1]$}. An exact Pythagorean bound for $(p^*, \tp, p)$ is the statement 
\[ \E[D_f(p^*(\x), p(\x))] = \E[D_f(p^*(\x), \tp(\x))] + \E[D_f(\tp(\x), p(\x))]. \] 
In an approximate bound, the absolute value of the difference of the LHS and RHS is bounded.
In our setting $p^*:\X \to [0,1]$ will be the Bayes optimal predictor, $\tp$ will be calibrated and $\mC$-multiaccurate, while $p = g' \circ h: \X \to I$ belongs to the class of GLMs. The Pythagorean theorem says that minimizing the divergence to $p^*$ for models $p$, is equivalent to minimizing the divergence to $\tp$, which is clearly in the spirit of outcome indistinguishability.  When we take $g'(t) = t$ so that $f(t) = g(t) = t^2/2$, $D_f$ is just the squared Euclidean distance and the Pythagorean theorem has the familiar form of
\[ \E[(p^*(\x) -p(\x))^2] = \E[(p^*(\x) -\tp(\x))^2] + \E[(\tp(\x) -p(\x))^2].\]
Here the statement implies that the error $p^* -\tp$ is {\em orthogonal} to the space spanned by $\tp - p$ over all generalized linear models $p$.

\begin{theorem}
\label{thm:loss-oi-pyth}
    Let $g' \in \mT^*$ and let $f$ be the Legendre dual of $g$. The predictor $\tp$ is  $(\ell_g, \mH, \alpha)$-loss OI iff the following approximate Pythagorean bound holds for every model $h \in \mH$ :
    \begin{align*}
    \lt| \E[D_f(p^*(\x), \tp(\x))] + \E[D_f(\tp(\x), g'(h(\x)))] - \E[D_f(p^*(\x), g'(h(\x)))] \rt| \leq \alpha
    \end{align*}
\end{theorem}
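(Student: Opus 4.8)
The plan is to show that, for each fixed hypothesis $h \in \mH$, the quantity inside the absolute value in the definition of $(\ell_g, \mH, \alpha)$-loss OI equals the negation of the \emph{Pythagorean defect}
$\E[D_f(p^*(\x), \tp(\x))] + \E[D_f(\tp(\x), g'(h(\x)))] - \E[D_f(p^*(\x), g'(h(\x)))]$.
Since both the loss-OI condition and the approximate Pythagorean bound are ``for all $h \in \mH$'' statements with the same threshold $\alpha$, this termwise identity of absolute values yields the claimed equivalence at once. Two facts recorded earlier drive the computation: the matching loss has the affine-in-$y$ form $\ell_g(y,t) = g(t) - yt$, and the Bayes-optimal post-processing is $k_{\ell_g}(p) = g'^{-1}(p) = f'(p)$, where the identification with $f'$ uses $f'(g'(t)) = t$ from Equation~\eqref{eq:f'}. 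Well-definedness causes no trouble: $p^*(\x), \tp(\x) \in [0,1] \subseteq I$ and $g'(h(\x)) \in \Im(g') = I$, so each invocation of $D_f$ and of $f'$ is legitimate.

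First I would compute the loss-OI defect. Fix $h$ and abbreviate $u = u_{\ell_g, h}$, so that $u(y, \tp(x), x) = \ell_g(y, h(x)) - \ell_g(y, f'(\tp(x)))$. Conditioning on $\x = x$, the expectation of $\ell_g(\cdot, t)$ over the outcome depends only on its conditional mean, which is $p^*(x)$ under $\mD$ and $\tp(x)$ under $\mD(\tp)$; since $\ell_g(p, t) = g(t) - pt$ this gives $\E_\mD[\ell_g(\y^*, t) \mid \x = x] - \E_{\mD(\tp)}[\ell_g(\ty, t) \mid \x = x] = (\tp(x) - p^*(x))\, t$ (equivalently, apply Lemma~\ref{lem:disc-taylor} with $\partial \ell_g(t) = -t$). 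Applying this identity once with $t = h(x)$ and once with $t = f'(\tp(x))$, subtracting, and averaging over $\x \sim \mD$ yields
\[ \E_\mD[u(\y^*, \tp(\x), \x)] - \E_{\mD(\tp)}[u(\ty, \tp(\x), \x)] \;=\; \E\big[(\tp(\x) - p^*(\x))\,\big(h(\x) - f'(\tp(\x))\big)\big]. \]

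Next I would expand the Pythagorean defect. Write $a = p^*(\x)$, $b = \tp(\x)$, $v = g'(h(\x))$, and note $f'(v) = h(\x)$ by Equation~\eqref{eq:f'}. Plugging into $D_f(x,y) = f(x) - f(y) - (x-y)f'(y)$, the three $f(\cdot)$ contributions in $D_f(a,b) + D_f(b,v) - D_f(a,v)$ cancel and the $f'(v)$ contributions combine, leaving $(a - b)\big(f'(v) - f'(b)\big) = (p^*(\x) - \tp(\x))\big(h(\x) - f'(\tp(\x))\big)$. Taking expectations, the Pythagorean defect equals $\E[(p^*(\x) - \tp(\x))(h(\x) - f'(\tp(\x)))]$, which is exactly the negation of the loss-OI defect computed above; hence the two absolute values coincide for every $h \in \mH$, proving the theorem. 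There is no genuine obstacle here: the work is bookkeeping, and the only points that warrant care are the identification $k_{\ell_g} = g'^{-1} = f'$ (justified in the discussion preceding the theorem together with \eqref{eq:f'}), the sign tracking in the telescoping of the Bregman terms, and --- for a fully self-contained writeup --- noting that $\E[\ell_g(\y, t) \mid \x]$ is affine in $\E[\y \mid \x]$, which is immediate from $\ell_g(y,t)$ being affine in $y$.
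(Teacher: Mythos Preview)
Your proposal is correct and essentially follows the paper's approach: both arguments verify, for each $h$, that the loss-OI defect and the Pythagorean defect agree in absolute value. The only organizational difference is that the paper converts each expected loss directly into a Bregman divergence via the Fenchel--Young identity $\ell_g(p,t) + f(p) = D_f(p, g'(t))$ and subtracts, whereas you compute the two defects separately and meet in the middle at the common bilinear form $\E[(p^*(\x)-\tp(\x))(h(\x)-f'(\tp(\x)))]$; the content is the same.
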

\begin{proof}
Recall that loss-OI corresponds to fooling the distinguishers
 \[ d_{h}(y, \tp(x), x) = \ell_g(y, h(x)) - \ell_g(y, f'(\tp(x)). \]
 For the distribution $\mD$ we have
 \begin{align}
 \label{eq:loi-1}
     \E_{\mD} [\ell_g(\y^*, h(\x)) - \ell_g(\y^*, f'(\tp(\x))] &=  \E[D_f(p^*(\x), g'(h(\x))) - f(p^*(\x)) -  (D_f(p^*(\x), \tp(\x)) - f(p^*(\x)))]\notag\\
     &= \E[D_f(p^*(\x), g'(h(\x))) -  D_f(p^*(\x), \tp(\x))].
 \end{align}
 For the distribution $\mD(\tp)$ we have
  \begin{align}
  \label{eq:loi-2}
     \E_{\mD(\tp)} [\ell_g(\ty, h(\x)) - \ell_g(\ty, f'(\tp(\x))]     &= \E[D_f(\tp(\x), g'(h(\x))) -  D_f(\tp(\x), \tp(\x))] \notag\\
     &= \E[D_f(\tp(\x), g'(h(\x)))].
 \end{align}
 Loss-OI, which asserts that the LHS of Equations \eqref{eq:loi-1} and \eqref{eq:loi-2} are within $\alpha$, is equivalent to
 \begin{align*}
    \lt|\E[D_f(p^*(\x), g'(h(\x))) -  \E[D_f(p^*(\x), \tp(\x))] - \E[D_f(\tp(\x), g'(h(\x)))]\rt| \leq \alpha
\end{align*}     
\end{proof}
\eat{
The reason we call this a Pythagorean bound is that when $\alpha =0$, it reads
\begin{align*}
    \E[D_f(p^*(\x), g'(h(\x)))] = \E[D_f(p^*(\x), \tp(\x))] + \E[D_f(\tp(\x), g'(h(\x)))].
\end{align*}
When we take $f = x^2/2$, $D_f$ is just the squared Euclidean distance and this gives a familiar Pythagorean theorem  for squared loss.
}
\subsection{Multiaccuracy from regularized GLMs}

Finally, we show that multiaccuracy and GLMs are intimately connected. For an appropriately chosen transfer function $g':\R \to [0,1]$, the optimal solution to the convex program of minimizing $\ell_g$ subject to $\ell_1$ regularization yields a multiaccurate predictor. Indeed the infinity norm of the gradient vector corresponds to the multiaccuracy error of the predictor. 

\begin{theorem}
\label{thm:glm-ma}
Fix a transfer function $g' \in \mT$ whose range is  $[0,1]$ and let $\ell_g$ be its matching loss. Let $h^*$ be the optimal solution to the $\ell_1$-regularized loss minimization problem:
    \begin{align}
\min_{h \in \Lin(\mC)}[\ell_g(\y, h(\x))] + \alpha\sum_c|w_c| \ \text{where} \ h(x) = \sum_c w_cc(x)
        \end{align}
    The function $g' \circ h: \X \to [0,1]$ is a predictor and it is $(\mC, \alpha)$-multiaccurate.    
\end{theorem}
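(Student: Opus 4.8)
The plan is to read off the claim from the first-order (subgradient) optimality conditions of the convex program defining $h^*$. Write $h_w(x)=\sum_{c\in\mC}w_c\,c(x)$ and, using $\ell_g(y,t)=g(t)-yt$, express the regularized objective as
\[
F(w)\;=\;\E_{(\x,\y^*)\sim\mD}\big[\ell_g(\y^*,h_w(\x))\big]+\alpha\sum_c|w_c|\;=\;\E\big[g(h_w(\x))\big]-\E\big[\y^*h_w(\x)\big]+\alpha\sum_c|w_c|.
\]
Since $g'\in\mT$ is monotone, $g$ is convex, so $w\mapsto\E[g(h_w(\x))]$ is convex; the middle term is linear; and $w\mapsto\alpha\sum_c|w_c|$ is convex. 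Hence $F$ is convex, and $h^*$ (with coefficient vector $w^*$) is optimal iff $0\in\partial F(w^*)$.

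Next I would compute the gradient of the smooth part coordinatewise. As $g(t)=\int_0^tg'(s)\,ds$ with $g'$ continuous, $g$ is $C^1$ with derivative $g'$; differentiating under the expectation (licensed by $\infnorm{g'}\le1$ and $|c(\x)|\le1$) gives, for each $c\in\mC$,
\[
\frac{\partial}{\partial w_c}\Big(\E[g(h_w(\x))]-\E[\y^*h_w(\x)]\Big)\;=\;\E\big[\big(g'(h_w(\x))-\y^*\big)c(\x)\big].
\]
The subdifferential of $w_c\mapsto\alpha|w_c|$ is $\{\alpha\,\sign(w_c)\}$ if $w_c\neq0$ and $[-\alpha,\alpha]$ if $w_c=0$. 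So $0\in\partial F(w^*)$ says exactly that for every $c\in\mC$ there is $s_c\in[-1,1]$ (with $s_c=\sign(w^*_c)$ whenever $w^*_c\neq0$) such that
\[
\E\big[\big(g'(h^*(\x))-\y^*\big)c(\x)\big]+\alpha s_c\;=\;0,\qquad\text{i.e.}\qquad\E\big[c(\x)\big(\y^*-g'(h^*(\x))\big)\big]\;=\;\alpha s_c.
\]
Because $|s_c|\le1$, this yields $\big|\E[c(\x)(\y^*-g'(h^*(\x)))]\big|\le\alpha$ for all $c\in\mC$, which is precisely $(\mC,\alpha)$-multiaccuracy of the function $g'\circ h^*$; and $g'\circ h^*$ maps into $[0,1]$ since $\Im(g')=[0,1]$, so it is a valid predictor.

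The remaining points are routine bookkeeping. Interchanging derivative and expectation is justified by dominated convergence using the uniform bounds $\infnorm{g'}\le1$, $|c|\le1$; the $\ell_1$ nonsmoothness is handled through subgradients as above; and the existence of a minimizer $h^*$---granted by the statement---also follows, for finite $\mC$, from coercivity of $F$: for any $y\in\{0,1\}$, picking $t_y$ with $g'(t_y)=y$ (possible since $\{0,1\}\subseteq\Im(g')$), monotonicity of $g'$ gives $\ell_g(y,t)=\int_0^t(g'(s)-y)\,ds\ge\int_0^{t_y}(g'(s)-y)\,ds$, a constant, so $F(w)\ge\mathrm{const}+\alpha\sum_c|w_c|$. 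I expect the only place that calls for a bit of care to be the clean statement of the $\ell_1$ optimality condition; once that is in place the rest is a one-line computation.
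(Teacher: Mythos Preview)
Your proof is correct and follows essentially the same approach as the paper: both compute the subgradient of the regularized objective in each coordinate $w_c$, set it to zero at the optimum, and read off $\big|\E[c(\x)(\y^*-g'(h^*(\x)))]\big|\le\alpha$ directly from the $\ell_1$ optimality condition. Your version adds some welcome rigor (dominated convergence for differentiating under the expectation, coercivity for existence of $h^*$) that the paper omits, but the core argument is identical.
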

\begin{proof}
    For $h(x) = \sum_c w_cc(x)$ define $L(w) = \E[\ell_g(\y, h(\x))]$
    so that $L$ is a convex function of $w$. 
    We can use the chain rule to write
    \begin{align*}
        \frac{\partial L}{\partial w_c} = \E\lt[ \frac{d \ell(\y, t)}{d t}\big\vert_{t = h(\x)}\frac{\partial h(\x)}{\partial w_c} \rt] = \E[(g'(h(\x)) - \y)c(\x)]. 
    \end{align*}
    Let $\sign(t)$ be the sub-gradient of $|t|$, so that when $t=0$, it can take any value in $[-1,1]$. Note that $|\sign(t)| \leq 1$ for all $t$.
    If $w^*$ is the parameter vector of $h^*$, then the (sub)-gradient of the loss vanishing is equivalent to the following equality holding for every $c \in \mC$:
    \begin{align*}
        \E[(g'(h(\x)) - \y)c(\x)] + \alpha\sign(w_c) = 0
    \end{align*}
    Rearranging and taking absolute values,
    \begin{align*}
        \lt|\E[(g'(h(\x)) - \y)c(\x)] \rt|  \leq  \alpha\lt|\sign(w_c)\rt|  \leq \alpha.
    \end{align*}
    Since we assumed $g':\R \to [0,1]$, the function $g' \circ h: \X \to [0,1]$ is a predictor, and it is $(\mC, \alpha)$-multiaccurate by the above inequality.    
\end{proof}

This tells us that multiaccuracy is computationally  easy to achieve, assuming access to a weak agnostic learner for the class $\mC$. In particular, one could use logistic regression with $\ell_1$ regularization or the clipped ReLU transfer with range $[0,1]$ and its associated matching loss.\footnote{The clipped ReLU $g'$is defined by $g'(t) = 0$ for $t \leq 0$, $g'(t) =1$ for $t \geq 1$ and $g'(t) =t$ otherwise. The matching loss is  $\ell_g(y,t) = g(t) -yt$ where  $g(t) = 0$ for $t \leq 0$, $g(t) = t^2/2$ for $t \leq 1$ and $t -1/2$ for $t \geq 1$.}

Using least squares with $\ell_1$ regularization (as in the Lasso algorithm), corresponding to the identity transfer function $g'(t) = t$ will result in an output $g' \circ h$ which is multiaccurate, but which need not be bounded in $[0,1]$. Truncating the output to $[0,1]$ reduces the squared loss, but might loose multiaccuracy. We can run least squares on the residues $\y - \Pi_{[0,1]}(g'(h(\x)))$ to regain multiaccuracy, again reducing the squared loss, but possibly losing boundedness. Alternating between regularized least squares and truncation will converge to a predictor that outputs values in $[0,1]$ and is multi-accurate.  We leave the details to the interested reader. A similar alternating approach is used in the proof of Theorem \ref{thm:alg-map} to achieve the stronger notion of calibrated  multiaccuracy.  

\section{Loss OI for general families of losses}
\label{sec:gen-loss}

In this section, we instantiate the loss-OI framework to derive omniprediction guarantees for more general classes of losses than the convex, Lipschitz losses considered in the work of \cite{omni}. In particular, we explore the effect of relaxing each of those requirements. Our approach is to fix a loss class $\L$, then analyze for any class of hypotheses $\C$, the structure of the class $\set{\partial \ell \circ c}$. Doing so lets us derive loss OI guarantees where the complexity of the weak learning primitive we need grows with the expressiveness of $\L$. We also present results for the $\ell_p$ losses.

\subsection{Arbitrary losses}

    Define the class $\mL_{\mathrm{all}}$ to consist of all loss functions $\ell$ such that $\infnorm{\partial \ell} \leq 1$. We can work with any constant in place of $1$ by rescaling. \footnote{Strictly speaking, we don't require boundedness of $\partial \ell$ over its entire domain,  it suffices if $\partial \ell$ is bounded for $\Im(\mC) \cup \Im(k_\ell)$.} Let $\mC = \{c:\X \rgta \R\}$ be a possibly unbounded hypothesis class. Define the class $\level(\mC)$ to be all functions on the level sets of $\mC$ with range $[-1,1]$. Formally, we define $\level(\mC) = \{f \circ c\}$ where $f:\Im(\mC) \rgta [-1,1]$ and $c \in \mC$. 
\begin{theorem}
\label{thm:general-loss}
For a hypothesis class $\mC = \{c:\X \rgta \R\}$,
if $\tp$ is $\alpha_1$-calibrated and  $(\level(\mC), \alpha_2)$-multiaccurate, then it is $(\mL_{\mathrm{all}},\mC, \alpha_1 + \alpha_2)$-loss OI.
\end{theorem}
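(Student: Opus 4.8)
The plan is to invoke the Decomposition Lemma (Lemma~\ref{lem:implies}): it suffices to show that $\tp$ is $(\mL_{\mathrm{all}}, \alpha_1)$-decision-OI and $(\mL_{\mathrm{all}}, \mC, \alpha_2)$-hypothesis-OI, and then loss-OI with error $\alpha_1 + \alpha_2$ follows immediately. Both pieces will come from Theorem~\ref{thm:dec+model}, which characterizes these two notions as a weighted-calibration error and a multiaccuracy error respectively; the only work is to check that the relevant auxiliary function classes have the boundedness/containment properties we need.

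For hypothesis-OI, Theorem~\ref{thm:dec+model}(1) gives $(\mL_{\mathrm{all}}, \mC, \eps_1)$-hypothesis-OI with $\eps_1 = \MAerr(\partial \mL_{\mathrm{all}} \circ \mC, \tp)$. The key observation is that $\partial \mL_{\mathrm{all}} \circ \mC \subseteq \level(\mC)$: for $\ell \in \mL_{\mathrm{all}}$ and $c \in \mC$, the map $\partial\ell \colon \Im(\mC) \to \R$ satisfies $\infnorm{\partial\ell} \le 1$ by definition of $\mL_{\mathrm{all}}$ (using the footnote's remark that we only need $\partial\ell$ bounded on $\Im(\mC)\cup\Im(k_\ell)$), so $\partial\ell$ is one of the functions $f\colon\Im(\mC)\to[-1,1]$ used to build $\level(\mC)$, and hence $\partial\ell\circ c \in \level(\mC)$. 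Therefore $(\level(\mC),\alpha_2)$-multiaccuracy implies $\MAerr(\partial\mL_{\mathrm{all}}\circ\mC,\tp) \le \alpha_2$, giving $(\mL_{\mathrm{all}}, \mC, \alpha_2)$-hypothesis-OI.

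For decision-OI, Theorem~\ref{thm:dec+model}(2) gives $(\mL_{\mathrm{all}}, \eps_2)$-decision-OI with $\eps_2 = \CalErr(\mW', \tp)$ for the weight family $\mW' = \{\partial\ell \circ k_\ell\}_{\ell \in \mL_{\mathrm{all}}}$. Since $k_\ell$ takes values in (a subset of) $\R$ and $\infnorm{\partial\ell}\le 1$ on $\Im(k_\ell)$, each weight function $\partial\ell\circ k_\ell$ is bounded in $[-1,1]$, i.e. $\infnorm{\mW'} \le 1$. By Lemma~\ref{lem:fully}(2), $\alpha_1$-calibration of $\tp$ then gives $\CalErr(\mW',\tp) \le \infnorm{\mW'}\,\alpha_1 \le \alpha_1$, so $\tp$ is $(\mL_{\mathrm{all}},\alpha_1)$-decision-OI. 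Combining the two via Lemma~\ref{lem:implies} yields $(\mL_{\mathrm{all}}, \mC, \alpha_1+\alpha_2)$-loss-OI, as claimed.

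I expect no serious obstacle here; the statement is essentially a bookkeeping consequence of the decomposition plus the containment $\partial\mL_{\mathrm{all}}\circ\mC \subseteq \level(\mC)$. The one point that needs care is the boundedness bookkeeping: the definition of $\mL_{\mathrm{all}}$ must be read in the relaxed form noted in the footnote (so that $\partial\ell$ is bounded by $1$ on $\Im(\mC)\cup\Im(k_\ell)$ rather than on all of $\R$), which is exactly what makes both the weight functions and the test functions land in the right bounded classes. If one instead only wanted $\Im(\mC)$ bounded, one could still handle decision-OI by truncating $k_\ell$ to a near-optimal bounded post-processing, as in the surrounding discussion, but for the stated theorem the footnote's convention suffices.
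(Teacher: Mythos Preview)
Your proposal is correct and mirrors the paper's own proof essentially line for line: the paper also invokes Theorem~\ref{thm:dec+model} to reduce to the weight family $\mW'=\{\partial\ell\circ k_\ell\}$ (bounded by $1$, hence Lemma~\ref{lem:fully} gives $(\mL_{\mathrm{all}},\alpha_1)$-decision-OI) and to the containment $\partial\ell\circ c\in\level(\mC)$ (giving $(\mL_{\mathrm{all}},\mC,\alpha_2)$-hypothesis-OI), then applies the decomposition lemma. Your careful reading of the footnote's relaxed boundedness convention is exactly the right point of care.
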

\begin{proof}
    By the definition of $\mL_{\mathrm{all}}$, for the weight family $\mW' = \{\partial \ell \circ k_\ell\}$ we have $\infnorm{\mW'} \leq 1$. Hence by Lemma \ref{lem:fully}, if $\tp$ is $\alpha$-calibrated, then $\CalErr(\mW', \tp) \leq \alpha$. By Theorem \ref{thm:dec+model}, if $\tp$ is $\alpha_1$-calibrated, then it satisfies  $(\mL_{\mathrm{all}}, \alpha_1)$-decision OI.

    If $\ell \in \mL$, then $\partial \ell \circ c \in \level(\mC)$, since we assume that $\infnorm{\partial \ell} \leq 1$. Hence by Theorem \ref{thm:dec+model} being $(\level(\mC), \alpha_2)$-multiaccurate implies $(\mL, \mC, \alpha_2)$-hypothesis OI.  
    
    By the decomposition lemma, these conditions together imply  $(\mL, \mC,  \alpha_1 + \alpha_2)$-loss OI.
\end{proof}

\subsubsection{On the complexity of $\level(\mC)$}

In general, the class $\level(\mC)$ might be much more expressive than $\mC$ itself. Since $\mC$-multiaccuracy is known to be equivalent to weak agnostic learning  for the class $\mC$, achieving multiaccuracy for $\level(\mC)$ might be computationally more complex than achieving it for $\mC$. For instance, if $\mC$ contains linear combinations of features $x_i$, then $\level(\mC)$ contains all halfspaces. However in the case where $\mC$ has small range, they might not be too different. For Boolean functions, we can show the following:

\begin{lemma}
\label{lm:boolean}
   If $\mC = \{c: \X \rgta \{0,1\}\}$ consists of Boolean functions, then $(\mC, \alpha)$-multiaccuracy implies $(\level(\mC), 3\alpha)$-multiaccuracy.
\end{lemma}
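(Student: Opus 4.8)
The plan is to exploit that when $\mC$ is $\{0,1\}$-valued, every function in $\level(\mC)$ is an affine combination of two functions that already lie in $\mC$, so that $(\mC,\alpha)$-multiaccuracy transfers with only a constant-factor loss.

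I would start by fixing an arbitrary $f \circ c \in \level(\mC)$, where $c \in \mC$ and $f: \Im(\mC) \rgta [-1,1]$. Since $c$ is Boolean, $c(x) \in \{0,1\}$ for every $x$, so only the two values $a := f(0)$ and $b := f(1)$ matter, and we get the pointwise identity $(f\circ c)(x) = a(1 - c(x)) + b\,c(x) = a\cdot\mathbf{1}(x) + (b-a)\,c(x)$. Because $\mC$ contains the constant function $\mathbf{1}$ and $c \in \mC$, this exhibits $f \circ c$ as a linear combination of two members of $\mC$, with coefficients bounded by $|a| \le 1$ and $|b - a| \le |a| + |b| \le 2$.

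Next I would invoke $(\mC,\alpha)$-multiaccuracy of $\tp$. By linearity of expectation,
\begin{align*}
\E_\mD[(f\circ c)(\x)(\sy - \tp(\x))] = a\,\E_\mD[\mathbf{1}(\x)(\sy - \tp(\x))] + (b-a)\,\E_\mD[c(\x)(\sy - \tp(\x))],
\end{align*}
and each of the two expectations on the right is a multiaccuracy test for a member of $\mC$ (namely $\mathbf{1}$ and $c$), hence has absolute value at most $\alpha$. The triangle inequality together with the coefficient bounds then gives $\card{\E_\mD[(f\circ c)(\x)(\sy - \tp(\x))]} \le |a|\,\alpha + |b-a|\,\alpha \le 3\alpha$, and since $f\circ c$ was arbitrary this is exactly $(\level(\mC),3\alpha)$-multiaccuracy.

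I do not expect any substantive obstacle: the argument is essentially a one-line decomposition. The only points needing a moment's care are that the constant function belongs to $\mC$ (a standing assumption on hypothesis classes in the paper) and the crude estimate $|f(1) - f(0)| \le 2$, which is precisely what produces the factor $3$ rather than $2$; one could presumably shave the constant if $\mC$ were additionally closed under the Boolean complement $c \mapsto 1-c$ by centering the decomposition, but the stated bound follows from the bare assumptions.
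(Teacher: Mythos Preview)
Your proposal is correct and is essentially the same argument as the paper's: both write $f\circ c$ as an affine combination of $c$ and the constant function with coefficients summing in absolute value to at most $3$, then use linearity of the multiaccuracy error. The only cosmetic difference is that the paper packages the linearity step by invoking Lemma~\ref{lem:linear} (observing $f\circ c \in \Lin(\mC,3)$), whereas you carry it out explicitly.
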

\begin{proof}
    Take any $f: \{0,1\} \rgta [-1,1]$. Since $\mC$ is Boolean, for $t \in \{0,1\}$ we can write $f(t) = at +b$ where $|a|+ |b| \leq 3$. Hence $f\circ c \in \Lin(\mC, 3)$. We now apply Lemma \ref{lem:linear}. 
\end{proof}

Combining  \Cref{thm:general-loss} and \Cref{lm:boolean}, we get the following corollary:
\begin{corollary}
\label{corollary:boolean}
    Let $\mC = \{c: \X \rgta \{0,1\}\}$ be a class of Boolean functions. If $\tp$ is $\alpha$-calibrated and  $(\mC, \alpha)$-multiaccurate, then it is $(\mL_{\mathrm{all}},\mC, 4\alpha)$-loss OI. 
\end{corollary}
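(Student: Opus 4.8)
The plan is to simply chain together the two results that immediately precede the statement, namely Theorem~\ref{thm:general-loss} and Lemma~\ref{lm:boolean}. The only content here is bookkeeping of the error parameters, so the "proof" is essentially a one-liner; there is no real obstacle.

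First I would invoke Lemma~\ref{lm:boolean}: since $\mC$ consists of Boolean functions and $\tp$ is $(\mC,\alpha)$-multiaccurate, $\tp$ is $(\level(\mC),3\alpha)$-multiaccurate. (Recall the mechanism: any $f:\{0,1\}\to[-1,1]$ can be written affinely as $f(t)=at+b$ with $|a|+|b|\le 3$ on the two-point domain $\{0,1\}$, so $f\circ c\in\Lin(\mC,3)$, and Lemma~\ref{lem:linear} converts the $(\mC,\alpha)$ bound into a $3\alpha$ bound for $\Lin(\mC,3)$.)

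Then I would apply Theorem~\ref{thm:general-loss} with $\alpha_1=\alpha$ (calibration) and $\alpha_2=3\alpha$ (multiaccuracy over $\level(\mC)$, just established): this yields that $\tp$ is $(\mL_{\mathrm{all}},\mC,\alpha_1+\alpha_2)$-loss OI, i.e. $(\mL_{\mathrm{all}},\mC,4\alpha)$-loss OI, as claimed. (Implicitly this uses that $\mL_{\mathrm{all}}$ is defined by $\infnorm{\partial\ell}\le 1$, so that $\partial\ell\circ c\in\level(\mC)$ and the weight family $\{\partial\ell\circ k_\ell\}$ is bounded by $1$, exactly the hypotheses needed by Theorem~\ref{thm:general-loss}.) The only thing to double-check is that $g'\circ h$ being a predictor / range conditions are not needed here, since we are working directly with $\level(\mC)$-multiaccuracy and not with the regularized-GLM route. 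Thus the statement follows with no further work.
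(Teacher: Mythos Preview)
Your proposal is correct and matches the paper's approach exactly: the paper simply states that the corollary follows by combining Theorem~\ref{thm:general-loss} and Lemma~\ref{lm:boolean}, which is precisely the chaining (with $\alpha_1=\alpha$, $\alpha_2=3\alpha$) that you spell out. The aside about $g'\circ h$ and the regularized-GLM route is irrelevant here and can be dropped.
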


For Boolean hypotheses,  the class $\mL_{\mathrm{all}}$ includes the $0$-$1$ loss and its weighted variants. In this case the omniprediction guarantee is equivalent to agnostic learning. Thus for Boolean functions, calibrated multiaccuracy (which is multiaccuracy and calibration) suffices for agnostic learning. 

Another important class where $\level(\mC)$ is not more complex than $\mC$ is decision trees. When $\mC =\{c: \X \rgta [-1,1]\}$ is the class of decision trees (of bounded size/depth), then $\level(\mC) = \mC$, since given a decision tree $c$, the decision tree where we replace each leaf label $v$ by $f(v)$ computes $f \circ c$ with the same size and depth.

\eat{
\begin{proof} 
$\tp$ is $(\level(\mC),2\alpha)$-multiaccurate by \Cref{lm:boolean}. By \Cref{thm:general-loss}, $\tp$ is $(\mL_{\mathrm{all}},\mC, 3\alpha)$-loss OI.
\end{proof}
}

\subsection{Lipschitz losses}

Under suitable assumptions of Lipschitzness, we can replace $\level(\mC)$ with a simpler class functions. We first define the class of losses we consider.

\begin{definition}
Define $\Lip$ to be the set of loss functions $\ell$ where 
\begin{itemize}
\item $\Im(k_\ell) \subseteq [-1,1]$.
\item On the interval $[-1,1]$, $\partial \ell$ is $1$-Lipschitz and $|\partial \ell (t)| \leq 1$. 
\end{itemize}
\end{definition}

Clearly $\Lip \subseteq \mLa$. 
Let $\mC$ be a bounded hypothesis class. 
For a given $\delta \geq 0$, partition the interval $[-1,1]$ into intervals $\{I^\delta_j\}_{j=1}^{2/\delta}$ of width $\delta$ where $m \leq 2/\delta$. Define the family of functions
\[ \Int(\mC, \delta) = \{\ind{c(x) \in I^\delta_j}\}_{j \in [m], c \in \mC}. \]

\begin{lemma}
\label{lem:int-ma}
   If $\tp$ is $(\Int(\mC,\alpha), \alpha^2)$-multiaccurate, then it is $(\Lip, \mC, 3\alpha)$-hypothesis OI.
\end{lemma}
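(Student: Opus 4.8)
The plan is to bound the hypothesis-OI error using the characterization from Theorem~\ref{thm:dec+model}, which says that $(\Lip,\mC,\eps_1)$-hypothesis OI holds with $\eps_1 = \MAerr(\partial\Lip\circ\mC,\tp)$. So it suffices to show that for every $\ell \in \Lip$ and every $c \in \mC$,
\[
\bigl|\E[(\sy - \tp(\x))\,\partial\ell(c(\x))]\bigr| \le 3\alpha.
\]
The idea is that $\partial\ell$ restricted to $[-1,1]$ is $1$-Lipschitz and bounded by $1$, so we can approximate it by a step function that is constant on each interval $I^\delta_j$ of the partition $\{I^\delta_j\}$ with $\delta = \alpha$. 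Concretely, let $\hat\ell(t) = \sum_{j} \partial\ell(a_j)\,\ind{t \in I^\delta_j}$ where $a_j$ is (say) the left endpoint of $I^\delta_j$. Then $\|\partial\ell - \hat\ell\|_\infty \le \delta = \alpha$ on $[-1,1]$ by the Lipschitz bound, and $\hat\ell \circ c$ is a linear combination $\sum_j \partial\ell(a_j)\,\ind{c(\x)\in I^\delta_j}$ of functions in $\Int(\mC,\alpha)$ with coefficients bounded by $1$ in absolute value.

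The next step is the decomposition: write
\[
\E[(\sy-\tp(\x))\,\partial\ell(c(\x))] = \E[(\sy-\tp(\x))(\partial\ell(c(\x)) - \hat\ell(c(\x)))] + \E[(\sy-\tp(\x))\,\hat\ell(c(\x))].
\]
The first term is bounded by $\|\partial\ell - \hat\ell\|_\infty \cdot \E|\sy-\tp(\x)| \le \alpha \cdot 1 = \alpha$, using $|\sy - \tp(\x)| \le 1$. For the second term, since $\hat\ell\circ c$ is a sum over the (at most $2/\delta = 2/\alpha$) intervals, and the indicators of distinct intervals have disjoint support, I should be careful about how the $\Int(\mC,\alpha)$-multiaccuracy bound aggregates. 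The cleanest way: for each fixed $j$, $(\Int(\mC,\alpha),\alpha^2)$-multiaccuracy gives $|\E[(\sy-\tp(\x))\ind{c(\x)\in I^\delta_j}]| \le \alpha^2$; summing over the at most $2/\alpha$ intervals and using $|\partial\ell(a_j)|\le 1$ yields $|\E[(\sy-\tp(\x))\hat\ell(c(\x))]| \le (2/\alpha)\cdot\alpha^2 = 2\alpha$. Combining, the total is at most $\alpha + 2\alpha = 3\alpha$, which is exactly the claimed bound.

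The main subtlety — not really an obstacle, but the place where one must be precise — is the aggregation over the $O(1/\alpha)$ intervals: naively the error could blow up by the number of intervals, and the reason it does not is precisely the choice $\alpha^2$ for the per-interval multiaccuracy parameter in the hypothesis, which is calibrated so that the sum telescopes to $O(\alpha)$. (One could slightly improve the constant by exploiting disjoint supports, but $3\alpha$ suffices.) An alternative bookkeeping that avoids worrying about signs of $\partial\ell(a_j)$ is to note $\hat\ell\circ c \in \Lin(\Int(\mC,\alpha), 2/\alpha)$ — each of the $2/\alpha$ coefficients has magnitude at most $1$ — and apply Lemma~\ref{lem:linear} to get $\MAerr \le (2/\alpha)\cdot\alpha^2 = 2\alpha$ directly; this is the route I would actually write up.
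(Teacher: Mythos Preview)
Your proposal is correct and follows essentially the same route as the paper: approximate $\partial\ell$ by a step function constant on the width-$\alpha$ intervals, bound the approximation-error term using $|\sy-\tp(\x)|\le 1$, and bound the step-function term by summing the per-interval multiaccuracy bound $\alpha^2$ over the $2/\alpha$ intervals. The only cosmetic difference is that the paper takes the midpoint of each interval (getting uniform error $\alpha/2$ rather than your $\alpha$ from the left endpoint), yielding $\alpha/2 + 2\alpha \le 3\alpha$ instead of your $\alpha + 2\alpha = 3\alpha$; both arrive at the stated bound.
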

\begin{proof}
We first show that every function  $\partial \ell \circ c$ can be uniformly approximated by a linear combination of functions from $\Int(\mC, \delta)$. More precisely,
for every $\ell \in \Lip$, there exist constants $l_1,\ldots, l_m \in [-1, 1]$ such that
\begin{align}
    \label{eq:lip-approx}
 \max_{t \in [-1, 1]} \lt| \partial\ell(c(x)) - \sum_{j=1}^m l_j \ind{c(x) \in I_j} \rt| \leq \delta/2.
\end{align}
For each interval $I_j$, let $t_j$ be its midpoint so that $|t - t_j| \leq \delta/2$ for $t \in I_j$. Let $l_j = \partial \ell(t_j) \in [-1,1]$. Since $\partial \ell$ is $1$-Lipschitz on $[-1,1]$, for every $t \in I_j$, $|\partial \ell(t) - l_j| \leq \delta/2$. Equation \eqref{eq:lip-approx} follows by setting $t = c(x)$. 
    
Hence, under $(\Int(\C,\delta),\alpha^2)$-multiaccuracy, it follows that 
    \begin{align*}         
        \lt| \E[(\y - \tp(\x))\partial \ell(c(\x))]  \rt| & \leq \frac{\delta}{2} + 
    \sum_{j=1}^m |l_j|\lt|\E[(\y - \tp(\x)) \Int_{I_j}(c(\x))]\rt|
    \leq \frac{\delta}{2} +\frac{2\alpha^2}{\delta} \leq 3\alpha
    \end{align*}
    by choosing $\delta =\alpha.$
\end{proof}

Since $|k_\ell(p)| \leq 1$, we have $|\partial \ell \circ k_\ell(p)| \leq 1$. Hence the family of weight functions $\mW = \{\partial \ell \circ k_\ell\}_{\ell \in \mL}$ is bounded by $1$. So we can bound $\mC(\mW, \tp) \leq \alpha$ if $\tp$ is $\alpha$-calibrated. Hence the decomposition lemma together with Lemma \ref{lem:int-ma} gives the following claim.

\begin{theorem}
Let $\mC$ be a bounded hypothesis class.  If $\tp$ is $\alpha$-calibrated and  $(\Int(\mC, \alpha), \alpha^2)$-multiaccurate, then it is $(\Lip,\mC, 4\alpha)$-loss OI.
\end{theorem}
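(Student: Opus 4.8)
The plan is to read this statement as an immediate corollary of the decomposition lemma (Lemma~\ref{lem:implies}): to get $(\Lip,\mC,4\alpha)$-loss OI it suffices to establish $(\Lip,\alpha)$-decision OI and $(\Lip,\mC,3\alpha)$-hypothesis OI separately, and then add the error parameters. So the proof is really just an assembly of two facts, one of which has already been proved.

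For the hypothesis OI half, there is nothing new to do: Lemma~\ref{lem:int-ma} says precisely that $(\Int(\mC,\alpha),\alpha^2)$-multiaccuracy of $\tp$ implies $(\Lip,\mC,3\alpha)$-hypothesis OI, so this is in hand by assumption. For the decision OI half, I would invoke Part~(2) of Theorem~\ref{thm:dec+model}, which identifies the decision-OI error of $\tp$ with the weighted calibration error $\CalErr(\mW',\tp)$ for the weight family $\mW' = \{\partial\ell\circ k_\ell\}_{\ell\in\Lip}$. The structural point about the class $\Lip$ is that every $\ell\in\Lip$ has $\Im(k_\ell)\subseteq[-1,1]$ and $|\partial\ell(t)|\le 1$ on $[-1,1]$; composing, $|\partial\ell(k_\ell(p))|\le 1$ for all $p\in[0,1]$, so $\infnorm{\mW'}\le 1$. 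Then Part~(2) of Lemma~\ref{lem:fully} converts $\alpha$-calibration of $\tp$ into $\CalErr(\mW',\tp)\le\infnorm{\mW'}\cdot\alpha\le\alpha$, i.e.\ $(\Lip,\alpha)$-decision OI. Applying Lemma~\ref{lem:implies} with $\eps_1=\alpha$ and $\eps_2=3\alpha$ then finishes the argument.

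As for where the difficulty lies: there is essentially no technical obstacle at this stage, because all the real work was front-loaded into Lemma~\ref{lem:int-ma}, where $\partial\ell\circ c$ is uniformly approximated (to error $\delta/2$) by a linear combination of the interval indicators in $\Int(\mC,\delta)$ and the choice $\delta=\alpha$ balances the $\delta/2$ approximation term against the $2\alpha^2/\delta$ multiaccuracy term. The one point in the assembly that requires a moment's care is verifying that the weight family $\mW'$ coming from $\Lip$ is uniformly bounded by $1$ — this is exactly why the definition of $\Lip$ builds in both $\Im(k_\ell)\subseteq[-1,1]$ and the bound on $\partial\ell$ over $[-1,1]$; without those, decision OI would not be a consequence of plain $\alpha$-calibration (one would need a smoothly calibrated or otherwise reweighted notion).
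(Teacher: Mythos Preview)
Your proposal is correct and matches the paper's own proof essentially line for line: the paper also observes that $|k_\ell(p)|\le 1$ forces $|\partial\ell\circ k_\ell(p)|\le 1$, applies Lemma~\ref{lem:fully} to get $(\Lip,\alpha)$-decision OI from $\alpha$-calibration, and then combines this with Lemma~\ref{lem:int-ma} via the decomposition lemma.
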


\subsection{Low degree losses}

\begin{definition}
    Define $\mL_{d, B}$ to be the set of all loss functions where
    \[ \partial \ell (t) = \sum_{i=0}^d b_j t^j, \ \text{where} \ \sum_j|b_j| \leq B. \]
    For a hypothesis class $\mC:\X \rgta \R$, let $\mC^d$ denote the hypotheses class $\{c(x)^j\}_{c \in \mC, j \in [d]}$.
\end{definition}
We refer to such losses as low-degree losses. We have already seen that $\Lglm \in \mL_{1,1}$, since $\partial \ell_g(t) = -t$.  The following lemma is an immediate consequence of our definitions and the decomposition lemma.

\begin{lemma}
\label{lem:low-degree}
    Let $d \in \Z^+$ and $B \in \R^+$ If $\tp$ is $\alpha$-calibrated and  $(\mC^d, \alpha/B)$-multiaccurate, then it is $(\mL_{d, B},\mC, 2\alpha)$-loss OI.
\end{lemma}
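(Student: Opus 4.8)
The plan is to invoke the decomposition lemma (Lemma~\ref{lem:implies}): it suffices to show that $\alpha$-calibration yields $(\mL_{d,B},\alpha)$-decision OI and that $(\mC^d,\alpha/B)$-multiaccuracy yields $(\mL_{d,B},\mC,\alpha)$-hypothesis OI, since the two error parameters then add up to $2\alpha$.

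For hypothesis OI I would apply Theorem~\ref{thm:dec+model}(1), which equates the hypothesis-OI error with $\MAerr(\partial\mL_{d,B}\circ\mC,\tp)$. For any $\ell\in\mL_{d,B}$ and $c\in\mC$, writing $\partial\ell(t)=\sum_{j=0}^{d}b_j t^j$ with $\sum_j|b_j|\le B$, we have $\partial\ell\circ c = b_0\cdot 1 + \sum_{j=1}^{d} b_j\, c^j$. Since $1\in\mC$, the constant function equals $1^1\in\mC^d$ and $c^j\in\mC^d$ for every $j\in[d]$, so $\partial\ell\circ c\in\Lin(\mC^d,B)$. Applying Lemma~\ref{lem:linear} with base class $\mC^d$, parameter $\alpha/B$, and bound $B$ gives $\MAerr(\Lin(\mC^d,B),\tp)\le B\cdot(\alpha/B)=\alpha$, so $\tp$ is $(\mL_{d,B},\mC,\alpha)$-hypothesis OI.

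For decision OI I would apply Theorem~\ref{thm:dec+model}(2), which equates the decision-OI error with $\CalErr(\mW',\tp)$ for the weight family $\mW'=\{\partial\ell\circ k_\ell\}_{\ell\in\mL_{d,B}}$. Under the boundedness conventions used for the other loss families in this section, so that $\partial\ell\circ k_\ell$ is bounded by $1$ on $[0,1]$ (for GLM-type losses $\partial\ell(t)=-t$ and $k_\ell$ is a bounded inverse link; for genuinely unbounded $k_\ell$ such as logistic loss one instead uses the $O(\log(1/\eps))$-truncated near-optimal post-processing and the relaxed loss-OI notion of Appendix~\ref{app:bounded}), we have $\infnorm{\mW'}\le 1$. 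Lemma~\ref{lem:fully}(2) then gives $\CalErr(\mW',\tp)\le\infnorm{\mW'}\cdot\alpha\le\alpha$ whenever $\tp$ is $\alpha$-calibrated, so $\tp$ is $(\mL_{d,B},\alpha)$-decision OI. Combining the two conclusions through Lemma~\ref{lem:implies} gives $(\mL_{d,B},\mC,2\alpha)$-loss OI.

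I expect the only delicate point to be the decision-OI step: the inequality $\CalErr(\mW',\tp)\le\infnorm{\mW'}\cdot\alpha$ needs $\partial\ell\circ k_\ell$ to be uniformly bounded, which is immediate for loss classes with a bounded inverse link but requires the truncation/relaxation for logistic-type losses. The hypothesis-OI half — the membership $\partial\ell\circ c\in\Lin(\mC^d,B)$ and the $B$-rescaling of the multiaccuracy parameter — is routine bookkeeping on top of Theorem~\ref{thm:dec+model} and Lemma~\ref{lem:linear}, which is precisely why the statement can be billed as an immediate consequence of the framework.
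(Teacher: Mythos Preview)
Your proposal is correct and matches the paper's approach exactly: the paper states only that the lemma ``is an immediate consequence of our definitions and the decomposition lemma,'' and you have unpacked precisely that---Theorem~\ref{thm:dec+model} together with Lemma~\ref{lem:linear} for the hypothesis-OI half, and Lemma~\ref{lem:fully} for the decision-OI half, combined via Lemma~\ref{lem:implies}. Your flagging of the implicit boundedness assumption $\infnorm{\partial\ell\circ k_\ell}\le 1$ for the decision-OI step is apt; the paper relies on the ambient convention of Section~\ref{sec:gen-loss} (where $\mL_{\mathrm{all}}$ is defined by $\infnorm{\partial\ell}\le 1$) rather than stating it again for $\mL_{d,B}$.
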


The $\ell_p$ losses naturally yield low-degree losses. 
\begin{itemize}
\item For unbounded $\mC$ and $p$ even, for $\ell(y, t) = (y -t)^p/p$, we have
    \[ \partial \ell_p (t) = \fr{p}((1-t)^p -t^p). \]
    Since $p$ is even, the $t^p$ term cancels and we have $\ell_p \in \mL_{p-1, 2^p}$.

\item     The same expression for $\partial \ell_p$ also holds for odd $p$ if $t$ is bounded to lie in the range $[0,1]$. In this setting, $\partial \ell_p$ is a degree $p$ polynomial for $t \in [0,1]$. In particular, when $p=1$, $\partial \ell_1 = 1 -2t$ for $t \in [0,1]$. If we restrict $\mC$ to be bounded in the range $[0,1]$, then we only need a bound for $t \in [0,1]$.
\end{itemize}

    Thus by Lemma \ref{lem:low-degree}, in these settings, we get loss OI for the $\ell_p$ losses from calibration and multiaccuracy for $\mC^p$. 

\eat{
\paragraph{To be proved:}

As an application, we can show the following result, which says that calibration toegther with multiaccuracy for thresholds of $\mC$ implies omniprediction. For a class $\mC:\{c:\X \rgta \R^d\}$, let 
\[ \Thr(\mC) = \{\ind{c(x) \geq t}_{c \in \mC, t \in \R^d}.\]
Recall that $\Lip{1}$ is the set of all Lipschitz loss functions.

\begin{lemma}
    If $\tf$ is $(\eps, \delta)$ calibrated, it satisfies $(\Lip{1}, \eps + \delta)$-decision OI. If $\tf$ is $(\Thr(\mC), \eps)$-multiaccurate, then it satifies $(\Lip{1}, \mC, \eps)$-hypothesisOI.
\end{lemma}
 
\begin{corollary}
    If $\tf$ is $\eps, \delta$-calibrated and $(\Thr(\mC), \eps)$-multiaccurate, then it is $(\Lip{1}, O(\eps + \delta))$-loss OI.
\end{corollary}
}

 \section{Efficient algorithms for $\MAp$}
\label{sec:algo}

Let $\mC = \{c: \X \rgta [-1,1]\}$ which contains the constant $1$ function and is closed under negation. We recall the following classes
\begin{itemize}
    \item Let $\MAcc(\alpha)$ denote the set of predictors that are $(\mC,\alpha)$-multiaccurate.
    \item Let $\MAp(\alpha)$ denote the set of predictors that are $\alpha$-calibrated and $(\mC, \alpha)$-multiaccurate. 
    \item Let $\MCal(\alpha)$ denote the set of predictors that are $(\mC, \alpha)$-multicalibrated. 
\end{itemize}
Then we have $\MAcc(\alpha) \supseteq \MAp(\alpha) \supseteq \MCal(\alpha)$. In this section we will give an efficient algorithm to compute a predictor in $\MAp(\alpha)$. Like with algorithms for $\MAcc$ and $\MCal$, we will assume oracle access to a weak agnostic learner for $\mC$. The complexity of the algorithm hinges on the number of calls made to the weak agnostic learner. The main takeaway from this  section is the worst-case number of calls needed for $\MAp$ is similar to that for $\MA$ and lower than what is needed for $\MCal$.

\paragraph{Weak agnostic learning.}  We first define the notions of a weak agnostic learner and a discrete predictor which will be needed for our algorithm.

\begin{definition}
\label{def:weak}
    Let $\mD_\X$ be a distribution over $\X$. A $(\rho, \sigma)$-weak learner $\WL$ for $\mC$ under $\mD_\X$ is an algorithm $\WL$, whose input is specified by a function $f: \X \rgta [-1,1]$.
    \begin{itemize}
        \item The algorithm is given sample access to $f$ via samples $(\x, \z) \in \X \times \pmo$ where $\x \sim \mD_\X$, and $\E[\z|\x = x] = f(x)$. 
        \item If there exists $c \in \mC$ such that $\E_\mD[c(\x)f(\x)] \geq \rho$,
        then $\WL(f, \rho) = c' \in \mC$ such that $\E_{\mD}[c'(\x)f(\x))] \geq \sigma$.
        \item  If no such $c$ exists, the weak learner returns $\WL(f) = \bot$.
    \end{itemize}
\end{definition}

Some observations about our definition:
\begin{itemize}
    \item A non-proper learner is allowed to return hypothesis from a class $\mC'$ which is different from $\mC$. Our analysis goes through unchanged in this setting, we set $\mC' =\mC$ for simplicity. Typically, the weak learner will only succeed with probability $1 - \delta$. However, standard amplification allows us to make $\delta$ small at an added cost of $\log(1/\delta)$, so we ignore this failure probability for simplicity. Assuming that $\mC$ is closed under negation is also a notational convenience, it lets us suppress the sign of the correlation in the updates.

    \item  In our algorithms, $f(x)$ will take the form $p^*(x) - p_t(x)$ where $p^*$ is the Bayes optimal predictor and $p_t: \X \rgta [0,1]$ is our current hypothesis predictor. 
    The weak agnostic learner requires sample access to $f$, which can be simulated via a standard trick in the literature on distribution-specific agnostic boosting \cite{kk09, feldman2009distribution}. 
    Note that $p^*(x) - p_t(x) \in [-1,1]$. In order to simulate sample access to $f$, we draw a sample $(\x, \y) \sim \mD$. Then we generate $\z \in \pmo$ so that $\E[\z] = \y  -p_t(\x)$. Since $\y - p_t(\x) \in [-1,1]$, this uniquely specifies the distribution of  $\z$. Moreover
    \[ \E[\z|\x = x] = \E[\y|\x =x] - p_t(x) = p^*(x) -p_t(x) = f(x). \]
    Alternatively, some weak learners may accept real-valued labels; in this case, we can use $z = y - p_t(x)$ to label $x \sim \D_\X$.
\end{itemize}

We define the following norms over the space of predictors $p: \X \rgta [0,1]$:
\begin{align*}
    l_1(p_1,p_2) &= \E[|p_1(\x) - p_2(\x)|]\\
    l_2(p_1,p_2) &= \E[(p_1(\x) - p_2(\x))^2]^{1/2}\\
    l_\infty(p_1,p_2) &= \max_{x \in \X} |p_1(\x) - p_2(\x)|
\end{align*}
and observe that $l_1(p_1,p_2) \leq l_2(p_1,p_2) \leq l_\infty(p_1, p_2)$. Our algorithms will use the potential function 
\[ l_2(p^*, p)^2 = \E[(p^*(\x) - p_2(\x))^2]. \]

We record the following technical lemma showing that multiaccuracy error and squared loss are robust under perturbations of the predictor. The proof is in Section \ref{app:algo}.

\begin{lemma}
\label{lem:ell1}
For any predictors $p_1, p_2$ such that $l_1(p_1,p_2) \leq \delta$,
\begin{align}
    \label{eq:first-sq}
        \lt| l_2(p^*, p_1)^2 - l_2(p^*, p_2)^2\rt| &\leq 2\delta.
    \end{align}
Further, if $p_1$ is $(\mC, \alpha)$-multiaccurate, then $p_2$ is $(\mC, \alpha + \delta)$-multiaccurate.
\end{lemma}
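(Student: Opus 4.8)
The plan is to prove both parts by elementary pointwise manipulations followed by taking expectations, exploiting only that all predictors take values in $[0,1]$ and that $\mathcal{C}$ is bounded by $1$.

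For the first claim, I would start from the algebraic identity $a^2 - b^2 = (a-b)(a+b)$ applied pointwise with $a = p^*(x) - p_1(x)$ and $b = p^*(x) - p_2(x)$. Then $a - b = p_2(x) - p_1(x)$ and $a + b = 2p^*(x) - p_1(x) - p_2(x)$, and since $p^*(x), p_1(x), p_2(x) \in [0,1]$ we have $|a+b| = |(p^*(x) - p_1(x)) + (p^*(x) - p_2(x))| \leq 2$. Hence $|(p^*(x) - p_1(x))^2 - (p^*(x) - p_2(x))^2| \leq 2|p_1(x) - p_2(x)|$ for every $x$. Taking expectations over $\x \sim \mD_\X$ and applying the triangle inequality for expectations gives
\[
\lt| l_2(p^*, p_1)^2 - l_2(p^*, p_2)^2 \rt| \leq 2\,\E[|p_1(\x) - p_2(\x)|] = 2\, l_1(p_1, p_2) \leq 2\delta,
\]
which is \eqref{eq:first-sq}.

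For the second claim, I would fix any $c \in \mC$ and decompose $\sy - p_2(\x) = (\sy - p_1(\x)) + (p_1(\x) - p_2(\x))$, so that by linearity
\[
\E[c(\x)(\sy - p_2(\x))] = \E[c(\x)(\sy - p_1(\x))] + \E[c(\x)(p_1(\x) - p_2(\x))].
\]
The first term has absolute value at most $\alpha$ by the $(\mC, \alpha)$-multiaccuracy of $p_1$. For the second term, since $|c(\x)| \leq 1$ pointwise, $|\E[c(\x)(p_1(\x) - p_2(\x))]| \leq \E[|p_1(\x) - p_2(\x)|] = l_1(p_1, p_2) \leq \delta$. The triangle inequality then yields $|\E[c(\x)(\sy - p_2(\x))]| \leq \alpha + \delta$ for every $c \in \mC$, i.e.\ $p_2$ is $(\mC, \alpha + \delta)$-multiaccurate.

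I do not anticipate a genuine obstacle here; the only point requiring a small amount of care is tracking the constant in the first part, where the bound $|a+b|\leq 2$ (rather than $\leq 1$) is what produces the factor of $2$ in \eqref{eq:first-sq}. Everything else is linearity of expectation and boundedness of the objects involved.
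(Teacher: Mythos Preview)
Your proposal is correct and follows essentially the same approach as the paper: both parts use the factorization $a^2 - b^2 = (a-b)(a+b)$ (for the squared-loss bound) and the decomposition $\sy - p_2 = (\sy - p_1) + (p_1 - p_2)$ (for multiaccuracy), together with the bound $|2p^*(\x) - p_1(\x) - p_2(\x)| \le 2$ and $|c(\x)| \le 1$. The only cosmetic difference is that you establish the pointwise inequality first and then take expectations, whereas the paper factors inside the expectation directly.
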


\subsection{Discrete predictors and Calibration}

We say that a predictor $p: \X \rgta [0,1]$ is $\delta$-discrete if its predictions are integer multiples of $\delta$. For every predictor $p$, we associate it with a $\delta$-discrete predictor $\pone$ as follows. We partition the interval $[0,1]$ into $m = \lceil 1/2\delta \rceil$ intervals $\{I_1, \ldots, I_m\}$ of width $2\delta$ each,  where $I_j = [(2j -2) \delta, 2j\delta)$. For $x \in I_j$, we define 
\[ \pone(x) = (2j -1)\delta, \ \ \ptwo(x) =  \E[\y|p(\x) \in I_j] \]

Some observation about these predictors:
\begin{enumerate}
    \item The predictor $\pone$ is $\delta$-discrete and $l_\infty(p, \pone) \leq \delta$. Hence its squared loss and its multiaccuracy error are not much greater than that of $p$ (by Lemma \ref{lem:ell1}). But it need not be calibrated.

    \item We can view $\ptwo$ as the result of recalibrating $\pone$, so $\ECE(\ptwo) = 0$.  Since $\ptwo$ is obtained by calibrating $\pone$, its squared loss is less than that of $\pone$ (since the mean is the constant value that minimizes the squared error), and even less than that of $p$ for suitable parameter settings. 
    
    \item The predictor $\ptwo$ is not efficiently computable since it is defined in terms of the expectation of $\y$ under $\mD$. In Lemma \ref{lem:eff-cal}, we give an efficient  approximation $\pth$ to it (via random sampling), at the cost of a small increase in $\ECE$.  
\end{enumerate}

We formalize observation (2) below,  relating the reduction in squared loss to $\ECE(\pone)$.  

\begin{lemma}
\label{lem:err-red}
For the predictors $\pone, \ptwo$ defined above, 
\begin{align}
\label{eq:third-sq}
        l_2(p^*, \pone)^2 - l_2(p^*, \ptwo)^2 & \geq \ECE(\pone)^2.
\end{align}
\end{lemma}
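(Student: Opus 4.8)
The plan is to use the fact that $\ptwo$ is precisely the recalibration of $\pone$: on each level set of $\pone$ it replaces the discretized value $(2j-1)\delta$ by the conditional mean of the outcome, i.e.\ by the constant that minimizes squared loss on that set. Consequently the gap between the two squared losses equals the ``calibration variance'' of $\pone$, and this dominates $\ECE(\pone)^2$ by Jensen's inequality.

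Concretely, set $m = \lceil 1/(2\delta) \rceil$ and for $j \in [m]$ let $S_j = \{x \in \X : p(x) \in I_j\}$; these sets partition $\X$. Write $\mu_j = \P_{\x \sim \mD_\X}[\x \in S_j]$, and discard the finitely many indices $j$ with $\mu_j = 0$. By construction $\pone$ equals the constant $v_j := (2j-1)\delta$ on $S_j$, and $\ptwo$ equals the constant $m_j := \E[\ys \mid \x \in S_j]$ on $S_j$; by the tower rule $m_j = \E[\ps(\x) \mid \x \in S_j]$, since membership in $S_j$ is a function of $\x$. Both predictors being piecewise constant on the $S_j$, I would expand
\[
 l_2(p^*,\pone)^2 = \sum_j \mu_j\, \E\big[(\ps(\x) - v_j)^2 \,\big|\, \x \in S_j \big],
\]
and likewise for $\ptwo$ with $v_j$ replaced by $m_j$. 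Applying within each bin the identity $\E[(W-a)^2] = \E[(W-\E W)^2] + (\E W - a)^2$ to $W = \ps(\x)$ conditioned on $\x \in S_j$, with $\E W = m_j$ and $a = v_j$, the variance terms are common to the two predictors and cancel, leaving
\[
 l_2(p^*,\pone)^2 - l_2(p^*,\ptwo)^2 = \sum_j \mu_j (m_j - v_j)^2 .
\]

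It then remains to compare the right-hand side with $\ECE(\pone)$. Since the values $v_j$ are distinct, the level sets of $\pone$ are exactly the $S_j$, and $\E[\ys - \pone(\x) \mid \x \in S_j] = m_j - v_j$; hence by definition $\ECE(\pone) = \sum_j \mu_j \, |m_j - v_j|$. As $(\mu_j)_j$ is a probability vector and $t \mapsto t^2$ is convex, Jensen's inequality gives $\ECE(\pone)^2 = \big(\sum_j \mu_j|m_j - v_j|\big)^2 \le \sum_j \mu_j(m_j - v_j)^2$, and combining with the previous display proves the lemma.

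I do not expect a real obstacle: the heart of the argument is just the Pythagorean identity for conditional means — exactly the intuition recorded in remark~(2) preceding the statement — followed by one application of Jensen. The only points needing a line of care are the tower-rule identification of $\E[\ps(\x)\mid \x \in S_j]$ with $m_j$ and the bookkeeping for empty bins.
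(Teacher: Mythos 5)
Your proof is correct and follows essentially the same route as the paper's. Both arguments establish that the squared-loss gap equals $\sum_j \mu_j (m_j - v_j)^2 = \E[(\ptwo(\x)-\pone(\x))^2]$ and then apply Jensen's inequality (convexity of $t \mapsto t^2$) to lower-bound this by $\ECE(\pone)^2$; the only cosmetic difference is that you obtain the intermediate identity via a per-bin bias–variance decomposition, whereas the paper expands $a^2 - b^2 = (a-b)(a+b)$ and then conditions on the bin, but the underlying fact ($\E[\ps(\x)\mid \x \in S_j]=m_j$) and the structure of the argument are the same.
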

\begin{proof}
\eat{
We first show that
\begin{align}
    \label{eq:second-sq}
    \E[(p^*(\x) - \pone(\x))^2] - \E[(p^*(\x) - \ptwo(\x))^2] & = \E[(\ptwo(\x) - \pone(\x))^2.
\end{align}}
We write the LHS of Equation \eqref{eq:third-sq} as
\begin{align*}
    \E[(p^*(\x) - \pone(\x)^2] - \E[(p^*(\x) - \ptwo(\x)^2] &= \E[(\ptwo(\x) - \pone(\x))(2p^*(\x) - \pone(\x) - \ptwo(x))]
\end{align*}
We consider the distribution on intervals induced by choosing $\x \sim \mD$ and $\mb{I_j} \ni p(\x)$. Since $\pone$ and $\ptwo$ are constant for each interval $I_j$, we can write $\pone(I_j)$ and $\ptwo(I_j)$ for their values in this interval without ambiguity. Hence by first taking expectations over $\mb{I_j}$ and then $p(\x) \in \mb{I_j}$
\begin{align*}
    \E[(\ptwo(\x) - \pone(\x))(2p^*(\x) - \pone(\x) - \ptwo(x))]
    &= \E_{\mb{I_j}}\lt[(\ptwo(\mb{I_j}) - \pone(\mb{I_j}))\E_{\x|p(\x) \in I_j}[2p^*(\x) - \pone(\mb{I_j}) - \ptwo(\mb{I_j})] \rt]\\
    &= \E_{\mb{I_j}}\lt[(\ptwo(\mb{I_j}) - \pone(\mb{I_j}))^2\rt]\\
    &= \E_{\x \sim \mD}\lt[(\ptwo(\x) - \pone(\x))^2\rt].
\end{align*}
where the penultimate line uses $\E[p^*(\x)|\x \in I_j] = \ptwo(I_j)$. 
    
Since $\pone, \ptwo$ are both constant one each interval $I_j$, we have
    \begin{align*}
        \E_{\x \sim \mD}\lt[(\ptwo(\x) - \pone(\x))^2\rt] &= \E_{\mb{I_j}}\lt[\E[\ptwo(\x)  - \pone(\x))|p(\x) \in I_j]^2\rt]\\
        &= \E_{\mb{I_j}}\lt[\E[\y - \pone(\x))|p(\x) \in I_j]^2\rt]\\
        &\geq \E_{\mb{I_j}}\lt[ \lt|\E[\y - \pone(\x))|p(\x) \in I_j]\rt|\rt]^2\\
        &= \ECE(\pone)^2
    \end{align*}
where the first inequality uses the convexity of $x^2$. \eat{We can rewrite Equation \eqref{eq:first-sq} as
\[ \E[(p^*(\x) - p(\x))^2] - \E[(p^*(\x) - \pone(\x))^2] \geq -2\delta.\]
Adding these inequalities gives the desired claim.}
\end{proof}

\begin{lemma}
\label{lem:eff-cal}
    Let $\mu, \delta \in [0,1]$.  Given acess to a predictor $p$ and random samples from $\mD$,
    \begin{itemize} 
    \item There exists an algorithm $\estECE(p, \mu)$ which returns an estimate  of $\ECE(\pone)$ within additive error $\mu$. The algorithm runs in time and sample complexity $\tilde{O}(1/(\delta\mu^3))$. 
    \item There exists an algorithm $\calg(p, \delta)$ which returns a predictor $\pth$ which has $\ECE(\pth) \leq \delta$ and $l_1(\ptwo, \pth) \leq \delta$. The algorithm has time and sample complexity $\tilde{O}(1/\delta^4)$.
    \end{itemize}
\end{lemma}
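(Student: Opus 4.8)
The plan is to base both algorithms on a single sampling primitive: draw a batch of $N$ i.i.d.\ samples $(\x,\y)\sim\mD$, bucket each sample according to which interval $I_j$ (among the $m=\lceil 1/2\delta\rceil$ intervals) contains $p(\x)$, and use the $N_j$ samples landing in bucket $j$ to form an empirical conditional mean $\hat y_j$ of $y_j := \ptwo(I_j) = \E[\y\mid p(\x)\in I_j]$, along with an empirical mass $\hat w_j := N_j/N$ of $w_j := \P[p(\x)\in I_j]$. Recalling that $\pone(I_j) = (2j-1)\delta$ is known exactly and that $\pone$ and $\pth$ will both be constant on each $I_j$, the two quantities we must control are
\[ \ECE(\pone) = \sum_{j=1}^{m} w_j\, |y_j - \pone(I_j)| \qquad\text{and}\qquad l_1(\ptwo,\pth) = \sum_{j=1}^{m} w_j\, |y_j - \pth(I_j)|. \]
The one genuine obstacle is that a bucket with tiny mass $w_j$ gets too few samples to estimate $y_j$; the remedy is \emph{thresholding}, and I expect the bookkeeping around the thresholded buckets to be the crux of the proof (and the reason the sample complexities are what they are, rather than the naive $\tilde{O}(1/(\delta\mu^2))$ and $\tilde{O}(1/\delta^3)$). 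Concretely, call bucket $j$ \emph{active} if $N_j \ge n_0$; a Chernoff bound shows that with failure probability $\le\beta$ every bucket with $w_j \ge 2n_0/N$ is active, so the total mass of the inactive buckets is at most $2mn_0/N$.

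For $\estECE(p,\mu)$ I would set $n_0 = \Theta(\mu^{-2}\log(m/\beta))$ and choose $N$ with $2mn_0/N \le \mu$, i.e.\ $N = \tilde{O}(mn_0/\mu) = \tilde{O}(1/(\delta\mu^3))$, and output $\widehat{\ECE} = \sum_{j\ \text{active}} \hat w_j\, |\hat y_j - \pone(I_j)|$. The error $|\widehat{\ECE}-\ECE(\pone)|$ splits into three pieces: (i) the omitted inactive buckets contribute at most their total mass $2mn_0/N\le\mu$; (ii) for each active bucket, Hoeffding with $N_j\ge n_0$, union-bounded over $\le m$ buckets, gives $|\hat y_j - y_j|\le\mu$, costing at most $\mu\sum_j w_j=\mu$; (iii) swapping $w_j$ for $\hat w_j$ costs $\|\hat w - w\|_1$, the $\ell_1$ deviation of an empirical distribution on $m$ atoms, which is $O(\sqrt{m/N})=O(\mu^{3/2})\le\mu$ with high probability. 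Hence $|\widehat{\ECE}-\ECE(\pone)|=O(\mu)$; rescaling $\mu$ by a constant gives the claim, and the running time is $O(N+m)=\tilde{O}(1/(\delta\mu^3))$.

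For $\calg(p,\delta)$ I would instead set $n_0 = \Theta(\delta^{-2}\log(m/\beta))$ and $N$ with $2mn_0/N\le\delta/2$, i.e.\ $N = \tilde{O}(mn_0/\delta) = \tilde{O}(1/\delta^4)$, and define $\pth$ to equal $\hat y_j$ on active buckets and $\pone(I_j)$ on inactive ones (optionally rounding $\hat y_j$ to the nearest multiple of $\delta$ if a $\delta$-discrete predictor is desired, which only changes constants). Then $l_1(\ptwo,\pth)=\sum_j w_j|y_j-\hat y_j|$ is at most $(\delta/2)\sum_j w_j$ from the active buckets (Hoeffding: $|\hat y_j-y_j|\le\delta/2$) plus the total inactive mass $\delta/2$, so $l_1(\ptwo,\pth)\le\delta$. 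It remains to get $\ECE(\pth)\le\delta$: since $\pth$ is constant on each $I_j$, its level sets are exactly the unions of those $I_j$ on which $\pth$ takes a common value, and a one-line convexity argument (triangle inequality) applied to each such union shows $\ECE(\pth)\le\sum_j w_j|y_j-\hat y_j| = l_1(\ptwo,\pth)\le\delta$. The running time is again $O(N+m)=\tilde{O}(1/\delta^4)$. The hard part, as flagged, is getting the Chernoff/Hoeffding accounting for the inactive buckets exactly right — simultaneously arguing their total mass is negligible and that no estimate we keep comes from a bucket that is too sparse.
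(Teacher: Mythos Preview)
Your proposal is correct and follows essentially the same approach as the paper: bucket the samples by the intervals $I_j$, use empirical means and masses, handle low-mass buckets separately via a threshold, and apply Chernoff/Hoeffding with a union bound over the $O(1/\delta)$ buckets. The paper's estimator keeps all buckets in the sum (weighted by $N_j/N$) and argues that small-mass buckets contribute negligibly to the error, whereas you explicitly drop inactive buckets; and for $\pth$ the paper sets $\pth(I_j)=\hat y_j$ everywhere while you fall back to $\pone(I_j)$ on inactive buckets---both variants work, and your treatment is arguably cleaner about what happens when $N_j=0$. One small slip: in your $\ECE(\pth)$ bound you wrote $\sum_j w_j|y_j-\hat y_j|$, but on inactive buckets $\pth(I_j)=\pone(I_j)$, so the correct expression is $\sum_j w_j|y_j-\pth(I_j)|=l_1(\ptwo,\pth)$, which is what you then use anyway.
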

For both algorithms, the stated guarantees hold with a failure probability can be made arbitrarily small by standard amplification. For simplicity, we have omitted this from the statement. The proof of this claim is through standard sampling arguments and use of Chernoff bounds. We record the following corollary, which follows from Lemmas \ref{lem:ell1}, \ref{lem:err-red} and \ref{lem:eff-cal}. The proofs are in Appendix \ref{app:algo}. 

\begin{corollary}
\label{cor:err-red}
For the predictors $p, \pone, \pth$ defined above, 
\begin{align}
\label{eq:fourth-sq}
        l_2(p^*,p)^2- l_2(p^*,\pth)^2  & \geq \ECE(\pone)^2 - 4\delta.
\end{align}
\end{corollary}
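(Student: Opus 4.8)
The plan is to telescope the potential $l_2(p^*,\cdot)^2$ along the chain of predictors $p \to \pone \to \ptwo \to \pth$, applying the three preceding lemmas to each link. Two of the three links (discretization and the sampling-based recalibration) perturb the potential by at most an additive $2\delta$ in either direction, while the middle link (ideal recalibration $\pone \to \ptwo$) is a genuine one-sided decrease of size at least $\ECE(\pone)^2$.

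Concretely, I would first compare $p$ with its $\delta$-discretization $\pone$. By construction $l_\infty(p,\pone)\le\delta$, and since $l_1 \le l_\infty$ this gives $l_1(p,\pone)\le\delta$, so Equation~\eqref{eq:first-sq} of Lemma~\ref{lem:ell1} yields $l_2(p^*,p)^2 - l_2(p^*,\pone)^2 \ge -2\delta$. Next I would invoke Lemma~\ref{lem:err-red} verbatim: recalibrating $\pone$ to $\ptwo$ reduces the potential by at least $\ECE(\pone)^2$, i.e. $l_2(p^*,\pone)^2 - l_2(p^*,\ptwo)^2 \ge \ECE(\pone)^2$. Finally I would compare $\ptwo$ with the efficiently computable approximation $\pth = \calg(p,\delta)$; Lemma~\ref{lem:eff-cal} guarantees $l_1(\ptwo,\pth)\le\delta$, so a second application of Lemma~\ref{lem:ell1} gives $l_2(p^*,\ptwo)^2 - l_2(p^*,\pth)^2 \ge -2\delta$. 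Summing the three inequalities, the intermediate terms $l_2(p^*,\pone)^2$ and $l_2(p^*,\ptwo)^2$ cancel, leaving exactly $l_2(p^*,p)^2 - l_2(p^*,\pth)^2 \ge \ECE(\pone)^2 - 4\delta$, which is Equation~\eqref{eq:fourth-sq}.

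There is no substantive obstacle here: the corollary is pure bookkeeping on top of the three cited lemmas. The only points requiring care are (i) getting the direction of each inequality right — the discretization and the sampling steps are two-sided and must be bounded in absolute value, whereas the $\pone \to \ptwo$ step must be used as a one-sided lower bound on the decrease — and (ii) verifying that the $l_1$-closeness hypotheses needed to invoke Lemma~\ref{lem:ell1} are actually available, which they are, since $l_\infty(p,\pone)\le\delta$ and $l_1(\ptwo,\pth)\le\delta$ by the constructions of $\pone$ and $\calg$ respectively.
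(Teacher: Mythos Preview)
Your proposal is correct and follows essentially the same approach as the paper: telescoping $l_2(p^*,\cdot)^2$ along $p \to \pone \to \ptwo \to \pth$, using Lemma~\ref{lem:ell1} (via $l_\infty(p,\pone)\le\delta$ and $l_1(\ptwo,\pth)\le\delta$) for the outer links and Lemma~\ref{lem:err-red} for the middle one, then summing.
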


\subsection{Multiaccuracy and Calibrated Multiaccuracy}

We now state and analyze the algorithm for achieving calibrated multiaccuracy.
To begin, we recall the algorithm of \cite{hkrr2018} for learning multiaccurate predictors.
We present a formulation that will be useful for our main algorithm (Algorithm \ref{alg:map}).
The algorithm $\MA$ is given a predictor $p_0$ as input.
It returns as output a predictor $\tp \in \MC(\alpha)$, such that the squared distance to $p^*$ only decreases. 
We use $\Pi(h)$ to denote the clip operator which takes a possibly real valued function $h:\X \rgta \R$ and truncates any values that lie outside $[0,1]$ to the closest value in $[0,1]$.

\begin{algorithm}
\label{alg:wmc}
\caption{$\mathsf{MA}$}
\textbf{Input:}  Predictor $p_0: \X \rgta [0,1]$\\
Error parameter $\alpha \in [0,1]$. \\
Oracle access to a $(\rho, \sigma)$ Weak learner $\WL$ for $\mC$ under $\mD_\X$ where $\alpha \geq \rho$ .\\
\textbf{Output:}  Predictor $p_T$.
\begin{algorithmic}
\STATE $t \gets 0$
\STATE $ma \gets \mathsf{false}$
\WHILE{$\neg ma$}
    \STATE $c_{t+1} \gets \WL(p^* -p_t)$. 
    \IF{$c_{t+1} = \bot$}
        \STATE $ma \gets \mathsf{true}$ 
    \ELSE
        \STATE $h_{t+1} \gets p_t + \sigma c_{t+1}$.
        \STATE $p_{t+1} \gets \Pi(h_{t+1})$.
        \STATE $t \gets t+1$.
    \ENDIF
\ENDWHILE
\RETURN $p_t$.
\end{algorithmic}
\end{algorithm}

The counter $t$ tracks the number of non-trivial updates made to the predictor. If we return at $t =T$, then the total number of calls to $\WL$ is $T+1$, where the last call does not yield a non-trivial update. We present the proof in Appendix \ref{app:algo} for completeness.

\begin{lemma}\cite{hkrr2018}
\label{lem:hkrr}
    Assume that algorithm $\MA(p_0, \alpha)$ returns the predictor $p_T$ where $T \geq 0$. Then $p_T \in \MA(\alpha)$ and
    \[ l_2(p^*, p_0)^2 -  l_2(p^*, p_T)^2 \geq T\sigma^2.\]
\end{lemma}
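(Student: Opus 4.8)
The plan is to establish both claims via the standard potential-function argument of \cite{hkrr2018}, taking the potential to be the squared distance to the Bayes optimal predictor, $\Phi(p) = l_2(p^*, p)^2 = \E[(p^*(\x) - p(\x))^2]$. Two things need to be shown: (i) the returned predictor $p_T$ is $(\mC, \alpha)$-multiaccurate, i.e.\ $p_T \in \MA(\alpha)$; and (ii) each of the $T$ non-trivial updates decreases $\Phi$ by at least $\sigma^2$, after which telescoping yields the stated inequality.

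For (i) I would use the exit condition: the loop terminates only when $\WL(p^* - p_T) = \bot$, which by Definition~\ref{def:weak} means no $c \in \mC$ satisfies $\E_\mD[c(\x)(p^*(\x) - p_T(\x))] \geq \rho$. Since $\mC$ is closed under negation and $\rho \leq \alpha$, this yields $|\E_\mD[c(\x)(p^*(\x) - p_T(\x))]| < \alpha$ for every $c \in \mC$; replacing $p^*(\x)$ by $\y^*$ inside the expectation (legitimate since $c(\x)$ is $\x$-measurable and $p^*(x) = \E[\y^*\mid\x=x]$) gives exactly $(\mC,\alpha)$-multiaccuracy of $p_T$.

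For (ii) I would fix a step $t < T$ with $c_{t+1} \neq \bot$, write $h_{t+1} = p_t + \sigma c_{t+1}$, and expand the square:
\[
\Phi(p_t) - \E[(p^*(\x) - h_{t+1}(\x))^2] = 2\sigma\,\E[c_{t+1}(\x)(p^*(\x) - p_t(\x))] - \sigma^2\,\E[c_{t+1}(\x)^2] \;\geq\; 2\sigma^2 - \sigma^2 = \sigma^2,
\]
using the weak-learner guarantee $\E[c_{t+1}(\x)(p^*(\x) - p_t(\x))] \geq \sigma$ together with $|c_{t+1}| \leq 1$. Since $p^*$ is $[0,1]$-valued and $\Pi$ clips pointwise into $[0,1]$, the inequality $|p^*(x) - p_{t+1}(x)| \leq |p^*(x) - h_{t+1}(x)|$ holds for every $x$, so $\Phi(p_{t+1}) \leq \E[(p^*(\x) - h_{t+1}(\x))^2]$ and hence $\Phi(p_t) - \Phi(p_{t+1}) \geq \sigma^2$. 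Summing over $t = 0, \dots, T-1$ then gives $l_2(p^*, p_0)^2 - l_2(p^*, p_T)^2 \geq T\sigma^2$; as a side remark this also shows the loop must terminate with $T \leq 1/\sigma^2$, since $\Phi \leq 1$ always.

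The argument is routine and I do not expect a genuine obstacle. The only points needing care are the bookkeeping at the weak-learner call — ensuring the returned hypothesis correlates at least $\sigma$ with the residual $p^* - p_t$, and that closure of $\mC$ under negation lets us ignore the sign of that correlation — and the simple but essential observation that projecting onto $[0,1]$ via $\Pi$ cannot increase the squared distance to the $[0,1]$-valued target $p^*$.
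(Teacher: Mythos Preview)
Your proposal is correct and follows essentially the same route as the paper: the exit condition plus closure under negation gives multiaccuracy, and the per-step potential drop is obtained by expanding $\E[(p^*-p_t)^2]-\E[(p^*-h_{t+1})^2]$, invoking the weak-learner guarantee and boundedness of $c_{t+1}$, and then using that clipping onto $[0,1]$ cannot increase squared distance to $p^*$. The only difference is cosmetic ordering of the algebra.
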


We now present our algorithm for finding a predictor in $\MAp(\alpha)$. We will set the discretization $\delta$ to be small compared to $\alpha$ ($\alpha^2/C$ for some constant $C$). The algorithm may be viewed as starting with an arbitrary predictor $p_0$ and then running the following steps:
\begin{enumerate}
    \item We set $p_{t+1} = \malg(p_t)$ to get a predictor that is mulitaccurate.  
    \item We estimate the calibration error $\ECE(\pone_{t+1})$ using $\estECE$. 
    \begin{enumerate}
        \item If the calibration error is large, we recalibrate it to $\pth_{t+1}$ using $\calg$ so that the calibration error drops to $\delta$ and repeat the loop.
        \item  Else, we return the predictor $\pone_{t+1}$.
        \end{enumerate}
\end{enumerate}

When we terminate, both multiaccuracy and calibration are achieved. Both steps reduce the potential function $l_2(p^*, p_t)^2$, (for suitable choices of parameters) which allow us to bound the overall number of iterations.

\begin{algorithm}
\label{alg:map}
\caption{$\mathsf{calMA}$}
\textbf{Input:}  Predictor $p_0: \X \rgta [0,1]$\\
Error parameter $\alpha \in [0,1]$. \\
Oracle access to a $(\rho, \sigma)$-Weak learner $\WL$ for $\mC$ under $\mD_\X$ where $\alpha - \alpha^2/32 \geq \rho$.\\
\textbf{Output:}  Predictor $q_T$.\\
\begin{algorithmic}
\STATE $\delta \gets \alpha^2/32$.
\STATE $\mu \gets \alpha/4$
\STATE $q_0 \gets p_0$
\STATE $ma \gets \mathsf{false}$ 
\STATE $t \gets 0$
\WHILE{$\neg ma$}
    \STATE $t \gets t+1$
    \STATE $p_{t} \gets  \malg(q_{t-1}, \alpha - \delta)$
    \IF{$\estECE(\pone_{t}, \mu) > 3\alpha/4$}
        \STATE $q_{t} \gets \calg(p_{t}, \delta)$
    \ELSE
        \STATE $q_{t} \gets \pone_{t}$
        \STATE $ma \gets \mathsf{true}$
    \ENDIF
    
\ENDWHILE
\STATE return $q_t$ 
\end{algorithmic}
\end{algorithm}

Some observations about the execution of Algorithm \ref{alg:map}: the counter $t$ tracks the number of executions of the while loop.  Assume that we return at $t = T$. The updates $p_t$ to $q_t$ are made by $\calg$, except the last update where $q_T = \pone_T$.

\begin{theorem}
\label{thm:alg-map}
    For $\alpha > 0$, let $\rho, \sigma, \delta$ be as given in Algorithm \ref{alg:map}, and let $q_T$ be the predictor returned.  
    \begin{enumerate}
        \item $q_T \in \MAp(\alpha)$ and it is $\delta$-discrete. 
        \item The number of iterations of the while loop is bounded by $O(1/\alpha^2)$. 
        \item The total number of calls to $\WL$ is bounded by $O(1/\sigma^2)$. 
    \end{enumerate}
\end{theorem}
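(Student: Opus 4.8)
The plan is to introduce the potential $\Phi_t = l_2(p^*, q_t)^2 \in [0,1]$ and track how it evolves across iterations of the while loop. I would show that every iteration except the last one both drives $\Phi_t$ down by a fixed $\Omega(\alpha^2)$ amount (which bounds the iteration count) and "pays off" its non-trivial weak-learner calls at a rate of $\sigma^2$ per call via Lemma~\ref{lem:hkrr} (which bounds the total number of calls). All the arithmetic is engineered around the settings $\delta = \alpha^2/32$ and $\mu = \alpha/4$.

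\textbf{Part 1.} First I would unwind the exit condition. When the loop terminates at $t = T$ we have $q_T = \pone_T$, which is $\delta$-discrete by construction. The call $p_T = \malg(q_{T-1}, \alpha - \delta)$ is legitimate because $\alpha - \delta = \alpha - \alpha^2/32 \ge \rho$, and it returns a $(\mC, \alpha - \delta)$-multiaccurate predictor. Since $l_1(p_T, \pone_T) \le l_\infty(p_T, \pone_T) \le \delta$, the second half of Lemma~\ref{lem:ell1} upgrades this to $q_T = \pone_T$ being $(\mC, \alpha)$-multiaccurate. For calibration, terminating means $\estECE(\pone_T, \mu) \le 3\alpha/4$, so by the additive-$\mu$ accuracy of $\estECE$ (Lemma~\ref{lem:eff-cal}) we get $\ECE(q_T) = \ECE(\pone_T) \le 3\alpha/4 + \alpha/4 = \alpha$. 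Hence $q_T \in \MAp(\alpha)$.

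\textbf{Parts 2 and 3.} Next I would bound the per-iteration decrement of $\Phi$. In a non-terminal iteration $t$, Lemma~\ref{lem:hkrr} applied to the $\malg$ step gives $\Phi_{t-1} - l_2(p^*, p_t)^2 \ge T_t\sigma^2 \ge 0$, where $T_t$ counts the non-trivial updates inside that call; and since $\estECE(\pone_t, \mu) > 3\alpha/4$ we have $\ECE(\pone_t) > 3\alpha/4 - \mu = \alpha/2$, so Corollary~\ref{cor:err-red} applied to $q_t = \calg(p_t, \delta)$ gives $l_2(p^*, p_t)^2 - \Phi_t \ge \ECE(\pone_t)^2 - 4\delta > \alpha^2/4 - \alpha^2/8 = \alpha^2/8$. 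Adding these, $\Phi_{t-1} - \Phi_t \ge T_t\sigma^2 + \alpha^2/8$. Summing over $t = 1, \dots, T-1$ against $\Phi_0 \le 1$ yields $T - 1 \le 8/\alpha^2$, which both shows the loop terminates and gives Part 2. For the terminal iteration I would instead appeal to the first half of Lemma~\ref{lem:ell1} (only $l_1(p_T, \pone_T) \le \delta$ is available) to get $\Phi_{T-1} - \Phi_T \ge T_T\sigma^2 - 2\delta$. Telescoping all $T$ inequalities gives $\Phi_0 - \Phi_T \ge \sigma^2\sum_t T_t - 2\delta$, so $\sum_t T_t \le (1 + 2\delta)/\sigma^2 = O(1/\sigma^2)$; each iteration also makes one terminating $\bot$ call, adding $T = O(1/\alpha^2)$ calls, and since $\sigma \le \rho \le \alpha - \delta < \alpha$ implies $1/\alpha^2 \le 1/\sigma^2$, the grand total is $O(1/\sigma^2)$, which is Part 3. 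I would close by noting that the failure probabilities of $\WL$, $\estECE$, $\calg$ are absorbed by standard amplification, as in Lemma~\ref{lem:eff-cal}.

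\textbf{Main obstacle.} The delicate point is bookkeeping the three distinct sources of slack — the $4\delta$ loss in Corollary~\ref{cor:err-red}, the $2\delta$ two-sided swing from $\ell_1$-perturbation in Lemma~\ref{lem:ell1}, and the $\mu$ estimation error of $\estECE$ — and verifying that $\delta = \alpha^2/32$, $\mu = \alpha/4$ simultaneously (i) leave a strictly positive per-iteration drop of $\alpha^2/8$, (ii) make the exit test certify exactly $\ECE(q_T) \le \alpha$, and (iii) keep $\malg$'s error parameter $\alpha - \delta$ above the weak-learning threshold $\rho$. A secondary care point is keeping the "free" terminating $\bot$ call separate from the "charged" updates $T_t$ in the accounting, so that the two sums combine to $O(1/\sigma^2)$ rather than to something larger.
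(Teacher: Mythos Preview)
Your proposal is correct and follows essentially the same approach as the paper: the same potential $l_2(p^*,\cdot)^2$, the same per-iteration drop of $\alpha^2/8$ via Lemma~\ref{lem:hkrr} plus Corollary~\ref{cor:err-red}, and the same final accounting of $\sum_t T_t + T = O(1/\sigma^2)$. Two minor differences worth noting: your Part~1 is actually more complete than the paper's, which asserts $q_T \in \MAp(\alpha)$ but only writes out the multiaccuracy half and never explicitly verifies $\ECE(q_T)\le\alpha$ from the exit test; and in Part~3 you telescope all the way to $\Phi_T$ picking up a harmless $2\delta$ slack, whereas the paper stops at $l_2(p^*,p_T)^2\ge 0$ and avoids that term entirely---either works.
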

\begin{proof}
    We have $q_T = \pone_{T}$, hence it is $\delta$-discrete. The predictor $p_{T}$ is $(\mC, \alpha - \delta)$-multiaccurate, since it is returned by a call to $\malg$. By Lemma  \ref{lem:ell1}, $q_T$ is $(\mC, \alpha)$-multiaccurate. This proves claim (1).

    Assume that when we set $p_t = \malg(q_{t-1}, \alpha - \delta)$, this results in $S_t \geq 0$ calls to the weak learner $\WL$. The by Lemma \ref{lem:hkrr},
    \begin{align} 
    \label{eq:prog1}
        l_2(p^*, q_{t-1})^2 - l_2(p^*, p_{t})^2 \geq S_t\sigma^2.
    \end{align}
    In every iteration of the while loop except the last, we have $\estECE(\pone_t, \mu) \geq 3\alpha/4$. By Lemma \ref{lem:eff-cal}, this means that
    \[ \ECE(\pone_t) \geq \frac{3\alpha}{4} - \mu = \frac{\alpha}{2}. \]
    Since $q_t = \pth_t$, applying Corollary \ref{cor:err-red}, we have
    \begin{align}  
    \label{eq:prog2}
        l_2(p^*, p_t)^2 - l_2(p^*, q_t)^2  \geq \ECE(\pone_t)^2 - 4\delta \geq \frac{\alpha^2}{8}.
    \end{align}
    Adding Equations \eqref{eq:prog1} and \eqref{eq:prog2}, for $t \in \{1, \ldots, T-1\}$,
    \[ l_2(p^*,q_{t-1})^2 - l_2(p^*,q_t)^2 \geq \frac{\alpha^2}{8}. \]
    Summing this over all $t$, 
    \[ l_2(p^*,q_0)^2 - l_2(p^*,q_{T-1})^2 \geq (T-1)\frac{\alpha^2}{8}. \]
    Since $q_0 = p_0$ and $l_2(p^*,q_{T-1})^2 \geq 0$, we have
    \[ T \leq 1 + \frac{8}{\alpha^2}{l_2(p^*,p_0)^2} = O(1/\alpha^2). \]
    
    To bound the number of calls to the weak learner, we sum Equation \eqref{eq:prog1} over all $t \in [T]$, and Equation \eqref{eq:prog2} over all $t \leq T -1$ to get
    \[ l_2(p^*,p_T)^2 - l_2(p^*, p_0)^2 \geq \sum_tS_t \sigma^2 + (T-1)\frac{\alpha^2}{8}. \]
    This implies that 
    \[ \sum_t S_t \leq 1/\sigma^2 \]
    Since the number of calls to the weak learner in loop $t$ is bounded by $S_t +1$, we bound the number of calls by
    \[ \sum_t (S_t + 1) \leq \fr{\sigma^2} + T = O(1/\sigma^2) \] 
    where we bound $T = O(1/\sigma^2)$, since $T= O(1/\alpha^2)$  and $\alpha \geq \rho \geq \sigma$.
\end{proof}

A quick remark about sample complexity: in each iteration of the while loop, we use fresh samples in order to ensure that the data and the current hypotheses are independent. This results in an $O(1/\alpha^2)$ sample overhead. It might be possible to improve the sample complexity using adaptive data analysis techniques as in \cite{hkrr2018}, we leave this open.  A similar issue arose in the original analysis of the Isotron algorithm \cite{KalaiS09}, this was later remedied in the work of \cite{KakadeKKS11}.

\subsection{Complexity Comparison}

In this section we compare and contrast the complexity of algorithms for computing a predictor in $\MA, \MAp$ and $\MC$. For an expressive hypothesis class $\mC$, the running time is likely to be dominated by the calls to the $(\rho, \sigma)$ weak learner $\WL$. We compare the number of oracle calls needed for computing a predictor in each of these classes. We emphasize that this is a comparison between the best known upper bounds. For multiaccuracy, we use the \cite{hkrr2018} algorithm as analyzed in Lemma \ref{lem:hkrr}. For multicalibration, we use the analysis of the algorithm from \cite[Section 9]{omni}, which is derived from the boosting by branching programs algorithm by \cite{MansourM2002}.

\begin{itemize}
    \item For $\MA(\alpha)$, the number of calls made by the algorithm of \cite{hkrr2018} is bounded by $O(1/\sigma^2)$. We require $\alpha \geq \rho$.  
    \item For $\MAp(\alpha)$, the number of calls made by Algorithm \ref{alg:map} bounded by $O(1/\sigma^2)$. We require $\alpha - \alpha^2/32 \geq \rho$. 
    \item For $\MC(\alpha)$, the number of calls made by the algorithm of \cite{omni} is bounded by $O(1/\alpha^2\sigma^4)$. The weak learning assumption required is also somewhat stronger, see Appendix \ref{app:omni} for a detailed discussion. For simplicity, one could say that they require a $(\rho, \sigma)$-weak learner where $\alpha/2 \geq \rho$ under marginal distributions on $\X$ that are different from $\mD_\X$.\footnote{They are obtained by conditioning $\mD_\X$ on states with probability as small as $O(\alpha^2\sigma^2)$.}
\end{itemize}

The comparison above shows that $\MA$ and $\MAp$ have similar complexities in terms of the number of calls to the weak learner. The number of calls required for $\MC$ is significantly larger. 

Perhaps more importantly, the algorithm for $\MAp$ is easy to implement in Python using standard packages for regression and calibration, with some simple additional logic. In contrast, the logic to implement boosting by branching programs is non-trivial, and is not implemented by any standard python libraries to our knowledge \cite{gopalan2021multicalibrated}.
In practice, the additional complexity of full multicalibration manifests primarily in terms of the samples needed to prevent overfitting. The work of \cite{gopalan2021multicalibrated} found that the sample complexity often rendered to algorithm impractical on medium sized real-world datasets. Even if we treat parameters like $\alpha$ and $\sigma$ as reasonably large constants, the data requirement of $\MAp$ compared to $\MC$ could easily reduce by 100-fold.

\subsection{Calibrated multiaccuracy requires non-linear models}

Finally, we discuss the hypothesis class we use to fit a calibrated multiaccurate predictor.
Because of the nature of the additive updates, the \cite{hkrr2018} algorithm for multiaccuracy returns a linear model $p \in \Lin(\C,1/\alpha)$.
Note, however, that the $\MAp$ algorithm does not return such a model.
In particular, the recalibration step introduces a nonlinearity, where we must condition on the value of the prediction $p_t$.
The benefits of linearity arise in the simplicity of working with linear models, but also in the sample complexity.
The easiest way to bound the sample complexity of achieving calibrated multiaccuracy is to take a fresh sample for each iteration of Algorithm~\ref{alg:map}, which loses a polynomial factor in $1/\alpha$.
The sample complexity of multiaccuracy, however, can be bounded more tightly using straightforward uniform convergence arguments.

We may wonder if moving outside the class of linear models, or a slight generalization, is really necessary.
Instead, we might hope that algorithms like the Isotron \cite{KalaiS09,KakadeKKS11} may achieve calibrated multiaccuracy.
The Isotron returns a so-called single index model (SIM) of the form
\begin{gather*}
    p(x) = u\left( \sum_{c \in \C} \lambda_c \cdot c(x) \right)
\end{gather*}
where $u:\R \to [0,1]$ is any monotonic nondecreasing function.
The Isotron algorithm is similar to Algorithm~\ref{alg:map}, switching between updating $\set{\lambda_c}$ based on the residuals given the current predictions and choosing $u$ to recalibrate predictions. However, a crucial difference is that there the recalibration uses isotonic regression. This guarantees a calibrated predictor, but is not guaranteed to reduce the squared error. Given the update rule---as in our algorithm---if the procedure terminates, then multiaccuracy and calibration are guaranteed.
\cite{KalaiS09}, however, only establish convergence in the well-specified setting; in the agnostic setting that we study, it is not clear whether the Isotron is guaranteed to terminate.

Here, we show that the Isotron algorithm might not converge in the agnostic setting.
Concretely, we construct a distribution and class $\C$, such that no SIM over the class $\C$ can achieve calibrated multiaccuracy.
This simple construction shows the necessity of moving outside the class $\Lin(\C,B)$, as in Algorithm~\ref{alg:map}.

\begin{lemma}[Informal]
No agnostic learning algorithm that returns a SIM $p(x) = u\left(\sum_\C \lambda_c \cdot c(x)\right)$ can guarantee $p \in \MAp(\alpha)$ for any $\alpha < 1/10$.
\end{lemma}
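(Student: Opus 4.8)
The plan is to prove the lemma by exhibiting an explicit hard instance $(\mD,\mC)$ and showing that no SIM over $\mC$ lies in $\MAp(\alpha)$ once $\alpha<1/10$; since a single such instance already defeats every SIM, no algorithm restricted to outputting SIMs can guarantee landing in $\MAp(\alpha)$. I would take a small finite domain — the most economical candidate being the two‑dimensional Boolean cube $\X=\{-1,1\}^2$ with the uniform distribution (falling back to a non‑uniform weighting, or a slightly larger domain, if more slack is needed) — and $\mC=\{1,\pm c_1,\pm c_2\}$ with $c_i(x)=x_i$. The crucial point is that $\mC$ does \emph{not} contain the parity $\chi:=c_1c_2$, so $\operatorname{span}(\mC)$ is three‑dimensional while the function space on $\X$ is four‑dimensional, and this single missing direction $\chi$ is where the obstruction lives. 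The Bayes predictor $p^*$ is specified through its Fourier expansion $p^*=a_0+a_1c_1+a_2c_2+a_{12}\chi$, with the coefficients chosen (subject to $p^*\in[0,1]^{\X}$) so that $|a_1|$ and $|a_2|$ are bounded away from $0$, $a_1$ is well separated from $\pm a_2$, and $|a_{12}|$ is comparatively large. Note $p^*\in\MAp(0)$, so $\MAp(\alpha)\neq\emptyset$; the content of the lemma is only that this set misses every SIM over $\mC$.

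Two elementary facts organize the argument. First, writing any predictor as $p=b_0+b_1c_1+b_2c_2+b_{12}\chi$, one checks that $p$ is a SIM over $\mC$ if and only if $|b_{12}|\le\min(|b_1|,|b_2|)$: a SIM is exactly a predictor monotone along some axis‑direction of the cube, and unwinding that condition over the four corners gives precisely this inequality, with monotonicity direction $(\sign b_1,\sign b_2)$. Second, since $1,c_1,c_2$ are orthogonal characters under the uniform distribution, $p$ is $(\mC,\alpha)$‑multiaccurate if and only if $|b_i-a_i|\le\alpha$ for $i\in\{0,1,2\}$. Combining the two, an $\alpha$‑multiaccurate SIM is forced to satisfy $|b_{12}|\le\min(|a_1|,|a_2|)+\alpha$, i.e.\ it has a \emph{small} parity component, whereas $p^*$ has a large one — so any such SIM is necessarily far from $p^*$ in exactly the coordinate that multiaccuracy cannot constrain, and the job of the calibration requirement is to rule this out.

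The remaining step is a finite case analysis over the level‑set partitions a SIM can induce on four points: all four values distinct; one value repeated and the others simple (the ``three‑tier'' shapes, the repeated cell being one of the two axis halves or one of the two ``parity pairs''); the two axis halves; one corner isolated; and constant. The constant and the axis SIMs have $b_1=0$ or $b_2=0$, contradicting $|a_1|,|a_2|>\alpha$. If all four values are distinct, $\alpha$‑calibration forces $\tfrac14\sum_x|p(x)-p^*(x)|\le\alpha$ with $p$ constant on singletons, hence every Fourier coefficient of $p$ — including $b_{12}$ — lies within $\alpha$ of that of $p^*$; with the two facts above this gives $|a_{12}|\le\min(|a_1|,|a_2|)+2\alpha$, contradicting the choice of $p^*$ once $\alpha<1/10$. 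In each remaining case the few distinct values of the SIM are pinned by calibration to explicit weighted averages of $p^*$ (up to an $O(\alpha)$ slack set by the cell masses), and substituting these into the Fourier/multiaccuracy identities — together, where needed, with the constraint that the values respect the monotone order of the partition — produces a contradiction. The main obstacle, and the reason the threshold is the concrete constant $1/10$ rather than something larger, is that the inequalities one needs on $(a_0,a_1,a_2,a_{12})$ pull against one another: they must be large enough to kill the low‑complexity SIMs through multiaccuracy, yet $p^*$ must remain in $[0,1]^{\X}$ and be robustly non‑monotone along every cube direction to kill the full‑rank SIM through calibration, and $a_1,a_2$ must be separated both from $0$ and from $\pm$ each other. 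Verifying that one instance clears all of these cases with exactly the slack needed for $\alpha<1/10$ — equivalently, that the polytope of admissible parameters (over the choice of instance) is nonempty — is the delicate, essentially computational part; everything else is routine bookkeeping.
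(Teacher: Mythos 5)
Your proposal follows essentially the same route as the paper: a hard instance on the two‑dimensional Boolean cube with $\mC$ spanning the degree‑$\le 1$ functions, multiaccuracy pinning the three low‑degree Fourier coefficients, calibration over the level sets forcing the parity coefficient as well, and then observing that SIMs over $\mC$ are exactly the unate functions (your clean Fourier criterion $|b_{12}|\le\min(|b_1|,|b_2|)$) while the target is far from unate. One small correction: you should not insist that $a_1$ be well separated from $\pm a_2$ — the paper's concrete instance $(p^*_{00},p^*_{01},p^*_{10},p^*_{11})=(0,\tfrac12,1,0)$ has $a_1=-a_2=\tfrac18$ and $a_{12}=-\tfrac38$, and the degenerate ``merged level set'' SIMs are ruled out directly by checking that the conditional expectation of $\y$ on the merged cell disagrees with the predicted constant; your requirement would needlessly shrink the admissible‑parameter polytope you worry about at the end.
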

\begin{proof}
We exhibit a distribution over the $2$-dimensional boolean cube and a collection of functions $\C$ such that calibrated multiaccuracy cannot be  achieved by any SIM model.
For simplicity, we show the violation for $\alpha = 0.1$
This example can be generalized to any dimension and approximate calibrated multiaccuracy.

Let $\X = \set{00,01,10,11}$, and take the class $\C = \set{x_i = 1, x_i = 0 : i \in [2]}$ to be the four subcubes.
Suppose that Bayes optimal probabilities on each $x \in \X$ are given as
\begin{align*}
    \ps_{00} = 0&&\ps_{01} = 1/2&&\ps_{10} = 1&&\ps_{11} = 0
\end{align*}
and take $\D_\X$ to be uniform over $\X$.

Consider any predictor $p$ that satisfies multiaccuracy and calibration.
The multiaccuracy constraints can be written as:
\begin{gather*}
    x_0 = 0:\qquad 1/2\cdot(p_{00} + p_{01}) = 1/4\\
    x_0 = 1:\qquad 1/2\cdot(p_{10} + p_{11}) = 1/2\\
    x_1 = 0:\qquad 1/2\cdot(p_{00} + p_{10}) = 1/2\\
    x_1 = 1:\qquad 1/2\cdot(p_{01} + p_{11}) = 1/4
\end{gather*}
By the first and final equations, we see that multiaccuracy implies $p_{00} = p_{11}$.
Further, the equations imply that $p_{01} = 1/2-p_{00}$ and $p_{10} = 1-p_{00}$.

Next, we consider the calibration constraints.
To do this, we must determine the level sets of $p$.
We argue that $p_{00} = p_{11}$ but are not equal to either $p_{01}$ or $p_{10}$.
In particular, if we set $p_{00}$ such that $p_{01} = 1/2-p_{00} = p_{00} = 1/4$, or $p_{10} = 1-p_{00} = p_{00} = 1/2$, then we violate calibration.
The expectations over the level set in each of these cases is $1/6$ and $1/3$, respectively, violating the calibration constraint by a constant.

In the alternative case, the level sets are $\set{p_{00} = p_{11}}$,$\set{p_{01}}$, and $\set{p_{10}}$.
Under calibration, $p_{00} = p_{11}$ must equal $0$, by the fact that the true expectation over these sets is $0$.
In fact, this implies that $p$ must be within $\ell_1$ distance $\alpha$ from $\ps$.

Finally, we note that any SIM computes a unate function, that is monotone according to some orientation of each $\set{x_i}$.
The true function $\ps$, however, is not close to unate.
Thus, no SIM can be close to $\ps$ and, by the above analysis, not SIM can achieve calibrated multiaccuracy.
\end{proof}

 \section{Experiments}
\label{sec:exp}

As a proof of concept we implemented a naive version of  algorithm $\mathsf{calMA}$~\eqref{alg:map}, where the weak learner is instantiated by running linear regression with square loss, and the calibration is instantiated by running isotonic regression. 
Specifically, given a set of features $x_1,\ldots,x_n$, and a label $y$, the implementation uses as base classifiers the set of linear functions over the features. Multiaccuracy is obtained since linear regression minimizes square loss. The calibration phase of the algorithm is instantiated by running isotonic regression with a freshly sampled calibration set. Both linear regression and isotonic regression are part of the sklearn Python library, thus the algorithm is remarkably simple to implement and consists of less than 100 lines of Python.

\paragraph{Data:}
The distribution for the 0 label is a mixture of $s \in \{2,4\}$ well separated Gaussian distributions over $d \in \{2,4,10\}$ dimensions. The distribution for the 1 label is the same as the 0 label, but shifted by a unit vector. As the dimensionality $d$ increases, the classification task becomes easier for a linear predictor, and thus we test the algorithm across a range of loss values. While the distributions are simple, they still pose a challenge for simple predictors and demonstrate well the strength of our approach. See Figure~\ref{fig:ground_truth} for an example in $2$ dimensions, where white dots represent points labeled $0$ and green dots are points labeled $1$.
As the data are synthetic, sample complexity is not an issue. We generated $3000$ points for the regression and $1000$ points for the calibration.

\paragraph{Metrics:}
We measured the loss $\mathsf{calMA}$ suffers and compared it to linear regression with $\ell_1$, $\ell_2$, exponential  loss and log loss, using the correct link function for each loss function. Linear regression with various loss functions is implemented using Python's scipy.optimize.minimize package.

\paragraph{Results:}
Results are summarized in the tables below. It is clear that $\MAp$ competes remarkably well with the optimal linear predictor across all tested loss functions, occasionally performing even better than the optimal linear predictor for the loss function. As a sanity check in the first table we tested the ``omniprediction'' of the simple $\ell_2$ predictor and indeed found it failed to produce the correct result of the $\ell_1$ case and log loss. An interesting example in $2$ dimensions is visualized in the figures below. In Figure~\ref{fig:ground_truth} we see the ground truth classification of labels. Linear regression on its own cannot create multiple clusters as can be seen in Figure~\ref{fig:linear_regression}. However, $\MAp$ uses multiple linear classifiers and manages to identify the structure of the clusters, as seen in Figure~\ref{fig:calMA}.

\begin{table}[h!]
\begin{center}
\begin{tabular}{||c c c c c||} 
 \hline
 Algorithm & $\ell_2$ & $\ell_1$ & Exp & Log-loss \\ [0.5ex] 
 \hline\hline
 Optimal & 0.21 & 0.35 & 1.54 & 0.61 \\ 
 \hline
 $\MAp$ & 0.20 & 0.32 & 1.65 & 0.635 \\
 \hline
 Linear Regression & 0.21 & 0.43 & 1.61 & 1.22 \\
 \hline
\end{tabular}
\end{center}
\caption{$s = 2, d=2$. Number of iterations is $4$}
\end{table}

\begin{table}[h!]
\begin{center}
\begin{tabular}{||c c c c c||} 
 \hline
 Algorithm & $\ell_2$ & $\ell_1$ & Exp & Log-loss \\ [0.5ex] 
 \hline\hline
 Optimal & 0.18 & 0.28 & 1.51 & 0.57 \\ 
 \hline
 $\MAp$ & 0.18 & 0.28 & 1.53 & 0.58 \\
 \hline
\end{tabular}
\end{center}
\caption{$s = 4, d=4$. Number of iterations is $5$}
\end{table}

\begin{table}[h!]
\begin{center}
\begin{tabular}{||c c c c c||} 
 \hline
 Algorithm & $\ell_2$ & $\ell_1$ & Exp & Log-loss \\ [0.5ex] 
 \hline\hline
 Optimal & 0.08 & 0.07 & 1.55 & 0.57 \\ 
 \hline
 $\MAp$ & 0.06 & 0.08 & 1.13 & 0.22 \\
 \hline
\end{tabular}
\end{center}
\caption{$s = 4, d=10$. Number of iterations is $3$}
\end{table}

\begin{figure}[!tbp]
  \centering
  An example in $2$ dimensions, with predictions rounded to $\{0,1\}$\\ 
  \begin{minipage}[b]{0.3\textwidth}
    \includegraphics[width=\textwidth]{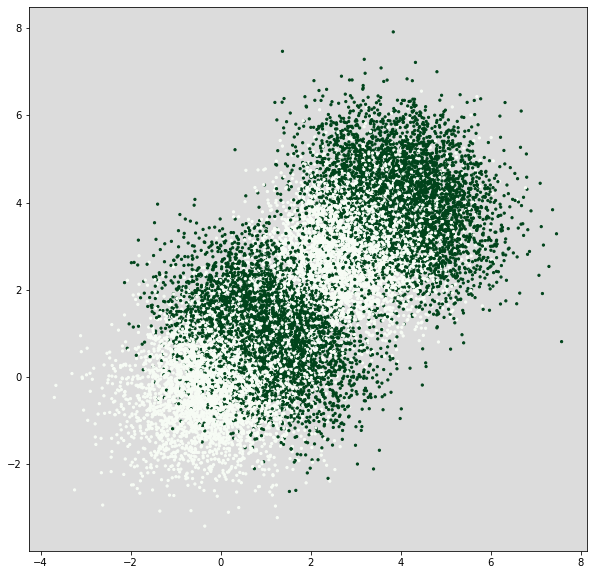}
    \caption{Ground truth}
    \label{fig:ground_truth}
  \end{minipage}
  \hfill
  \begin{minipage}[b]{0.3\textwidth}
    \includegraphics[width=\textwidth]{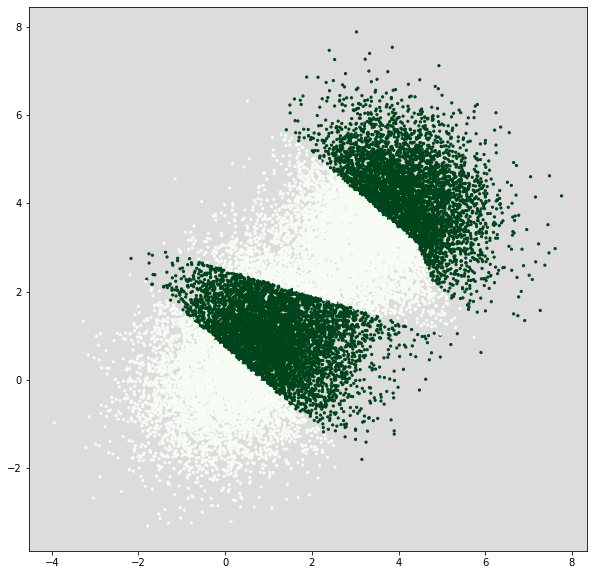}
    \caption{$\MAp$}
    \label{fig:calMA}
  \end{minipage}
  \hfill
  \begin{minipage}[b]{0.3\textwidth}
    \includegraphics[width=\textwidth]{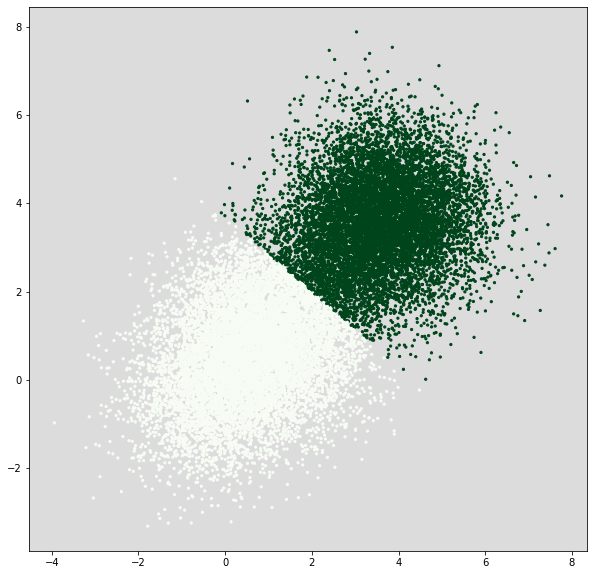}
    \caption{Linear regression}
    \label{fig:linear_regression}
  \end{minipage}
\end{figure} \subsection*{Acknowledgements}
We thank Konstantinos Satvropolous for alerting us to an error in an earlier version of this paper and suggesting a fix. \textbf{PG} and \textbf{MPK} would like to thank Mihir Singhal and Shengjia Zhao for several discussions while working on \cite{GopalanKSZ22} which inspired some of this work.  \textbf{PG} would like to thank  Adam Klivans, and Konstantinos Stavropoulos for  helpful discussions and comments on earlier versions of this paper and Raghu Meka and Varun Kanade for pointers to the literature. 
\bibliographystyle{alpha}
\bibliography{refs}
\newpage
\appendix
\section{Relaxed decision and loss OI for GLMs}
\label{app:bounded}

\begin{definition}
\label{def:approx-dec}
Given a loss $\ell$, a decision function $\hat{k}_\ell:[0,1] \to \R$ is $(\delta, D)$-approximately optimal for $\ell$ if for every $p \in [0,1]$,  
\begin{align*}
\ell(p, \hat{k}_\ell(p)) & \leq \ell(p, k_\ell(p)) + \delta.\\ 
|\hat{k}(p)| &\leq D.
\end{align*}
Let $\Lglm(\delta, D)$ denote the subset of losses in $\Lglm$ that admit an $(\delta, D)$-optimal decision function.
\end{definition}

For example, consider the logistic loss function (which is the matching loss for the sigmoid transfer function):
\[ \ell(y,t) = \log(1 + \exp((1-2y) t)).\] 
The optimal decision function is given by 
\[ k_\ell(p) = \log\lt(\frac{p}{1-p}\rt). \]
Hence  $k_\ell(0) = -\infty$, while $k_\ell(1) = \infty$. But by truncating values to the range $[\pm O(\log(1/\eps))]$, we get a decision function $\hat{k}_\ell$ which is $(\eps, O(\log(1/\eps)))$-approximately optimal.

\begin{definition}
    {\bf (Relaxed Decision OI)} 
    Let $\mL$ be a family of loss functions, and $\eps > 0$. 
    We say that predictor $\tf$ is $(\mL, \eps, \delta)$-decision-OI if for every $\ell \in \mL$ there exists $\hat{k}_\ell: [0,1] \to \R$ such that
    \begin{align*} 
    \abs{\E_\mD[\ell(\y^*, \hat{k}_\ell(\tf(\x))] - \E_{\mD(\tp)}[\ell(\ty, \hat{k}_\ell(\tf(\x))]} \leq \eps,\\
    \forall p \in [0,1], \lt| \ell(p, \hat{k}_\ell(p)) - \ell(p, k_\ell(p)) \rt| \leq \delta 
    \end{align*}
\end{definition}

For losses that admit $(\eps, D)$-optimal decision functions, we have can derive relaxed decision-OI from calibration. 

\begin{lemma}
    Every $\alpha$-calibrated predictor $\tp$ is $(\Lglm(\delta, D), D\alpha, \delta)$-decision OI.
\end{lemma}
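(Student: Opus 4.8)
The plan is to fix an arbitrary $\ell \in \Lglm(\delta, D)$, let $\hat{k}_\ell:[0,1]\to\R$ be an $(\delta,D)$-approximately optimal decision function for $\ell$ (which exists by Definition~\ref{def:approx-dec}), and verify the two conditions in the definition of relaxed decision OI with this choice of $\hat{k}_\ell$ and with $\eps = D\alpha$. The second condition is essentially by definition: since $k_\ell(p) = \argmin_{t}\ell(p,t)$ we have $\ell(p,\hat{k}_\ell(p)) \ge \ell(p,k_\ell(p))$, and $(\delta,D)$-approximate optimality gives $\ell(p,\hat{k}_\ell(p)) \le \ell(p,k_\ell(p)) + \delta$; combining, $|\ell(p,\hat{k}_\ell(p)) - \ell(p,k_\ell(p))| \le \delta$ for all $p\in[0,1]$.

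For the first condition I would re-run the argument of Theorem~\ref{thm:dec+model}(2) with $\hat{k}_\ell$ in place of $k_\ell$. Conditioning on $\x = x$ and applying the discrete Taylor identity (Lemma~\ref{lem:disc-taylor}) with the point $t = \hat{k}_\ell(\tp(x))$ (a constant given $x$) and label variables $\ty,\sy$, and using $\E[\ty\mid\x=x]=\tp(x)$, $\E[\sy\mid\x=x]=\ps(x)$, one gets
\[ \E[\ell(\ty,\hat{k}_\ell(\tp(\x)))\mid\x=x] - \E[\ell(\sy,\hat{k}_\ell(\tp(\x)))\mid\x=x] = (\tp(x)-\ps(x))\,\partial\ell(\hat{k}_\ell(\tp(x))). \]
Taking expectations over $\x\sim\mD_\X$ and using that every GLM loss satisfies $\partial\ell(t) = -t$, the right-hand side equals $-\E[\hat{k}_\ell(\tp(\x))(\tp(\x)-\ps(\x))] = \E[\hat{k}_\ell(\tp(\x))(\sy-\tp(\x))]$, where the last equality holds because $\hat{k}_\ell(\tp(\x))$ is a function of $\x$ only, so $\ps(\x)$ may be replaced by $\sy$ under the expectation. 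Since $|\hat{k}_\ell(p)|\le D$ for all $p$, the map $w = \hat{k}_\ell:[0,1]\to[-D,D]$ is a weight function with $\infnorm{w}\le D$, so Lemma~\ref{lem:fully}(2) and $\alpha$-calibration of $\tp$ give $\bigl|\E[\hat{k}_\ell(\tp(\x))(\sy-\tp(\x))]\bigr| \le D\alpha$. Hence
\[ \bigl|\,\E_{\mD}[\ell(\sy,\hat{k}_\ell(\tf(\x)))] - \E_{\mD(\tp)}[\ell(\ty,\hat{k}_\ell(\tf(\x)))]\,\bigr| \le D\alpha, \]
which is exactly the first condition with $\eps = D\alpha$. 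Since $\ell\in\Lglm(\delta,D)$ was arbitrary, this shows $\tp$ is $(\Lglm(\delta,D), D\alpha, \delta)$-decision OI.

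I do not expect a genuine obstacle here; the proof is a short adaptation of earlier machinery. The one point worth care is that the statement cannot be obtained as a black-box corollary of Theorem~\ref{thm:dec+model}(2), which is phrased in terms of the exact Bayes-optimal decision function $k_\ell$ (possibly unbounded, e.g. $k_\ell(p) = \log\frac{p}{1-p}$ for logistic loss); the relaxation to the truncated $\hat{k}_\ell$ requires re-deriving its (short) proof, and it is precisely the uniform bound $D$ on $\hat{k}_\ell$ that yields the factor $D$ in the error term $D\alpha$.
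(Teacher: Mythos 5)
Your proof is correct and follows essentially the same route as the paper's: both re-derive the argument of Theorem~\ref{thm:dec+model}(2) with the bounded $\hat{k}_\ell$ in place of $k_\ell$, observe that for GLM losses $\partial\ell\circ\hat{k}_\ell = -\hat{k}_\ell$ is a weight family with $\infnorm{\cdot}\le D$, and invoke Lemma~\ref{lem:fully}(2) to get the $D\alpha$ bound. Your explicit remark on why the theorem cannot be applied as a black box, and your verification of the second ($\delta$-closeness) condition, are correct points the paper leaves implicit.
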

\begin{proof}
    For every $\ell \in \Lglm(\delta, D)$, there exists a decision function $\hat{k}_\ell:[0,1] \to \R$ is $(\delta, D)$-approximately optimal for $\ell$. We repeat the proof of Part (2) of Theorem \ref{thm:dec+model} with the decision function $\hat{k}_\ell$ which gives 
    \begin{align*} 
    \abs{\E_\mD[\ell(\y^*, \hat{k}_\ell(\tf(\x))] - \E_{\mD(\tp)}[\ell(\ty, \hat{k}_\ell(\tf(\x))]} & \leq \E[|(\ty - \y^*)\partial \ell \circ \hat{k}_\ell(p(\x)|] \\
    & = \CalErr(\mW', \tp)\\
    & \leq \alpha D
    \end{align*}
    where we define the weight family $\mW' = \partial \ell \circ \hat{k}_\ell = - \hat{k}_\ell$ so that $\infnorm{\mW'} \leq D$. We then bound  $\CalErr(\mW', \tp) \leq D\alpha$ using Part (2) of Lemma \ref{lem:fully}. 
\end{proof}

We can analogously define relaxed loss OI where we consider the tests 
\begin{align} 
        \label{eq:relaxed-loss-dist}
        \hat{u}_{\ell, h}(y, \tp(x), x) = \ell(y, h(x)) - \ell(y, \hat{k}_\ell(\tp(x)) 
\end{align}
for $\ell \in \Lglm(\delta, D)$ and $h \in \Lin(\mC, B)$.  It follows that $\alpha_1$-calibration and $(\mC, \alpha_2)$ multiaccuracy suffice to fool these tests with error $D\alpha_1 + B\alpha_2$. For every $x \in \X$, we now have the inequality
\[ \E[\ell(\ty, \hat{k}_\ell(\tp(\x))|\x =x] \leq  \E[\ell(\ty, k_\ell(\tp(\x))|\x =x] + \delta \leq \E[\ell(\ty, h(\x) |\x =x] + \delta \]
we can repeat the proof of Proposition  \ref{prop:omni} to get the following claim. 

\begin{theorem}
Let $\mC$ be a bounded hypothesis class, let $B , D \geq 1$ and $\delta \geq 0$. If the  
predictor $\tp$ is $\alpha_1$-calibrated and $(\mC, \alpha_2)$-multiaccurate, then it is an
$(\Lglm(\delta, D), \Lin(\mC, B), \delta')$-omnipredictor where  
\[ \delta' = D\alpha_1 + B\alpha_2 + \delta. \]
\end{theorem}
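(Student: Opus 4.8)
The plan is to assemble the claim from pieces already in hand, following the \emph{template} of the Decomposition Lemma (Lemma~\ref{lem:implies}) and of Proposition~\ref{prop:omni}, but with the Bayes-optimal decision map $k_{\ell_g}$ replaced throughout by the $(\delta,D)$-approximately optimal map $\hat{k}_{\ell_g}$ supplied by Definition~\ref{def:approx-dec}. Concretely, I would first establish a ``relaxed loss OI'' bound of $D\alpha_1 + B\alpha_2$ for the tests $\hat{u}_{\ell_g,h}$ of Equation~\eqref{eq:relaxed-loss-dist}, and then run the Proposition~\ref{prop:omni} argument, picking up an extra additive $\delta$ from the sub-optimality of $\hat{k}_{\ell_g}$.

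For the relaxed loss OI bound I would decompose $\hat{u}_{\ell_g,h}$ into the hypothesis loss $\ell_g(\cdot,h(\x))$ minus the decision loss $\ell_g(\cdot,\hat{k}_{\ell_g}(\tp(\x)))$ and bound the two contributions separately. On the hypothesis side, conditioning on $\x = x$ and applying Lemma~\ref{lem:disc-taylor} with $t = h(x)$ gives
\[ \bigl|\E_\mD[\ell_g(\y^*, h(\x))] - \E_{\mD(\tp)}[\ell_g(\ty, h(\x))]\bigr| = \bigl|\E[(\y^* - \ty)\,\partial\ell_g(h(\x))]\bigr| = \bigl|\E[(\y^* - \tp(\x))\,h(\x)]\bigr|, \]
using $\partial\ell_g(t) = -t$ and $\E[\ty \mid \x] = \tp(\x)$; this is at most $B\alpha_2$ by Lemma~\ref{lem:linear}, since $\tp$ is $(\mC,\alpha_2)$-multiaccurate and $h \in \Lin(\mC,B)$. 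On the decision side, the lemma immediately preceding this theorem (relaxed decision OI from calibration) already gives, for each $\ell_g \in \Lglm(\delta,D)$, a $(\delta,D)$-approximately optimal $\hat{k}_{\ell_g}$ with $\bigl|\E_\mD[\ell_g(\y^*,\hat{k}_{\ell_g}(\tp(\x)))] - \E_{\mD(\tp)}[\ell_g(\ty,\hat{k}_{\ell_g}(\tp(\x)))]\bigr| \le D\alpha_1$ for an $\alpha_1$-calibrated $\tp$. Adding the two estimates by the triangle inequality, exactly as in the proof of Lemma~\ref{lem:implies} but with $\hat{u}_{\ell_g,h}$ in place of $u_{\ell,c}$, yields $\bigl|\E_\mD[\hat{u}_{\ell_g,h}(\y^*,\tp(\x),\x)] - \E_{\mD(\tp)}[\hat{u}_{\ell_g,h}(\ty,\tp(\x),\x)]\bigr| \le D\alpha_1 + B\alpha_2$.

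Next I would mimic Proposition~\ref{prop:omni}. For any fixed $x$, writing $p = \tp(x)$ and using $\E[\ell_g(\ty,t)\mid \x = x] = \ell_g(p,t)$ for a fixed action $t$, approximate optimality of $\hat{k}_{\ell_g}$ and optimality of $k_{\ell_g}$ give
\[ \E[\ell_g(\ty, \hat{k}_{\ell_g}(\tp(\x)))\mid \x = x] = \ell_g(p,\hat{k}_{\ell_g}(p)) \le \ell_g(p, k_{\ell_g}(p)) + \delta \le \ell_g(p, h(x)) + \delta, \]
so $\E[\hat{u}_{\ell_g,h}(\ty,\tp(\x),\x)\mid \x = x] \ge -\delta$; averaging over $\x \sim \mD$ gives $\E_{\mD(\tp)}[\hat{u}_{\ell_g,h}(\ty,\tp(\x),\x)] \ge -\delta$. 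Combined with the relaxed loss OI bound this yields $\E_\mD[\hat{u}_{\ell_g,h}(\y^*,\tp(\x),\x)] \ge -(D\alpha_1 + B\alpha_2 + \delta)$, i.e.\ $\E_\mD[\ell_g(\y^*,\hat{k}_{\ell_g}(\tp(\x)))] \le \E_\mD[\ell_g(\y^*,h(\x))] + \delta'$ with $\delta' = D\alpha_1 + B\alpha_2 + \delta$; since $\ell_g(p, k_{\ell_g}(p)) \le \ell_g(p,\hat{k}_{\ell_g}(p))$ pointwise, the same bound holds with $k_{\ell_g}$ on the left, which is the asserted $(\Lglm(\delta,D),\Lin(\mC,B),\delta')$-omnipredictor guarantee.

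I do not anticipate a real obstacle: every ingredient is available (Lemma~\ref{lem:disc-taylor}, Lemma~\ref{lem:linear}, the preceding relaxed-decision-OI lemma, and the scheme of Proposition~\ref{prop:omni}). The only points requiring care are bookkeeping ones: working with the bounded $\hat{k}_{\ell_g}$ throughout so that all expectations are finite and well-defined (this is the whole purpose of Definition~\ref{def:approx-dec}, which matters for transfers such as the sigmoid where $k_{\ell_g}$ is unbounded), and checking that the error terms add up to exactly $D\alpha_1 + B\alpha_2 + \delta$.
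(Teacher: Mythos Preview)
Your proof is essentially the paper's own argument: establish relaxed loss OI for the tests $\hat{u}_{\ell_g,h}$ with error $D\alpha_1 + B\alpha_2$ (hypothesis side via $\partial\ell_g(t)=-t$ and Lemma~\ref{lem:linear}; decision side via the preceding relaxed decision-OI lemma), then replay Proposition~\ref{prop:omni} to pick up the extra $\delta$ from the sub-optimality of $\hat{k}_{\ell_g}$. This is exactly what the paper sketches in the paragraph preceding the theorem.

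One caveat: your very last step is not correctly justified. From $\E_\mD[\ell_g(\y^*,\hat{k}_{\ell_g}(\tp(\x)))] \le \E_\mD[\ell_g(\y^*,h(\x))] + \delta'$ you try to pass to $k_{\ell_g}$ on the left via ``$\ell_g(p, k_{\ell_g}(p)) \le \ell_g(p,\hat{k}_{\ell_g}(p))$ pointwise''. But what you would need is $\ell_g(p^*(\x), k_{\ell_g}(\tp(\x))) \le \ell_g(p^*(\x), \hat{k}_{\ell_g}(\tp(\x)))$, where the first argument is $p^*(\x)$ while $k_{\ell_g}$ is evaluated at $\tp(\x)$; optimality of $k_{\ell_g}$ at $\tp(\x)$ says nothing about $\ell_g(p^*(\x),\cdot)$, and indeed $\hat{k}_{\ell_g}(\tp(x))$ can give strictly smaller loss under $p^*(x)$ (e.g.\ squared loss with $\tp(x)=0.5$, $p^*(x)=0.9$, $\hat{k}(0.5)=0.6$). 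The paper itself stops at the bound with the bounded post-processing $\hat{k}_{\ell_g}$ and identifies that with the stated omnipredictor guarantee; you should do the same and drop the final replacement.
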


 \section{Additional proofs}
\label{sec:app1}

\subsection{Proofs from Section \ref{sec:prelims}}
\label{app:prelims}

\begin{proof}[Proof of Lemma \ref{lem:fully}]
    Part (1). For $v \in [0,1]$, define $w^*(v) = \sign(\E[\y - v|p(\x) = v])$. For $v \not\in \Im(p)$, $w^*(v) \in [-1,1]$ can be arbitrary. Then for any $w \in W_f$, 
    \[ \E[w(p(\x))(\y - p(\x))] \leq \E[w^*(p(\x))(\y - p(\x))] = \E_{p(\x)} \lt[\lt| \E_{\y|p(\x)}[ \ \y - p(\x)] \rt|\rt] \]
    which proves the claim.

    We prove Part (2) by applying Part (1) to the family $\mW' = \mW/\infnorm{\mW}$ which consists of functions bounded in the range $[-1,1]$, and multiplying by $\infnorm{\mW}$ on either side.
\end{proof}

\subsection{Proofs from Section \ref{sec:oi-loss}}
\label{app:oi-loss}

We will use the following result of \cite{omni}. While their result applies to a broader collection of loss functions, we only state it for the class of $L_p$ losses.
\begin{theorem}\cite{omni}
\label{thm:gkrsw}
    If the predictor $\tf$ is $(\mC, \alpha)$-multicalibrated, then it is an $(L_p, \mC, 2\alpha)$-omnipredictor. 
\end{theorem}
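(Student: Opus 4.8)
The plan is to follow the original argument of \cite{omni}, which is direct --- it uses convexity of the $\ell_p$ losses in the action --- rather than routing through the Loss OI decomposition of Theorem~\ref{thm:dec+model}; that route is in fact unavailable here, since $\mC$-multicalibration does not imply the $\partial\mL\circ\mC$-multiaccuracy condition needed for hypothesis OI (this is exactly the content of the separation in Theorem~\ref{thm:sep-mc-oi}).

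Fix $p\ge 1$, a hypothesis $c\in\mC$, and write $\ell=\ell_p$, $k=k_{\ell_p}$. Since $t\mapsto\ell(y,t)$ is convex for each $y\in\zo$, the subgradient inequality gives, pointwise in $x$,
\[
\ell(y,c(x))\ \ge\ \ell(y,k(\tf(x)))+\ell'(y,k(\tf(x)))\cdot\bigl(c(x)-k(\tf(x))\bigr),
\]
where $\ell'(y,\cdot)$ denotes a subgradient of $t\mapsto\ell(y,t)$. Taking expectations over $(\x,\y^*)\sim\mD$ and rearranging, the theorem reduces to showing
\[
\E\bigl[\ell'(\y^*,k(\tf(\x)))\cdot\bigl(c(\x)-k(\tf(\x))\bigr)\bigr]\ \ge\ -2\alpha .
\]
The key step is to condition on the level sets $\{\tf(\x)=v\}$ and exploit the first-order optimality of $t_v:=k(v)=\argmin_t\ell(v,t)$. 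On such a level set $k(\tf(\x))=t_v$ is constant, and $y\mapsto\ell'(y,t_v)=y\,\ell'(1,t_v)+(1-y)\,\ell'(0,t_v)$ is affine; since $0$ is a subgradient of $t\mapsto\ell(v,t)=v\,\ell(1,t)+(1-v)\,\ell(0,t)$ at $t_v$, we may choose the subgradients so that $v\,\ell'(1,t_v)+(1-v)\,\ell'(0,t_v)=0$, equivalently $\ell'(y,t_v)=\gamma_v\,(v-y)$ for a scalar $\gamma_v$. Substituting, the conditional contribution equals $\gamma_v\,\E\bigl[(v-\y^*)(c(\x)-t_v)\mid\tf(\x)=v\bigr]=-\gamma_v\bigl(\E[(\y^*-v)c(\x)\mid v]-t_v\,\E[\y^*-v\mid v]\bigr)$, and the two conditional expectations on the right are precisely what $\mC$-multicalibration (taking $c\in\mC$) and calibration (taking the constant $1\in\mC$) bound. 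Averaging over $v$ therefore bounds the whole quantity in absolute value by $(\max_v|\gamma_v|)(1+\max_v|t_v|)\,\alpha$.

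It then remains to bound $|\gamma_v|$ and $|t_v|=|k_{\ell_p}(v)|$ from the explicit form of $\ell_p$. A short calculation gives $k_{\ell_p}(v)=v^{1/(p-1)}/\bigl(v^{1/(p-1)}+(1-v)^{1/(p-1)}\bigr)\in[0,1]$ and $\gamma_v=t_v^{p-1}/v=\bigl(v^{1/(p-1)}+(1-v)^{1/(p-1)}\bigr)^{-(p-1)}\le 1$ for $p\ge 2$, using $s^{1/(p-1)}\ge s$ on $[0,1]$; the degenerate case $p=1$ (where $k_{\ell_1}$ rounds $v$ to the nearer of $0$ and $1$) is treated by splitting into the level-set families $\{v\le 1/2\}$ and $\{v>1/2\}$. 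Plugging in yields the bound $2\alpha$. The main obstacle is essentially bookkeeping: pinning down the constant $2$ requires these uniform bounds on $\gamma_v$ and $k_{\ell_p}(v)$, which rely on the $1/p$-normalization of $\ell_p$ and the closed form of its minimizer. Conceptually the proof rests on the two ingredients above --- convexity in the action reduces the claim to a \emph{linear} test against $c(\x)-k(\tf(\x))$, and the first-order optimality of $k_{\ell_p}(v)$ on each level set converts that test into a combination of the multicalibration and calibration guarantees, which is exactly why the full strength of multicalibration (and not merely $\mC$-multiaccuracy) is needed here.
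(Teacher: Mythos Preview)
The paper does not prove this theorem: it is quoted verbatim from \cite{omni} and used as a black box in the proof of Theorem~\ref{thm:sep-mc-oi}. There is therefore no ``paper's own proof'' to compare your attempt against.

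That said, your reconstruction has a genuine gap. You bound the error by $(\max_v|\gamma_v|)(1+\max_v|t_v|)\alpha$ and then assert $\gamma_v\le 1$ ``for $p\ge 2$'', handling $p=1$ separately; but you never address the range $1<p<2$. There, from your own formula $\gamma_v=(v^{1/(p-1)}+(1-v)^{1/(p-1)})^{-(p-1)}$ one has $\gamma_{1/2}=2^{2-p}>1$, so your crude estimate yields only $2^{3-p}\alpha$, not $2\alpha$. The case $p=1$ is also not fully handled: with $t_v=k_{\ell_1}(v)\in\{0,1\}$ one gets $\gamma_v=1/(1-v)$ for $v<1/2$ and $\gamma_v=1/v$ for $v\ge 1/2$, again reaching up to $2$. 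In short, the subgradient-on-level-sets idea is sound (and is indeed the conceptual core of the \cite{omni} argument), but the bookkeeping you sketch does not pin down the constant $2$ uniformly over $p\ge 1$; the original proof in \cite{omni} routes the argument through Jensen's inequality and a Lipschitz bound rather than through $\gamma_v$, and applies to all convex Lipschitz losses, not just $L_p$.
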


\begin{proof}[Proof of Theorem \ref{thm:sep-mc-oi}]
    The proof uses For this proof alone, it is convenient to use the labels $\pm 1$ rather than $\zo$, and have predictors $\tf: \X \rgta [-1,1]$ where $\tf(\x) = \E[\ty|\x = x]$. 
    
      Define the distribution $(\x, \y^*) \sim \mD$ where $\mD_\X$ is uniform over $\pmo^3$ and $\y^* = \chi(\x) = \prod_{i \in [3]}x_i$ is the parity function on $3$ bits. Take $\mC = \{\sum \alpha_i x_i| \sum_i|\alpha_i| =1\}$ to be all convex combinations of $\pm x_i$. Consider the predictor $\tf(x) =0$ for all $x \in \pmo^3$  so that $\ty|\x = x$ is a uniformly random bit in $\pmo$. 
      
      We show that $\tp$ is $(\mC,0)$-multicalibrated. 
      Since $p$ is constant on the domain, multiaccuracy and multicalibration are equivalent. For reach $i \in [3]$, we have
      \[ \E_\mD[x_i(\chi(\x) - \ty)] = \E_{\mD}[x_i\chi(x)] - \E_{\mD}[\chi(\x)\y] = 0 \]
      where the first expectation is $0$ because of the orthogonality of characters, and the second because $\y$ and $\x_i$ are independent unbiased random bits. Multiaccuracy for $\mC$ now follows from linearity (Lemma \ref{lem:linear}). Now applying Theorem \ref{thm:gkrsw} implies that  $\tf$  is an $(L_p, \mC, 0)$-omnipredictor. Since calibration implies decision-OI, it also implies that $\tf$ is decision-OI with $0$ error. 
      
      \eat{Since $\tf$ is identically $0$, it is easy to verify that $k_{\ell_4}(f)$ is also identically $0$. Hence we have
      \[ \E[\ell_4(\ty, 0)]  - \E[\ell_4(\y^*, 0)] = \E[(\ty)^4] - \E[(\y^*)^4] = 1 -  1 = 0 \] 
      and decision-OI holds for $\eps = 0$. \UW{Is this clearer than saying decision-OI follows from calibration?}}
      
      Given this, it follows that hypothesis-OI and loss-OI are equivalent, and we show that neither holds. Let $c(x) = (\sum_i x_i)/3$. Then we have
      \begin{align*}
      \E[\ell_4(\ty, c(x))] &= \E[(\ty - c(x))^4]
        = ( 1 - 4\E[\ty c(x)] + 6\E[c(\x)^2] - 4\E[\ty c(\x)^3] + \E[c(\x)^4])/4\\
      \E[\ell_4(\sy, c(x))] &= \E[(\sy - c(x))^4] = (1 - 4\E[\sy c(x)] + 6\E[c(\x)^2] - 4\E[\sy c(\x)^3] + \E[c(\x)^4])/4
    \end{align*}
    Hence 
    \begin{align}
    \label{eq:diff-ell4}
        \E[\ell_4(\ty, c(\x))] - \E[\ell_4(\sy, c(\x))] &= \E[(\sy -\ty)c(\x)]  + \E[(\sy - \ty)c(x)^3]
    \end{align}
    Since $\ty$ is uniform, independent of $\x$, we have 
        $\E[\ty c(\x)] = \E[\ty c(\x)^3] =  0$.
    Since $\sy = \chi(\x)$ whereas $c(\x) = (\sum_i \x_i)/3$, by elementary Fourier analysis (see Lemma \ref{lem:tech}), we have
    \begin{align*}
        \E[\sy c(\x)] = 0, \E[\sy c(\x)^3] = 2/9
    \end{align*}
    Plugging these bounds back into Equation \eqref{eq:diff-ell4} gives
    \begin{align*}
\E[\ell_4(\ty, c(\x))] - \E[\ell_4(\sy, c(\x))] = 4/9.
    \end{align*}
This shows that hypothesis-OI and hence loss-OI do not hold.
\end{proof}

\begin{lemma}
   \label{lem:tech}
   In the setting of Theorem \ref{thm:sep-mc-oi}, we have
    \begin{align}
        \E[\sy c(\x)] &= 0,\label{eq:first}\\
        \E[\sy c(\x)^3] &= 2/9 \label{eq:third}
    \end{align}
\end{lemma}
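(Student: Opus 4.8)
The plan is to compute both expectations by elementary Fourier analysis on the cube $\pmo^3$. Write $\x = (x_1,x_2,x_3)$ uniform over $\pmo^3$, so the parity functions $\chi_S(\x) = \prod_{i \in S}x_i$ form an orthonormal basis; in particular $\E[\chi_S(\x)] = 0$ for every nonempty $S \subseteq \{1,2,3\}$, and we will freely use the pointwise identities $x_i^2 = 1$ and $x_i^3 = x_i$.

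For \eqref{eq:first}, I would expand $\sy c(\x) = \fr{3}\sum_{i \in [3]} x_1 x_2 x_3 x_i$. Reducing $x_i^2 = 1$, each summand equals the product of the two coordinates other than $i$, i.e.\ a character $\chi_S$ with $|S| = 2$; since these have mean zero, $\E[\sy c(\x)] = 0$.

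For \eqref{eq:third}, the key step is to expand $(x_1+x_2+x_3)^3$ and reduce. By the multinomial theorem, $(x_1+x_2+x_3)^3 = \sum_i x_i^3 + 3\sum_{i \ne j} x_i^2 x_j + 6 x_1 x_2 x_3$, which collapses to $7(x_1+x_2+x_3) + 6 x_1 x_2 x_3$ (each coordinate $x_j$ contributes once from its own cube $x_j^3 = x_j$ and twice, each with coefficient $3$, from terms $x_i^2 x_j$ with $i \ne j$, giving $1 + 6 = 7$; the triple product has multinomial coefficient $\binom{3}{1,1,1} = 6$). Hence $c(\x)^3 = \fr{27}\big(7(x_1+x_2+x_3) + 6 x_1 x_2 x_3\big)$, so that $\sy c(\x)^3 = \fr{27}\big(7\sum_{i \in [3]} x_1 x_2 x_3 x_i + 6 (x_1 x_2 x_3)^2\big)$. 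Taking expectations, the first group of terms vanishes exactly as in the previous paragraph, while $(x_1 x_2 x_3)^2 = 1$ identically; therefore $\E[\sy c(\x)^3] = \frac{6}{27} = \frac29$, as claimed.

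There is no genuine obstacle here: the only thing to get right is the bookkeeping of the multinomial coefficients (the coefficient $7$ on the linear part and $6$ on the parity term), and everything else follows from orthogonality of characters.
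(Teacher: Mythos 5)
Your proof is correct and takes essentially the same route as the paper: expand $(x_1+x_2+x_3)^3$ by the multinomial theorem, reduce using $x_i^2 = 1$, and kill the linear part by orthogonality of characters, leaving $\frac{6}{27} = \frac{2}{9}$ from the constant term. One small note: your coefficient $7$ on the linear part is the correct one (the paper's displayed expansion writes $4\sum_i x_i$, a typo, but since that term has zero expectation against $\chi(\x)$ the final answer is unaffected).
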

\begin{proof}
Since $y = \chi(x) = \prod_ix_i$ and $c(x) = (\sum x_i)/3$ Equation \eqref{eq:first} follows by orthogonality of characters. For Equation \eqref{eq:third}, 
\begin{align*}
    c(\x)^3 &= \fr{27}\left(\sum_i \x_i\right)^3\\
    &= \fr{27}\left(\sum_i \x_i^3 + \sum_{i \neq j} 3\x_i^2\x_j + 6\x_1\x_2\x_3 \right)\\
    &= \fr{27}\left(4\sum_{i=1}^3\x_i + 6\prod_{i=1}^3 \x_i\right).
\end{align*}
Hence by the orthogonality of characters,
\begin{align}
    \E[\chi(\x)c(\x)^3]  = \fr{27}\E\lt[\chi(x)\lt(4\sum_i \x_i  + 6 \prod_{i=1}^3\x_i\rt)\rt] = \frac{6}{27}.
\end{align}
\end{proof}

\subsection{Proofs of Lemma \ref{lem:tech-loss}}
\label{app:glms}

Since $f(v) \geq vt -g(t)$ for all $t \in \R$ by definition, we have the Fenchel-Young inequality:
\[ \forall v \in I, t \in \R, \ \ f(v) + g(t) \geq vt. \]
Equality holds iff $g'(t) = v$. This leads us to define the Fenchel-Young divergence as:
\[ Y_{f,g}(v^*, t) = f(v^*) + g(t) - v^*t \]
which is zero iff $g'(t) = v^*$. 

\begin{lemma}
\label{lem:eq-div}
We have
\[ Y_{f,g}(v^*, t) = D_f(v^*, g'(t)) \]
\end{lemma}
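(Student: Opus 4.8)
The plan is to expand both sides using the definitions and compare. On the right-hand side, write out the Bregman divergence with the second argument $g'(t)$:
\[
D_f(v^*, g'(t)) = f(v^*) - f(g'(t)) - (v^* - g'(t)) f'(g'(t)).
\]
The key identity to invoke is Equation \eqref{eq:f'}, which gives $f'(g'(t)) = t$ for all $t \in \R$ (this was established just before the Bregman divergence paragraph, using that $g'$ is a bijection on its range and the chain rule). Substituting this, the right-hand side becomes
\[
D_f(v^*, g'(t)) = f(v^*) - f(g'(t)) - (v^* - g'(t))\, t = f(v^*) - f(g'(t)) - v^* t + g'(t)\, t.
\]

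Next I would simplify $f(g'(t))$. By Equation \eqref{eq:f}, for $v = g'(t)$ the dual satisfies $f(g'(t)) = g'(t)\, t - g(t)$ (this is exactly the statement that the maximizer in the definition of $f$ at the point $g'(t)$ is $t$ itself, because the objective $g'(t) s - g(s)$ is strictly concave in $s$ with derivative vanishing at $s = t$). Plugging this in:
\[
D_f(v^*, g'(t)) = f(v^*) - \big(g'(t)\, t - g(t)\big) - v^* t + g'(t)\, t = f(v^*) + g(t) - v^* t,
\]
and the terms $g'(t)\, t$ cancel. This last expression is precisely $Y_{f,g}(v^*, t)$ by definition, completing the proof.

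There is essentially no obstacle here: the lemma is a direct algebraic consequence of the two Legendre-duality identities \eqref{eq:f} and \eqref{eq:f'} already recorded in the text, and the only thing to be careful about is making sure $v^* \in I = \Im(g')$ so that $f(v^*)$ is defined and so that $g'(t)$ lies in the domain of $f'$; both hold since $g': \R \to I$ is a bijection and we only ever evaluate $f$ and $f'$ at points of $I$. I would state the computation as the short display chain above and conclude.

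\medskip\noindent\textit{Proof.} By Equation \eqref{eq:f'} we have $f'(g'(t)) = t$, so
\[
D_f(v^*, g'(t)) = f(v^*) - f(g'(t)) - (v^* - g'(t))\,t.
\]
By Equation \eqref{eq:f} applied with $v^* = g'(t)$ (whose unique maximizer is $t$), $f(g'(t)) = g'(t)\,t - g(t)$. Substituting,
\[
D_f(v^*, g'(t)) = f(v^*) - g'(t)\,t + g(t) - v^* t + g'(t)\,t = f(v^*) + g(t) - v^* t = Y_{f,g}(v^*, t),
\]
as claimed. \qed
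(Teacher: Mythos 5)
Your proof is correct and takes essentially the same approach as the paper: both arguments expand the Bregman divergence, invoke the Legendre identities $f'(g'(t)) = t$ and $f(g'(t)) = g'(t)\,t - g(t)$, and reduce to the definition of $Y_{f,g}$. The only cosmetic difference is that the paper reduces the claimed equality to the identity $g(t) = tg'(t) - f(g'(t))$, while you substitute forward; the content is identical.
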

\begin{proof}
    From the definitions, this is equivalent to 
    \begin{align*}
        f(v^*) + g(t) - v^*t &= f(v^*) - f(g'(t)) - (v^* -g'(t))f'(g'(t))\\
        &= f(v^*) - f(g'(t)) - (v^* -g'(t))t
    \end{align*}
    where we use $f'(g'(t)) = t$. After cancellations, this is equivalent to
    \begin{align*}
        g(t) = tg'(t) - f(g'(t))
    \end{align*}
    which is indeed true by Equation \eqref{eq:f}.
\end{proof}

\begin{proof}[Proof of Lemma \ref{lem:tech-loss}]
    For the loss function $\ell_g$, we have 
    \[ \ell_g(p^*,t) + f(p^*) = Y_{f,g}(p^*, t).\] 
    By adding $f(p^*)$ to the objective (which is independent of $h$), we can write Program \eqref{eq:breg3} as
\[\min_{h \in \Lin(\mC, B)}\E_{\x \sim \mD}[Y_{f,g}(p^*(\x),  h(\x))] \]
By Lemma \ref{lem:eq-div}, this is equivalent to Program \eqref{eq:breg2}. 
\end{proof}

\eat{
\begin{proof}[Proof of Lemma \ref{lem:tech-loss}]
    Part (1). Using Equation \eqref{eq:dual} we have
    \begin{align*}
        \ell_g(p^*, t) &= g(t) - p^*t\\
        &= tg'(t) - f(g'(t)) - p^*t\\
        &= -f(g'(t) - t(p^* - g'(t))\\
        &= -f(g'(t) - f'(g'(t))(p^* - g'(t))\\
        &= D_f(p^*, g'(t)) - f(p^*)
    \end{align*}
    
    where the second-last line uses $f'(g'(t)) = t$.{\color{red} This seems to use the wrong direction } We note that this property is a restatement of the Fenchel-Young (in)equality \cite[Equation 7]{Nielsen2},  $g(t) + f(p^*) \geq p^*t$ since
    \[ g(t)  +f(p^*) - p^*t = \ell_g(p^*, t) + f(p^*) = D_f(p^*, g'(t)) \geq 0. \]
    
    Part (2). Using Part (1), we can rewrite the objective function in Program \eqref{eq:breg2} as
    \[ \E[D_f(p^*(\x), g'(h(\x))] = \E[\ell_g(p^*(\x), h(\x)] + \E[f(p^*(\x))] \] 
    Since the $\E[f(p^*(\x))]$ term is independent of $h$, we can drop it from the objective. Since $p^*(\x) = \E[\y|\x]$, and the definition of $\ell(p,t)$, we have
    \[ \ell_g(p^*(\x), h(\x)) =  \E[\ell_g(\y, h(\x)|\x]\]
    which shows the equivalence to Program \eqref{eq:breg3}. 
    
    We need to show that this is a convex program. We show that for each $\y \in \zo$, $\ell_g(y,t)$ is a convex function of $t$. Differentiating w.r.t $t$ (or using the fundamental theorem of calculus) we get
    \begin{align}
    \label{eq:diff-ell}
    \frac{d \ell_g(y,t)}{dt} = g'(t) - y. 
    \end{align} 
    Now we differentiate again and use the fact that $g''(t) \geq 0$ since $g$ is convex. 
    Since we are minimizing a convex loss over the convex set $\Lin(\mC, B)$ the resulting program is convex. 
\end{proof}
}

\subsection{Proofs from Section \ref{sec:algo}}
\label{app:algo}

\begin{proof}[Proof of Lemma \ref{lem:ell1}]
To prove Equation \eqref{eq:first-sq}, we write
\begin{align*}
    \lt|\E[(p(\x) - p_1(\x)^2] - \E[(p(\x) - p_2(\x)^2]\rt| &= \lt|\E[(p_2(\x) - p_1(\x))( 2p(\x)  - p_1(\x) - p_2(x)]\rt|\\
    &\leq 2\E[|p_2(\x)  - p_1(\x)|]\\
    &\leq 2\delta.
\end{align*}

The bound on multiaccuracy follows by observing that for  any $c :\X \rgta [-1,1]$,
\begin{align*}
    \E[c(\x)(\y - p_1(\x))] - \E[c(x)(\y - p_2(\x))] &= \E[c(\x)(p_2(\x) - p_1(\x))]\\
     & \leq \E[|p_2(\x) - p_1(\x)|] \leq \delta. 
\end{align*}
\end{proof}

\begin{proof}[Proof of Lemma \ref{lem:eff-cal}]
    We take a set of $m = O(\log^2(1/\delta)/(\delta\mu^3))$ samples $(x, y)$ and compute $\pone(x)$ for each. For each $j \leq 1/\delta$, let $S_j$ denote the set of samples where $\pone(x) = j\delta$ and $m_j = |S_j|$. Define the values
    \begin{align}
    \label{def:yj}
        \bar{y}_j &= \fr{m_j} \sum_i y_i \\
        \err_j &= |\bar{y}_j - j\delta|\notag \\
        \estECE &= \sum_{j=0}^{1/\delta} \frac{m_j}{m}\err_j\notag
    \end{align}
    The algorithm returns the value $\estECE$.
    
    We ignore any small values of $j$ such that $\Pr[p^\delta(\x) = j\delta] \leq \mu\delta/4$, since except with probability $0.1$, such values only contribute $\mu/4$ to $|\estECE - \ECE(\pone)|$. Call the other values of $j$ large. For every large $j$,  we have by Chernoff bounds, we have
    \begin{align*}
        \Pr[m_j \leq C(\log(1/\delta)/\mu^2)]  \leq \frac{\delta}{30}
    \end{align*}
    Assuming this event holds, we have
    \begin{align*}
         \Pr\lt[\lt|\Pr[p(\x) \in I_j] - \frac{m_j}{m} \rt| \geq \frac{\mu}{4}\rt] \leq \frac{\delta}{30}\\
         \Pr\lt[|\bar{y_j} - \E[\y|\pone(\x) = j\delta] | \geq \frac{\mu}{4}\rt] \leq \frac{\delta}{30}.
    \end{align*}
    We take a union bound over all $1/\delta$ large values. 
    Except with error probability $0.2$, none of the bad events considered above occur, and we have $|\estECE - \ECE(\pone)| \leq \mu|$. We can reduce the failure probability by repeating the estimator and taking the median. For simplicity, we ignore the failure probability.
    
    To define the predictor $\pth$, we repeat the analysis above with $\mu = \delta$. We define $\pth(x) = \bar{y}_j$ for all $x \in I_j$.  We show that it is close to $\ptwo$ in $\ell_1$. The contribution of small values of $j$ to $\E[|\ptwo(\x) - \pth(\x))|]$ is no more than $\mu/4$. For large buckets, we have 
    \[ |\bar{y} - \ptwo(x)| \leq  \lt|\bar{y}_j - \E[\y|p^\delta(\x) = j\delta]\rt| \leq \delta/2 + \mu/4. \]
    Thus overall, the distance is bounded by $(\delta/2 + \mu/4) \leq \delta$ by our choice of $\mu$. 
    
    Lastly, we bound the calibration error, using the fact that $\ptwo$ is perfectly calibrated, and $\pth$ is close to it $\ptwo$. Note that both $\ptwo$ and $\pth$ are constant on all $x \in p^{-1}(I_j)$. Hence
    \begin{align*} 
    \ECE(\pth) &= \E_{\mb{I_j}}\lt| \E_{p(\x) \in \mb{I_j}}[\y - \pth(\x)] \rt| \\
    & \leq   \E_{\mb{I_j}}\lt| \E_{p(\x) \in \mb{I_j}}[\y - \ptwo(\x)] \rt| + \E_{\mb{I_j}}\lt| \E_{p(\x) \in \mb{I_j}}[\ptwo(\x) - \pth(\x)] \rt| \\
    &= \E[|\ptwo(\x) - \pth(\x)|] \\
    & \leq \delta.
    \end{align*}
\end{proof}

\begin{proof}[Proof of Corollary \ref{cor:err-red}]
By Lemma \ref{lem:err-red}, we know that
\begin{align*}
       l_2(p^*, \pone)^2  -  l_2(p^*, \ptwo)^2  \geq \ECE(\pone)^2.
\end{align*}
By Lemma \ref{lem:eff-cal}, $l_1(\ptwo, \pth) \leq \delta$. Hence Lemma \ref{lem:ell1} implies that 
\begin{align*}
    \lt|l_2(p^*, \pth)^2 - l_2(p^*, \ptwo)^2\rt| \leq 2\delta.
 \end{align*}
 Similarly, since $l_\infty(p, \pone) \leq \delta$, 
 \begin{align*}
    \lt|l_2(p^*, p)^2 - l_2(p^*, \pone)^2\rt| \leq 2\delta.
 \end{align*}
 The claim follows by combining these three equations.
\end{proof}

\begin{proof}[Proof of Lemma \ref{lem:hkrr}]
Since $\WL(p^* - p_T) = \perp$, by the definition of the weak agnostic learner, for every $c \in \mC$,
    \[\E[c(\x)(p^*(\x) - p_T(\x))] = \E[c(\x)(\y^* - p_T(\x))] \leq \sigma. \]
    Since $\mC$ is closed under negation, the bound also holds in absolute value, hence $p_T \in \MA(\rho) \subseteq \MA(\alpha)$ since $\alpha \geq \rho$. 

Assume that $T \geq 1$ and let $t \in \{0, \ldots, T-1\}$. We consider the change in the expected squared error of $p_t$ for every iteration. Note the \begin{align*}
    l_2(p^*, p_t)^2 = \E[(p^*(\x) - p_t(\x))^2] \leq \E[(p^*(\x) - h_{t+1}(\x))^2]
\end{align*}
since $p_t = \Pi(h_t)$ and projection can only reduce squared error. 
\begin{align*}
    l_2(p^*, p_t)^2 - l_2(p^*, p_{t+1})^2
    &= \E[(p^*(\x) - p_t(\x))^2] - \E[(p^*(\x) - p_{t+1}(\x))^2]\\
    & \geq \E[(p^*(\x) - p_t(\x))^2] - \E[(p^*(\x) - h_{t+1}(\x))^2] \\
    &= \E[(h_{t+1}(\x) - p_t(\x))(2p^*(\x) - p_t(\x) -h_{t+1}(\x)]\\
    &= \E[\sigma c'_t(\x)(2p^*(\x) - 2p_t(\x) - \sigma c'_t(\x))\\
    &= 2\sigma \E[ c'_t(\x)(p^*(\x) - p_t(\x))] - \sigma^2\E[c'_t(\x)^2]\\
    &\geq 2\sigma^2 - \sigma^2 = \sigma^2.
\end{align*}
We sum this over $t  \in \{0,\ldots, T-1\}$ to get 
 \[ \E[ (p^*(\x) - p_0(\x))^2]  - \E[(p^*(\x) - p_T(\x))^2] \geq T\sigma^2.\qedhere\]
 \end{proof}

\subsection{Complexity analysis for $\MC$}
\label{app:omni}

The algorithm from \cite{omni} sets a parameter $\delta \approx \alpha\sigma^2$. It maintains a set of $m = O(1/\delta)$ states, where state $j$ represents a prediction of $j\delta$. In each epoch (where multiple Split operations and a single Merge operation occur), the squared error drops by $\delta$ at the cost of $O(m)$ calls to $\WL$. This implies a total of $O(m/\delta) = O(1/\delta^2)$ calls to $\WL$ till termination. 

One can view the weak learning problem as having a $(\rho, \sigma)$-weak learner for arbitrary marginal distributions on $\X$. Alternately, we can stick to the same marginal distribution $\D_\X$, but we need to make a stronger assumption on the weak learner, which is for every $\rho \geq 0$, there exists $\sigma(\rho)$ such that if there exists  $c \in \mC$ such that $\E_\mD[c(\x)f(\x)] \geq \rho$, then $\WL(f, \rho) = c' \in \mC$ such that $\E_{\mD}[c'(\x)f(\x))] \geq \sigma(\rho)$.
The stronger form of the weak learner is required since the algorithm might present the weak learner with distributions on labels where the correlation with $c$ is rather small; roughly $\Omega(\alpha^3 \sigma^3)$, and require it to find a non-trivially correlated hypothesis.

\eat{
\mpk{
    \begin{gather*}
        \E[\ell_4(\ty, c(\x))] = \frac{2}{8}\cdot 2^3 + \frac{3}{8}\cdot (2/3)^4 + \frac{3}{8} \cdot (4/3)^4 > 3.25\\
        \E[\ell_4(\sy, c(\x))] = \frac{2}{8}\cdot 0 + \frac{6}{8} \cdot (4/3)^4 = (4/3)^3 > 2.37\\
        \E[\ell_4(\ty, 0)] = \frac{8}{8} \cdot 1^4 = 1\\
        \E[\ell_4(\sy, 0)] = \frac{8}{8} \cdot 1^4 = 1
    \end{gather*}
    Per the original omniprediction proof:
    \begin{gather*}
        \E[c(\x) \vert \sy = 1] = \frac{1}{4} \cdot (1) + \frac{3}{4} \cdot (-1/3) = 0\\
        \E[c(\x) \vert \sy = -1] = \frac{1}{4} \cdot (-1) + \frac{3}{4} \cdot (1/3) = 0
    \end{gather*}
    What if we consider the condition of hypothesis OI, where we first post-process $c(\x)$ to $\bar{c}(\x)$ where
    \begin{gather*}
        \ell(\y,\bar{c}(\x)) = \E[\ell(b,c_b)]
    \end{gather*}
    where $c_b = \E[c(\x) \vert \y = b]$.  Breaks this counterexample.  Does it lead to some equivalence?
}

\section{Loss OI for the $\ell_1$ Loss}
In \Cref{thm:glm-ma-c}, we show that smooth calibration and multiaccuracy imply loss OI for any loss function induced by a generalized linear model, including the squared loss and the logistic loss. In this section, we show a similar result for the $\ell_1$ loss $\ell_1:\{0,1\}\times[0,1]\to\R$ defined such that $\ell_1(y,t) = |y - t|$ for every $y\in \{0,1\}$ and $t\in [0,1]$. The optimal decision for the $\ell_1$ loss satisfies $k_{\ell_1}(p) = 1$ if $p\ge 1/2$, and $k_{\ell_1}(p) = 0$ if $p < 1/2$.
\begin{theorem}
Let $\mC = \{c: \X \rgta [0,1]\}$ be a class of functions. Assume that predictor $\tp:\X\to [0,1]$ is $(\mC,\alpha)$-multiaccurate and $(\{k_{\ell_1},1\},\alpha)$-calibrated. Then $\tp$ is $(\{\ell_1\},\mC,6\alpha)$-loss OI.
\end{theorem}
\begin{proof}
The discrete derivative $\partial \ell_1$ (\Cref{def:discrete-derivative}) satisfies $\partial \ell_1(t) = 1 - 2t$ for every $t\in [0,1]$. Therefore, $\partial\ell_1\circ k_{\ell_1}\in \Lin(\{k_{\ell_1},1\},3)$ and $\partial\ell_1\circ c\in \Lin(\mC\cup\{1\},3)$ for every $c\in\mC$.

Our assumption that $\tp$ is $(\{k_{\ell_1},1\},\alpha)$-calibrated implies that $\tp$ is $(\{\partial \ell_1\circ k_{\ell_1}\},3\alpha)$-calibrated, and thus $\tp$ is $(\{\ell_1\},3\alpha)$-decision OI by \Cref{thm:dec+model}.

Our assumption that $\tp$ is $(\mC,\alpha)$-multiaccurate and $(\{k_{\ell_1},1\},\alpha)$-calibrated implies that $\tp$ is $\{\mC\cup\{1\},\alpha\}$-multiaccurate, which then implies that $\tp$ is $(\{\partial \ell_1\circ c\}_{c\in\mC},3\alpha)$-multiaccurate. By \Cref{thm:dec+model}, $\tp$ is $(\{\ell_1\},\mC,3\alpha)$-hypothesis OI.

The proof is completed by \Cref{lem:implies}.
\end{proof}
}

\end{document}